\newmdenv[
leftmargin = 0pt,
innerleftmargin = 1em,
innertopmargin = 0pt,
innerbottommargin = 0pt,
innerrightmargin = 0pt,
rightmargin = 0pt,
linewidth = 3pt,
topline = false,
rightline = false,
bottomline = false,
linecolor=red,
]{leftbar}
\definecolor{marine}{RGB}{85,100,255}
\definecolor{pomegrenate}{RGB}{255,18,94}
\DeclareMathOperator{\theargmin}{argmin}
\DeclareMathOperator{\BERN}{Bernoulli}
\newcommand{\Gk}{G^{(k)}}
\newcommand{\sGk}{\lbrace \Gk \rbrace}
\DeclareMathOperator{\M}{\mathbb{M}}
\DeclareMathOperator{\N}{\mathbb{N}}
\DeclareMathOperator{\R}{\mathbb{R}}
\DeclareMathOperator{\cM}{\mathcal{M}}
\DeclareMathOperator{\cO}{\mathcal{O}}
\newcommand{\bA}{\bm{A}}
\newcommand{\cG}{\mathcal{G}}
\newcommand{\bQ}{\bm{Q}}
\newcommand{\bp}{\bm{p}}
\newcommand{\bs}{\bm{s}}
\newcommand{\blamb}{\bm{\lambda}}
\newcommand{\Cov}{\text{Cov}}
\newcommand{\Var}{\text{Var}}
\DeclareRobustCommand{\argmin}[1]{\underset{#1}{\theargmin}\mspace{4mu}}
\DeclareRobustCommand{\bern}[1]{\BERN \left(#1\right)}
\NewDocumentCommand \E { m o}
{
  \IfNoValueTF {#2} {\mathbb{E}\left[#1\right]}{\mathbb{E}_{#2}\mspace{-4mu}\left[#1\right] }
}
\NewDocumentCommand \prob { m o}
{
  \IfNoValueTF {#2} {\mathbb{P}\left(#1\right)}{\mathbb{P}_{#2} \left(#1\right)}
}
\newcommand{\ER}{Erd\H{o}s-R\'enyi}
\def\@opargbegintheorem#1#2#3{\trivlist
   \item[]{\bfseries #1\ #2\ (#3)} \itshape}
\newtheorem{corollary}{Corollary}
\newtheorem{Definition}{Definition}
\newtheorem{cproof}{Proof of Corollary}
\newtheorem{Lemma}{Lemma}
\newtheorem{lproof}{Proof of Lemma}
\newtheorem{proof}{Proof of Theorem}
\newtheorem{Remark}{Remark}
\newtheorem{theorem}{Theorem}
\newtheorem{Proposition}{Proposition}
\newtheorem{propproof}{Proof of Proposition}
\begin{document}
\begin{frontmatter}

  \title{Probability density estimation for sets of large graphs with respect to spectral information using stochastic block models}
  \author{Daniel Ferguson}
  \author{Fran\c{c}ois G. Meyer\fnref{fnt2}}
  \address{Applied Mathematics, University of Colorado at Boulder, Boulder CO 80305}
  \fntext[fnt2]{Corresponding author: fmeyer@colorado.edu\\ This work was supported by the National Science Foundation, CCF/CIF 1815971.}

\begin{abstract}
\small\baselineskip=9pt For graph-valued data sampled iid from a distribution $\mu$, the sample moments are computed with respect to a choice of metric. In this work, we equip the set of graphs with the pseudo-metric defined by the $\ell_2$ norm between the eigenvalues of the respective adjacency matrices. We use this pseudo metric and the respective sample moments of a graph valued data set to infer the parameters of a distribution $\hat{\mu}$ and interpret this distribution as an approximation of $\mu$. We verify experimentally that complex distributions $\mu$ can be approximated well taking this approach.
\end{abstract}

  \begin{keyword}
	Generative models; Statistical network analysis; Mixture models.
  \end{keyword}
\end{frontmatter}

\section{Introduction.}
\label{sec:Intro}

The ubiquity of graphs to represent relationships between objects, such as those in social networks, biology networks, traffic networks, molecular data, etc. has showcased a need for advancement in algorithms that analyze such objects. Analysis of graphs can be categorized, albeit broadly, into two types: the analysis of graph patterns and the analysis of patterns of graphs. The former covers topics such as node classification, degree distribution, link prediction, graph embedding, and subgraph presence, among countless others. The latter involves patterns among sets of graphs. Examples of such patterns include the presence of communities, hubs, or the small-world phenomenon; each of which has been observed in various real-world networks. The problem of graph generation falls into the latter category of graph analysis. The process of generating graphs with pre-specified structures observed from real-world networks has led to the advent of various graph ensembles, such as the Barabasi-Albert model, the Watts-Strogatz model, and the stochastic block model. Each ensemble of graphs guarantees that a pre-specified structure exists in the graph with high probability; however, the specifics of each pre-specified structure may vary greatly within the ensemble depending on the initial parameters. 

In this work, we consider the problem in graph generation, called density estimation. Given a set of graphs with a distribution of structures, we seek to generate new graphs according to this distribution. We utilize the sample Fr\'echet mean graph and the sample total Fr\'echet variance of the data to inform the parameters of our generative model. Notably, because our work is always performed with respect to a distance (a requirement to determine the Fr\'echet mean and variance), we consider the spectral information captured by the adjacency matrix of two graphs to determine their similarity, specifically the $\ell_2$ norm between the largest eigenvalues of the adjacency matrices of the observed graphs.

We consider sets of simple graphs with $n$ vertices that have an edge density, $\rho_n$, that satisfies
\begin{equation}
	n^{-2/3} \ll \rho_n \ll 1.
\end{equation}
We also note that the vertex set must be sufficiently large, and the method used in this work will perform poorly for sets of small graphs.

We approach the problem of density estimation by considering the following two ideas: (1) there exists a stochastic block model whose Fr\'echet mean graph has an adjacency matrix where the largest eigenvalues are arbitrarily close to that of the sample Fr\'echet mean graph; (2) we may adjust the variance of the recovered stochastic block model by defining a distribution on the parameters. From this perspective, we may align both the mean and variance of a distribution with the sample mean and sample variance from the sample set of graphs.

\section{State of the Art.}
\label{sec:StateOfArt}
Generative models for graphs  have a long history. Popular ensembles of graphs aim to capture various observed real-world phenomenon such as the presence of ``hubs'' \cite{BA02}, the small-world phenomenon \cite{WS98}, community structure \cite{HLL83}, or specific subgraphs \cite{KS80}. Popular variations of such ensembles further generalize the structures captured by the ensemble (see for instance the degree corrected stochastic block model \cite{LS19}, in-homogeneous $\text{\ER}$ models \cite{BBK19, CCH20}, or exponential random graph models \cite{RSWHP07}). A new method using neural networks, called deep generative models, seeks to model the structures of observed graphs without pre-specifying the structures and instead learns relevant structures in the observed graphs (see \cite{BHPS20, GZ20} for reviews on these models).

While each ensemble captures various structural phenomena, the graph generation process is dependent on a set of initial parameters. A data-driven generative model will seek to infer the correct value of these parameters conditioned on some set of observations. 

Current work in this field is offered by Lunagomez et. al. \cite{LOW20}, wherein the authors specify a generative process that distributes graphs about the Fr\'echet mean graph of an observed set. In this work, the Fr\'echet mean graph is determined with respect to the Hamming distance, but the ideas generalize to any Fr\'echet mean graph (i.e., for any choice of distance assuming one can compute it). A mixture model with respect to the Hamming distance is proposed in \cite{YKN22}, where multiple mean graphs are considered as defining the centers of each mixture component and graphs are sampled within each mixture according to the observed deviations from the mean graph. When considering a Markov random graph model (or its generalization, the exponential random graph model), various procedures have been proposed to determine the parameters, e.g., maximum pseudo-likelihood estimation \cite{SI90} and Monte Carlo Markov chain maximum \cite{HH06, S02}.

All parameter estimations in the above models compare graphs using a Hamming distance. The Hamming distance identifies local changes in the connectivity structure between nodes. At times, the Hamming distance can be used to detect global phenomena in the graphs, i.e., relating the presence of triangles to the density of $\text{\ER}$ random graph models; however, most commonly, the Hamming distance measures only the local connectivity.

Taking a similar perspective as Lunagomez et. al. \cite{LOW20}, we determine the parameters of our model using the sample moments of the observed data. However, as an extension to the work in \cite{LOW20}, here we consider both the first and second moments of the data. The choice of metric is crucial to the location and spread of graphs as each metric induces a different mean graph and different spread about the mean graph. The Fr\'echet mean and Fr\'echet variance have been analyzed with respect to the edit distance (see \cite{BNY20,G12,J16,JO09, JMB01, LOW20} and references therein); our aim is to capture the mean and variability of global structures within the data set of graphs.

To this end, we consider a distance that can detect such structural changes  (e.g., community structure \cite{A17, LGT14}, modularity \cite{GN02}). The adjacency spectral distance, which we define as the $\ell_2$ norm of the difference between the spectra of the adjacency matrices of the two graphs of interest \cite{WZ08} exhibits good performance when comparing various types of graphs  \cite{WM20}, making it a reliable choice for a wide range of problems. Spectral distances also offer practical advantages as they can inherently compare graphs of different sizes and graphs without known vertex correspondence (see e.g., \cite{FSS05, FVSRB10} and references therein).

In practice, it is often the case that only the $c$ largest eigenvalues are considered, with $c \ll n$. We still refer to such distances as spectral distances but comparison using the $c$ largest eigenvalues for small values of $c$ allows one to focus on the global structures of the graphs while omitting the local structures \cite{LGT14}. Of notable importance when considering the distance between the $c$ largest eigenvalues is recent work showcasing how to approximately compute the sample Fr\'echet mean graph (see prior work in \cite{FM22}). With access to the sample Fr\'echet mean graph, a generative model may be centered about this mean graph.

Further work in the realm of generative modeling when considering the sample moments of the data comes in the form of regression \cite{PM19} wherein the authors construct a parameterization of a weighted sample Fr\'echet mean as a generalization of linear regression for metric objects. When considering the second moment of graph-valued data, a theoretical analysis of the Fr\'echet variance allows the user to construct a test to compare samples of metric-valued objects such as graphs \cite{DM17}.
\section{Main Contributions.}
\label{sec:Contri}
In this paper, we introduce a variation on the stochastic block model for graph generation, namely the random-parameter stochastic block model, by allowing the parameters of a stochastic block model graph to vary according to some distribution $J$ where the choice of $J$ is at the discretion of the researcher. The need for such a generalization exists since stochastic block models generate graphs in a homogeneous way, meaning there is little variance between graphs sampled according to this model. We showcase that methods which estimate the parameters of a stochastic block model can be used to estimate the moments of the distribution $J$, which, under certain parametric assumptions of $J$, uniquely characterizes the distribution.  When $J$ is assumed to be non-parametric, a generalization of kernel density estimation is suggested as a method to better infer the distribution $J$.

When considering sample sets of graphs, the sample arithmetic mean of the largest eigenvalues of the adjacency matrices of the graphs observed and the corresponding sample covariance is shown to be connected to the sample Fr\'echet mean graph and total sample Fr\'echet variance. This indicates that inferring a generative model using the sample mean eigenvalues and sample covariance matrix is equivalent to inferring a model using the sample Fr\'echet mean and sample total Fr\'echet variance.

We experimentally verify our results on four different data sets to explore the limitations of the proposed model. We verify that we can recover the parameters of a random-parameter stochastic block model and showcase the impact of the distribution $J$ when considering data not generated from a random-parameter stochastic block model. We also construct a generative model for real-world data where the graphs in the sample are collected according to face-to-face connections formed at a primary school.

\section{Notations.}
\label{sec:Not}
$G = (V,E)$ denotes a graph with vertex set $V = \lbrace 1,2,...,n \rbrace$ and edge set $E \subset V \times V$. For
vertices $i,j \in V$ an edge exists between them if the pair $(i,j) \in E$. The size of a graph is
$n = \vert V \vert$ and the number of edges is $m = \vert E \vert$. The density of a graph is
$\rho_n = \frac{m}{n(n-1)/2}$. 

The matrix $\bA$ is the adjacency matrix of the graph and is defined as
\begin{align}
  \bA_{ij} = 
  \begin{cases}
    1 \quad \text{if }(i,j) \in E,\\
    0 \quad \text{else.}
  \end{cases}
\end{align}
  We define the function $\sigma$ to be the mapping from the set of $n \times n$ adjacency matrices (square, symmetric matrices with zero entries
  on the diagonal), $\M_{n \times n}$ to $\R^n$ that assigns to an adjacency matrix the vector of its $n$ sorted eigenvalues,
  \begin{align}
    \sigma:        \M_{n\times n} & \longrightarrow \R^n,\\
    \bA & \longmapsto \blamb = [\lambda_1,\ldots,\lambda_n],
  \end{align}
  where $\lambda_1 \geq \ldots \geq \lambda_n$. Because we often consider the $c$ largest eigenvalue of the
  adjacency matrix $\bA$, we define the mapping to the truncated spectrum as $\sigma_c$ ,
  \begin{align}
    \sigma_c:        \M_{n\times n} & \longrightarrow \R^c,\\
    \bA & \longmapsto \blamb_c = [\lambda_1,\ldots,\lambda_c].
  \end{align}
\begin{Definition}
We define the adjacency spectral pseudometric as the $\ell_2$ norm between
  the spectra of the respective adjacency matrices,
  \begin{align} 
    d_A(G,G') = ||\sigma(\bA)-\sigma(\bA')||_2. \label{distance-adj}
  \end{align}
 
\end{Definition}
The pseudometric $d_A$ satisfies the symmetry and triangle inequality axioms, but not the identity axiom. Instead, $d_A$
      satisfies the reflexivity axiom
      \begin{equation*}
        d_A(G,G) = 0, \quad \forall G \in \cG.
      \end{equation*}

When the adjacency (or Laplacian) matrices of graphs have a similar spectra, it can be shown that the graphs have similar global and structural properties \cite{WM20}. As a natural extension of this spectral metric, sometimes only the $c$ largest eigenvalues are measured where $c \ll n$. We refer to this next metric as a truncation of the adjacency spectral pseudometric.

\begin{Definition}
\label{def:TruncMet}
We define the truncated adjacency spectral pseudometric as the $\ell_2$ norm between
  the largest $c$ spectra of the respective adjacency matrices, 
    \begin{align}
      d_{A_c}(G,G') = ||\sigma_c(\bA)-\sigma_c(\bA')||_2. \label{distance-trunc}
    \end{align}
 \end{Definition}
  \begin{Definition}
  \label{def:SetOfGraphs}
    $\cG$ denotes the set of all simple unweighted graphs on $n$ nodes.
  \end{Definition}

\subsection{Random Graphs} 
\label{subsec:RandGraphs}
$\cM (\cG)$ denotes the space  of probability measures on $\cG$. In this work, when we refer to a measure we always
mean a probability measure.
  \begin{Definition}
    We define the set of random graphs distributed according  to $\mu$ to be the probability space $\left(\cG, \mu\right)$.
  \end{Definition}
  \begin{Remark}
    In this paper, the $\sigma$-field associated with $\left(\cG, \mu\right)$ will always be the power set of $\cG$.
  \end{Remark}
  This definition allows us to unify various ensembles of random graphs (e.g., \ER, inhomogeneous \ER, small-world, Barabasi-Albert, etc.) via the unique concept of a probability space.
\subsubsection{Kernel Probability Measures}
  \label{subsubsec:kernProbMeas}
  \noindent Here we define an important class of probability measures for our study.
  \begin{Definition}
  \label{def:f}
    A probability measure $\mu \in \cM (\cG)$ is called a kernel probability measure if there exist a positive constant $\omega_n \leq 1$ and a function $f$,
    \begin{equation}
      f: [0,1]\times[0,1] \mapsto [0,1],
    \end{equation}
    such that $f(x,y) = f(y,x)$, and 
    \begin{align*}
      \forall G \in \cG, \text{with adjacency matrix}\; \bA=\left(a_{ij}\right), \\
      \mu\left( \left\{ \bA \right \}\right) = \mspace{-12mu} \prod_{1 \leq i < j \leq n}\mspace{-12mu}\prob{a_{ij}} = \mspace{-12mu} \prod_{1 \leq i < j \leq
        n}\mspace{-12mu} \bern{\omega_n f(\frac i n,\frac j n)}.
    \end{align*}
    The function $f$ is called a kernel of $\mu$.
\end{Definition}
\begin{Remark}
We refer to these measures as kernel probability measures because the kernels naturally give rise to linear integral operators with kernels $f$. Observe that when $||f||_1 = 1$, $\omega_n$ defines the expected density of the graphs sampled from $\mu$.
\end{Remark}

\noindent We note that given the sequence $\left \{\frac i n \right \}_{i=1}^n$ and the measure $\mu$, the kernel $f$ forms an
  equivalence class of functions, characterized by their values on the grid $\left \{\frac i n \right \}_{i=1}^n \times \left \{\frac j n \right \}_{j=1}^n$.

\begin{Definition}
$G_{\mu}$ denotes a random realization of a graph $G \in \left(\cG,\mu\right)$.
\end{Definition}
We use the following notation to denote random vectors.
\begin{Definition}
Let $\cM(\R^m)$ denote the set of probability measures on $\R^m$. For $\nu \in \cM(\R^m)$, $\bm{X}_{\nu}$ denotes a random realization of a vector $\bm X \in \left(\R^m, \nu \right)$.
\end{Definition}

\subsubsection{Stochastic Block Models
  \label{subsec:sbm}}
The stochastic block model (see \cite{A17}) plays an important role in this work. We review the specific features of this model using the notations that were defined in the previous paragraphs. The key aspects of the model are: the geometry of the blocks, the within-community edges densities, and the across-community edge densities. An example of the kernel function and associated adjacency matrix from a stochastic block model is given in Fig.~\ref{fig:Ex-f}.

We denote by $c$ the number of communities in the stochastic block model. The {\noindent \bf geometry} of the stochastic block model is encoded using the relative sizes of the communities. We denote by $\bs \in \ell_1$ a non-increasing non-negative sequence of relative community sizes with $c$ non-zero entries and $||\bs||_1 = 1$.

For the geometry specified by $\bm s$ we define an associated {\bf edge density} vector $\bp$ such that $0<p_i$ for $i = 1,...,c$ and $p_i = 0$ for $i > c$ which describes the within-community edge densities. 

Finally, $\bQ = (q_{ij})$ denotes an infinite matrix of cross-community edge densities where $q_{i,i} = 0$, $q_{i,j}  = q_{j,i}$, and $q_{i,j} = 0$ if $i > c$ or $j > c$.

\begin{Remark}
We allow for infinite vectors with a finite number of non-zero entries so that we may allow for the smooth introduction of new communities within the stochastic block model. For example, let $t \in [0,1]$ and parametrize $\bs$ and $\bp$ by $t$ as
\begin{equation}
\bs(t) = \begin{bmatrix}1 - t/2 \\ t/2 \\ 0 \\ \vdots\end{bmatrix} \quad \omega_n \bp(t) = \begin{bmatrix} 0.2 + t/2 \\ 0.1+t/2 \\ 0 \\ \vdots \end{bmatrix}.
\end{equation}
\end{Remark}

We can parameterize a stochastic block model using one representative of the equivalence class of kernels, $f$. We simply
consider the function $f$, which is piecewise constant over the blocks, and is defined by\\ $f: [0,1] \times [0,1]  \longrightarrow [0,1]$
\begin{align}
  (x,y) \longmapsto
         \begin{cases}
         p_i & \text{if} \quad \sum_{k=1}^{i-1}s_k  \leq x <  \sum_{k=1}^{i} s_k, \; \\ & \text{and} \quad 
         \sum_{k=1}^{i-1} s_k  \leq y <  \sum_{k=1}^{i} s_k,\\
          q_{ij} & \text{if} \quad \sum_{k=1}^{i-1}s_k  \leq x <  \sum_{k=1}^{i} s_k, \; \\ & \text{and} \quad 
         \sum_{k=1}^{j-1} s_k \leq y <  \sum_{k=1}^{j} s_k.
       \end{cases}
\end{align}
This piecewise constant function is called the canonical kernel of the block model with measure
$\mu$ (see e.g., Fig.~\ref{fig:Ex-f}), and we denote it by $f(x,y,\bp,\bQ,\bs)$. The scaling constant $\omega_n$ controls the rate at which the density of the stochastic block model goes to $0$.
\begin{Definition}
	The probability measure for a stochastic block model kernel with scaling $\omega_n$ and parameters $\bp, \bQ, $ and $\bs$ is denoted by
	\begin{align}
		\mu_{\omega_n,\bp, \bQ, \bs}.
	\end{align}
\end{Definition}

\noindent \textit{Example.} Given $\bm s = \begin{bmatrix} 1/2 & 1/4 & 1/4 & 0  \cdots \end{bmatrix}^T$, the values of $f(x,y; \bm p, \bQ, \bm s)$ in the unit square are shown in Fig. \ref{fig:Ex-f}.
\begin{figure}[H]
\begin{minipage}{.5\textwidth}
	  \centering
	  \includegraphics[clip, trim = 3.5cm 8cm 3.5cm 8.5cm, width= \textwidth]{./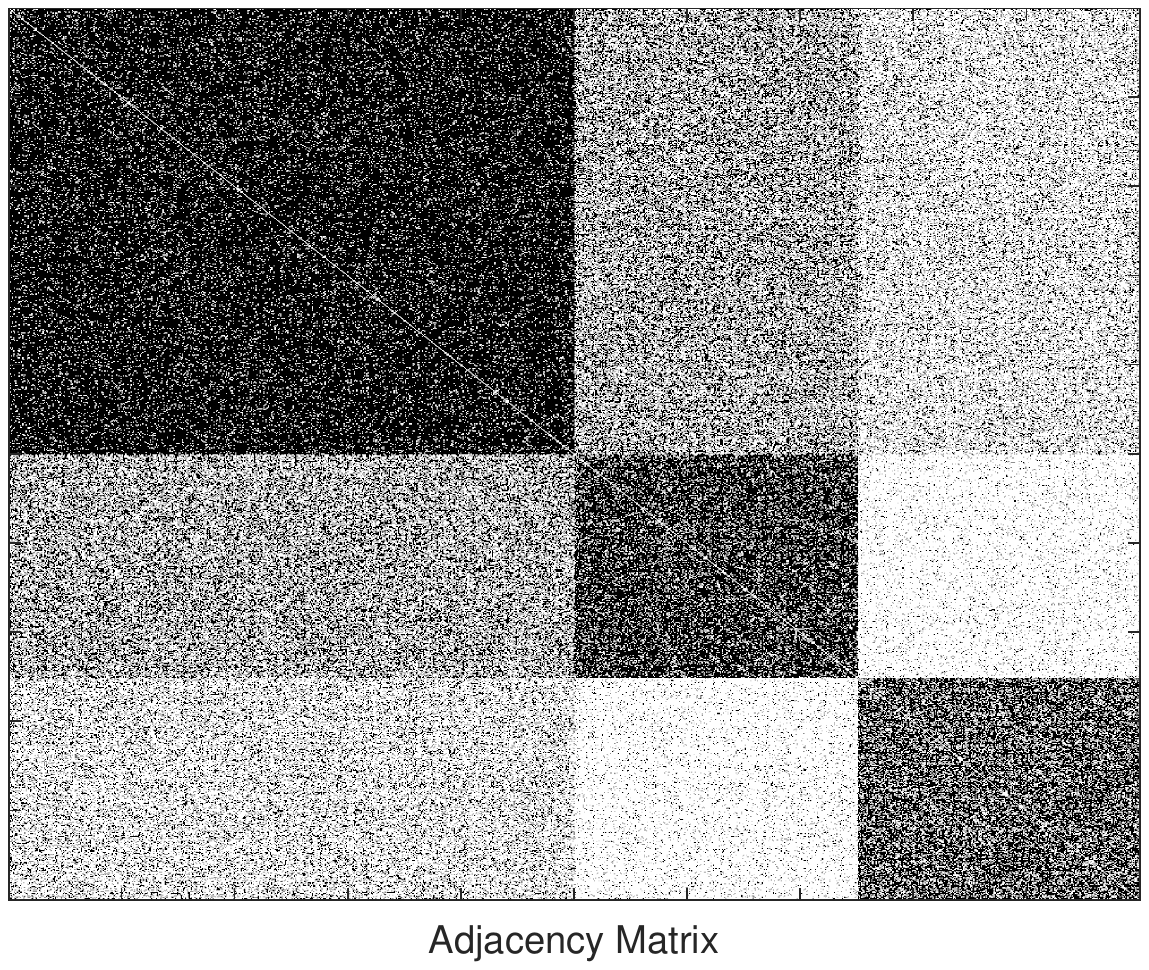}
	  \caption{An example adjacency matrix: $\bA$}
	\end{minipage}%
	\begin{minipage}{.5\textwidth}
	\centerline{
	  \includegraphics[clip, trim = 7cm 3cm 9cm 2cm, width = \textwidth]{./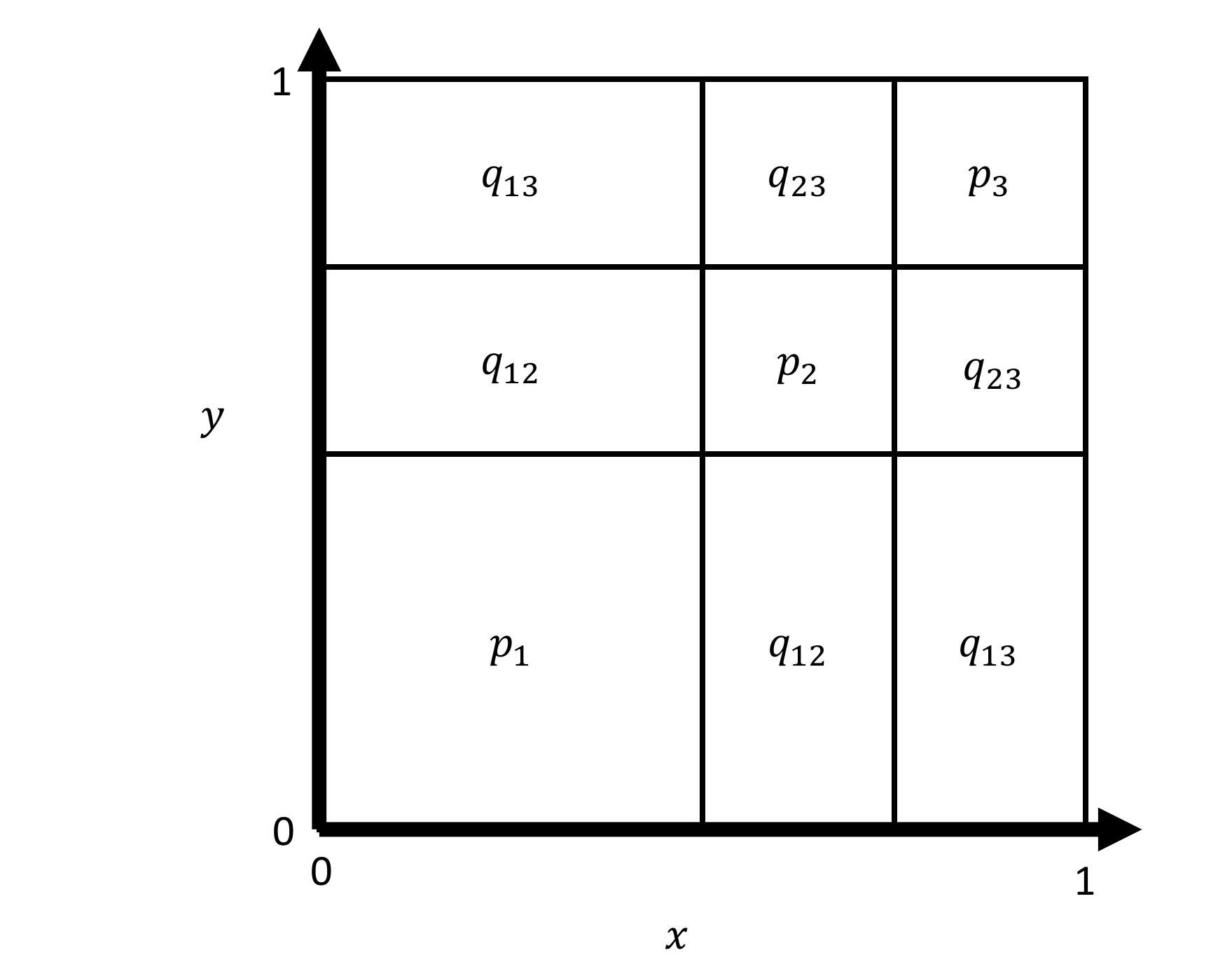}
	  }
	  \caption{An example $f(x,y; \bp, \bm Q, \bs)$}
	  \label{fig:Ex-f}
	\end{minipage}
\end{figure}
\begin{Remark}
	Stochastic block models are presented here in generality for reference. Throughout this work, we always take the non-zero entries of the matrix $\bQ$ to be a constant given by $q$. In this case, the probability measure is denoted by $\mu_{\omega_n, \bp, q, \bs}$.
\end{Remark}
\section{Random parameter stochastic block models}
\label{sec:RandParam}
In this section we introduce the random parameter stochastic block model as a generalization of the classic stochastic block model. The need for this generalization when considering density estimation results from the following observation. For a fixed geometry vector $\bs$, the limiting distribution of the largest $c$ eigenvalues, $\lambda_i(\bA_{\mu_{\omega_n, \bp, q, \bs}})$ for $i = 1,...,c$, have a dependency between the location and its scale. This fact can be seen clearly in \cite{FK81} where it is shown for an $\text{\ER}$ random graph with parameters $n$ and $p$, that
\begin{align}
	\lambda_1 \overset{d}{\to} N\left( (n-2)p + 1, 2p(1-p) + \cO(\frac{1}{\sqrt{n}})\right). \label{eqn:ExEqn}
\end{align}
Equation (\ref{eqn:ExEqn}) shows that a different choice of $p$ affects both the location and scale of the limiting distribution. The same phenomenon can be seen in previous studies \cite{CCH20, FFHL19, T18}, where the limiting distribution is derived for inhomogeneous $\text{\ER}$ random graphs. The dependency between the location and scale parameters for the largest eigenvalues of stochastic block model graphs is fully expected as a consequence of the Bernoulli process that defines the probability of an edge existing in a graph. The implications of these observations suggest that when attempting to model a graph valued data set, the stochastic block model cannot capture simultaneously notions of location and scale.

By allowing the parameters of a stochastic block model to be random, several degrees of freedom are introduced when considering the distribution of the eigenvalues. Most notably, allowing randomness in the parameter space allows the user to increase the variance of the eigenvalues of the adjacency matrices from a classic stochastic block model while preserving their expected value. We first introduce the random parameter stochastic block model and the distribution of the eigenvalues of graphs sampled in this manner. We then discuss methods by which we estimate the parameters of this model given a sample set of data (see Alg. \ref{alg:DetJ}).

We take a random graph $G_{\mu}$ to be distributed according to a stochastic block model, $\mu_{\omega_n, \bp ,q, \bs}$, with unknown parameter $\bp$ that is distributed according to some distribution $J$. Note that by the definition of $f(x,y)$, see definition \ref{def:f}, the support of $J$ is $[0,1]^c$.

By allowing $\bp$ to vary according to $J$, the result is a distribution over $\cG$ where the intra-community strengths of the graphs sampled follow a multivariate distribution $J$. The associated probability measure given some $J$ is denoted by
\begin{align}
	\mu_{\omega_n, J, q, \bs} \in \cM(\cG),
\end{align}
which is similar to the probability measure for a general stochastic block model except the parameter of intra-community strengths $\bp$ is replaced by $J$. 
\begin{Remark}
\label{rem:dirac}
	Observe that when $J = \delta(\bp - \bp^*)$ for some fixed $\bp^*$, then 
	\begin{align}
		\mu_{\omega_n, J, q, \bs} = \mu_{\omega_n, \bp^*, q, \bs}
	\end{align}
	which is the typical stochastic block model probability measure with intra-community probability of connection given by $\bp^*$.
\end{Remark}

Because our aim is to perform density estimation in $\cG$ with respect to $d_{A_c}$ it is necessary to understand the behavior of the eigenvalues of graphs distributed according to $\mu_{\omega_n, J, q, \bs}$. By mapping $G_{\mu_{\omega_n, J, q, \bs}} \mapsto \sigma_c(\bA_{\mu_{\omega_n, J, q, \bs}})$, the resulting distribution of the largest $c$ eigenvalues is denoted by $H_n$ and has a probability distribution given by
\begin{align}
	p_{H_n}(\blamb) = \int p_{\mu_{\omega_n, J, q, \bs}}(\sigma_c(\bA) \vert \bm P_J = \bp) p_J(\bp) d\bp \label{eqn:EigDist}
\end{align}
where the first moment and normalized second moments of $H_n$ are
\begin{align}
	\E{\blamb}[H_{n}] &= \E{\E{\sigma_c(\bA)\vert \bm P_J = \bp}[\mu]}[J] \label{eqn:TotMean},\\
	\Cov_{H_{n}}\left( \frac{\lambda_i}{\sqrt{\omega_n}}, \frac{\lambda_j}{\sqrt{\omega_n}} \right) &= \Cov_{J}(\E{\frac{1}{\sqrt{\omega_n}}\lambda_i(\bA)  \vert \bm P_J = \bp}[\mu],\E{\frac{1}{\sqrt{\omega_n}} \lambda_j(\bA) \vert \bm P_J = \bp}[\mu])\\
	 & + \E{\Cov_{\mu}\left(\frac{1}{\sqrt{\omega_n}}\lambda_i(\bA) ,\frac{1}{\sqrt{\omega_n}}\lambda_j(\bA)  \vert \bm P_J = \bp \right)}[J]. \label{eqn:TotVar}
\end{align}

Theorem \ref{thm:CCH} and Corollary \ref{cor:FiniteMatrixRep} show two methods to determine the quantities 
\begin{align}
	&\E{\sigma_c(\bA)\vert \bm P_J = \bp}[\mu],\\
	&\Cov_{\mu}\left(\frac{1}{\sqrt{\omega_n}}\lambda_i(\bA) ,\frac{1}{\sqrt{\omega_n}}\lambda_j(\bA)  \vert \bm P_J = \bp \right),
\end{align}
from equations (\ref{eqn:TotMean}) and (\ref{eqn:TotVar}) in terms of the parameters of a stochastic block model.

Let $f$ be a canonical stochastic block model kernel function and let $L_f$ be the associated linear integral operator with eigenfunctions $r_i(x)$ and eigenvalues denoted by $\theta_i = \lambda_i(L_f)$. Assume that $n^{-2/3} \ll \omega_n \ll 1$ and that $\lim_{n \to \infty} \omega_n = 0$. Because $\mu$ is taken to be a stochastic block model kernel probability measure with parameters $\omega_n, \bm p,$ $q,$ and $\bs$, we denote the adjacency matrix of a random graph as $\bA_{\mu} = \bA_{\mu_{\omega_n, \bp, q, \bs}}$ where we have suppressed all the subscripts. 

\begin{theorem}[Chakrabarty, Chakraborty, Hazra 2020]
\label{thm:CCH}
\begin{align}
	\left(\omega_n^{-1/2}(\lambda_i(\bA_{\mu}) - \E{\lambda_i(\bA_{\mu})}[\mu]) \right) \overset{d}{\longrightarrow} (Z_i: 1 \leq i \leq c), \label{eqn:LimVar}
\end{align}
where the right hand side is a multivariate normal random vector in $\R^c$ with zero mean and
\begin{align}
	\Cov(Z_i, Z_j)  = 2 \int_0^1 \int_0^1 r_i(x) r_i(y) r_j(x) r_j(y) f(x,y) dx dy, \label{eqn:CovCCH}
\end{align}
for all $1 \leq i,j \leq c$. The first order behavior of $\E{\lambda_i(\bA_{\mu})})$ is given by the following: For every $1 \leq i \leq c$,
 	\begin{equation}
		\E{\lambda_i(\bA_{\mu_{\rho_n f}})} = \lambda_i(\bm B) + \mathcal{O}(\sqrt \omega_n + \frac{1}{n \omega_n}), \label{eqn:EstLargeEigs-CCH}
	\end{equation}
	where $\bm B$ is a $c \times c$ symmetric deterministic matrix defined by
	\begin{align}
	b_{j,l} = \sqrt{\theta_j \theta_l} n \omega_n \bm e^T_j \bm e_l + \theta_i^{-2} \sqrt{\theta_j \theta_l} (n \omega_n)^{-1} \bm e^T_j \E{(\bA - \E{\bA})^2} \bm e_l + \mathcal{O}(\frac{1}{n \omega_n}),
	\end{align}
	and $\bm e_j$ is a vector with entries $\bm e_j(k) = \frac{1}{\sqrt n} r_j(\frac k n)$ for $1 \leq j \leq c$.
\end{theorem}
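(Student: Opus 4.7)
The plan is to decompose the random adjacency matrix as $\bA_{\mu} = \bar{\bA} + \bE$, where $\bar{\bA} = \E{\bA_{\mu}}$ has off-diagonal entries $\omega_n f(i/n, j/n)$ and $\bE = \bA_{\mu} - \bar{\bA}$ is a symmetric matrix of independent centered Bernoullis. Because $f$ is a canonical SBM kernel, $\bar{\bA}$ is effectively rank $c$: its top $c$ eigenpairs approximate those of $n\omega_n L_f$ via grid sampling, giving $\lambda_i(\bar{\bA}) \approx n\omega_n \theta_i$ with eigenvectors $\bm v_i \approx \bm e_i$. Standard operator-norm concentration for sparse inhomogeneous symmetric random matrices yields $\|\bE\| = \cO(\sqrt{n\omega_n})$ with high probability whenever $n\omega_n \to \infty$, which holds under the sparsity assumption $\omega_n \gg n^{-2/3}$. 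The remainder of the proof is a perturbative analysis of the top $c$ eigenpairs of $\bar{\bA}$ under the random perturbation $\bE$, whose operator norm is small compared to the spectral gap $\asymp n\omega_n$.

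For the first-order expansion (\ref{eqn:EstLargeEigs-CCH}) I would apply the analytic second-order eigenvalue perturbation formula
\begin{equation*}
\lambda_i(\bar{\bA} + \bE) = \lambda_i(\bar{\bA}) + \bm v_i^{\top} \bE \bm v_i + \bm v_i^{\top} \bE R_i \bE \bm v_i + \text{(higher order)},
\end{equation*}
with reduced resolvent $R_i = \sum_{k \neq i}(\lambda_i - \lambda_k)^{-1}\bm v_k \bm v_k^{\top}$. Taking expectations, the linear term vanishes and the quadratic term, after matching the spectral decomposition of $\bar{\bA}$ and substituting $\lambda_i \asymp n\omega_n \theta_i$, yields the correction $\theta_i^{-2}\sqrt{\theta_j\theta_l}(n\omega_n)^{-1}\bm e_j^{\top}\E{(\bA-\E{\bA})^2}\bm e_l$ appearing in the statement. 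For the joint CLT (\ref{eqn:LimVar}) and covariance (\ref{eqn:CovCCH}), observe that to leading order $\omega_n^{-1/2}(\lambda_i - \E{\lambda_i}) = \omega_n^{-1/2}\bm v_i^{\top}\bE\bm v_i + o_p(1)$. Using symmetry of $\bE$ and $\bE_{aa} = 0$, one has $\bm v_i^{\top}\bE\bm v_i = 2\sum_{a<b} v_i(a) v_i(b) \bE_{ab}$. With $\Var(\bE_{ab}) = \omega_n f(a/n,b/n)(1 + o(1))$ and $v_i(a) \approx n^{-1/2} r_i(a/n)$, the covariance becomes a Riemann sum that converges to
\begin{equation*}
\omega_n^{-1}\Cov(\bm v_i^{\top}\bE\bm v_i,\, \bm v_j^{\top}\bE\bm v_j) \;\longrightarrow\; 2\int_0^1\!\!\int_0^1 r_i(x) r_i(y) r_j(x) r_j(y) f(x,y)\,dx\,dy,
\end{equation*}
matching (\ref{eqn:CovCCH}). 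Joint Gaussianity then follows from the Cramér--Wold device together with a Lindeberg CLT for sums of bounded independent random variables, whose Lindeberg condition holds because each summand contributes $\cO(n^{-2})$ to a total variance of order $\omega_n$, and $n^{-2} = o(\omega_n)$ in the regime considered.

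The principal obstacle is showing that the higher-order perturbative remainders, together with the discretization error between the sampled matrix $\bar{\bA}/(n\omega_n)$ and the integral operator $L_f$, are uniformly $o_p(\sqrt{\omega_n})$ across the top $c$ eigenvalues. This requires both a sharp operator-norm concentration bound on $\bE$ and spectral gap estimates keeping the top $c$ eigenvalues of $\bar{\bA}$ well-separated from the bulk. The lower threshold $\omega_n \gg n^{-2/3}$ arises precisely from this control: below it the noise scale $\sqrt{n\omega_n}$ becomes comparable to the gap $n\omega_n \theta_c$, leading to eigenvector delocalization and breakdown of the linearization used for both the mean expansion and the CLT.
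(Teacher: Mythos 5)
The paper does not actually prove this theorem: the ``proof'' in the text is the single line ``This is a compilation of Theorems 2.3 and 2.4 from \cite{CCH20}.'' So you are not being compared against a written-out argument in this document but against a citation, and the relevant question is whether your sketch is a faithful reconstruction of the external result.

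Your outline is essentially the right heuristic skeleton and matches the strategy in Chakrabarty--Chakraborty--Hazra: split $\bA_\mu = \bar{\bA}+\bE$, use operator-norm concentration $\|\bE\|=\cO(\sqrt{n\omega_n})$, do a second-order perturbative expansion of the top $c$ eigenvalues of $\bar{\bA}$, identify the fluctuation term $\bm v_i^\top\bE\bm v_i$ as the source of the CLT, and pass to the limiting covariance (\ref{eqn:CovCCH}) via a Riemann sum argument with $v_i(a)\approx n^{-1/2}r_i(a/n)$. You also correctly observe that the second-order term $\bm v_i^\top\bE R_i \bE\bm v_i$ contributes an $\cO(1)$ deterministic shift that must enter the centering $\E{\lambda_i(\bA_\mu)}$, which is precisely what the matrix $\bm B$ in (\ref{eqn:EstLargeEigs-CCH}) encodes. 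The reference carries this out with substantially more care (truncation of the infinite system of eigenpairs, a martingale-type CLT rather than a naive Lindeberg sum, and explicit control of the discretization error between $\bar{\bA}/(n\omega_n)$ and $L_f$), but those are technical refinements of the same plan.

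One concrete error: your explanation of the threshold $\omega_n\gg n^{-2/3}$ is wrong. You attribute it to the noise $\sqrt{n\omega_n}$ becoming comparable to the gap $n\omega_n\theta_c$, but that comparison degenerates only at $\omega_n\sim n^{-1}$, not $n^{-2/3}$. The actual origin is the error term in (\ref{eqn:EstLargeEigs-CCH}): after recentering by $\E{\lambda_i(\bA_\mu)}$ and rescaling by $\omega_n^{-1/2}$, the residual $\cO(1/(n\omega_n))$ must be $o(\sqrt{\omega_n})$, i.e.\ $n^{-1}\omega_n^{-3/2}\to 0$, which is exactly $\omega_n\gg n^{-2/3}$. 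This is a condition on when the deterministic second-order correction is computed to sufficient accuracy to be visible at the fluctuation scale, not a gap-to-noise condition. You would want to fix that paragraph before relying on it.
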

\begin{proof}
	This is a compilation of Theorems 2.3 and 2.4 from \cite{CCH20}.
\end{proof}

We offer the following corollary as a minor simplification to the results of \cite{CCH20} which allows for the estimation of $\E{\lambda_i(\bA_{\mu})}$ and the computation of $\Cov(Z_i, Z_j)$ in a different manner when considering stochastic block models. First define the matrices
\begin{align}
		\bm M = 
	\begin{bmatrix} 
		s_1 p_1 & \sqrt{s_1 s_2} q & \dots & \sqrt{s_1 s_c} q\\
		\sqrt{s_2 s_1} q & s_2 p_2 & \dots & \sqrt{s_2 s_c} q\\
		\vdots &  \vdots & \ddots & \vdots \\
		\sqrt{s_c s_1} q & \sqrt{s_c s_2} q & \dots & s_c p_c
	 \end{bmatrix} \quad 
	 	\bm M_{f} = 
	\begin{bmatrix} 
		 p_1 & q & \dots &  q\\
		 q & p_2 & \dots & q\\
		\vdots &  \vdots & \ddots & \vdots \\
		 q &  q & \dots &  p_c
	 \end{bmatrix} \label{eqn:FiniteMatrices}
\end{align}
where $\nu_k$ and $\bm v_k$ are the eigenvalues and eigenvectors of $\bm M$ respectively. 
\begin{corollary}
\label{cor:FiniteMatrixRep}
\begin{align}
	\left(\omega_n^{-1/2}(\lambda_i(\bA_{\mu}) - \E{\lambda_i(\bA_{\mu})}[\mu]) \right) \overset{d}{\longrightarrow} (Z_i: 1 \leq i \leq c), \label{eqn:LimVar}
\end{align}
where the right hand side is a multivariate normal random vector in $\R^c$ with zero mean and
\begin{align}
	\Cov(Z_i, Z_j)  = 2 \left( \bm v_k .* \bm v_j \right)^T \bm M_f \left( \bm v_k .* \bm v_j \right) \label{eqn:FirstOrderVar},
\end{align}
for all $1 \leq i,j \leq c$ and $.*$ denotes the component-wise product of the vectors.

The first order behavior of $\E{\lambda_i(\bA_{\mu})})$ is given by the following: For every $1 \leq i \leq c$,
 	\begin{equation}
		\E{\lambda_i(\bA_{\mu})} = \lambda_i(\bm B^*) + \mathcal{O}(\sqrt{\omega_n}) \label{eqn:FirstOrderMean}
	\end{equation}
	where $\bm B^* =  \bm B^{*,(1)} + \bm B^{*,(2)}$, the components of which are given as
\begin{align}
	\left( \bm B^{*,(1)} \right)_{j,l} &= b_{j,l}^{*,(1)} = \begin{cases}  \nu_j n \omega_n \quad \text{if } j = l \\ 0 \quad \text{if } j \neq l \end{cases} \label{eqn:defB1}\\
	\left( \bm B^{*,(2)} \right)_{j,l} &= b_{j,l}^{*,(2)} = \nu_i^{-2} \sqrt{\nu_j \nu_l}  \sum_{k=1}^c \nu_k  \sum_{m=1}^c \frac{1}{\sqrt{s_m}} \bm v_j(m) \bm v_l(m) \bm v_k(m)\sum_{w=1}^c \sqrt{s_w} \bm v_k(w). \label{eqn:defB2}
\end{align}
\end{corollary}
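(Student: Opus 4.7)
The plan is to reduce the infinite-dimensional spectral data of $L_f$ appearing in Theorem~\ref{thm:CCH} to finite-dimensional data attached to the $c\times c$ matrix $\bm M$, using the block-constant structure of the canonical SBM kernel. Partition $[0,1]$ into intervals $I_m=[\sum_{w<m}s_w,\sum_{w\le m}s_w)$ and let $D=\mathrm{diag}(s_1,\dots,s_c)$. The canonical kernel equals $(\bm M_f)_{mn}$ on $I_m\times I_n$, so every eigenfunction of $L_f$ with nonzero eigenvalue is piecewise constant on the $I_m$: writing $r(x)=\sum_m c_m \mathbf{1}_{I_m}(x)$, the equation $L_f r=\theta r$ collapses to the finite equation $\bm M_f D\bm c=\theta\bm c$. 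Since $\bm M=D^{1/2}\bm M_f D^{1/2}$ is similar to $\bm M_f D$, one obtains $\theta_k=\nu_k$, and the $L^2$-normalized eigenfunctions are $r_k(x)=\sum_m (\bm v_k(m)/\sqrt{s_m})\mathbf{1}_{I_m}(x)$.

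With the $r_k$ in hand I would substitute into (\ref{eqn:CovCCH}): because products of indicators on distinct $I_m$ vanish, $r_i(x)r_j(x)=\sum_m (\bm v_i(m)\bm v_j(m)/s_m)\mathbf{1}_{I_m}(x)$, and the double integral separates into the finite quadratic form $\sum_{m,n}\bm v_i(m)\bm v_j(m)\bm v_i(n)\bm v_j(n)(\bm M_f)_{mn}=(\bm v_i .* \bm v_j)^T\bm M_f(\bm v_i .* \bm v_j)$, establishing (\ref{eqn:FirstOrderVar}). For the mean, the Gram term $\sqrt{\theta_j\theta_l}\,n\omega_n\,\bm e_j^T\bm e_l$ equals $\sqrt{\nu_j\nu_l}\,n\omega_n(\delta_{jl}+O(1/n))$ by the Riemann-sum approximation to the orthonormality relation $\langle r_j,r_l\rangle=\delta_{jl}$, giving $\bm B^{*,(1)}$ up to $O(\omega_n)$. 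For the second-order term, Bernoulli independence yields $\E{(\bm A-\E{\bm A})^2}_{aa}=\omega_n\sum_k f(a/n,k/n)+O(n\omega_n^2)$, so $(n\omega_n)^{-1}\bm e_j^T\E{(\bm A-\E{\bm A})^2}\bm e_l$ converges to $\int r_j(x)r_l(x)\bigl(\int f(x,y)\,dy\bigr)dx$. Expanding $1=\sum_k \beta_k r_k$ with $\beta_k=\sum_w\sqrt{s_w}\bm v_k(w)$ and applying $L_f$ gives $\int f(x,y)\,dy=\sum_k\nu_k\beta_k r_k(x)$; combined with the triple-product identity $\int r_j r_l r_k\,dx=\sum_m \bm v_j(m)\bm v_l(m)\bm v_k(m)/\sqrt{s_m}$ and the prefactor $\nu_i^{-2}\sqrt{\nu_j\nu_l}$, this reproduces (\ref{eqn:defB2}) exactly.

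All discretization errors are $O(1/n)$, which after multiplication by $n\omega_n$ remain $O(\omega_n)\ll\sqrt{\omega_n}$ under $n^{-2/3}\ll\omega_n$; the residual error $1/(n\omega_n)$ inherited from Theorem~\ref{thm:CCH} is likewise dominated by $\sqrt{\omega_n}$ under the same assumption. Weyl's inequality then transfers $\bm B=\bm B^*+O(\sqrt{\omega_n})$ to the eigenvalues, establishing (\ref{eqn:FirstOrderMean}). The main obstacle is the $\bm B^{*,(2)}$ calculation, where one must simultaneously control the Bernoulli variance subleading correction $-\omega_n^2 f^2$, the boundary-effect Riemann error at the block interfaces, and the amplification by the prefactor $(n\omega_n)^{-1}$. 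Because the canonical SBM kernel has only finitely many discontinuities, independent of $n$, all these errors remain uniformly $O(1/n)$ and are absorbed into the stated rate, but the bookkeeping is the bulk of the work.
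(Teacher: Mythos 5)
Your route is essentially the paper's. The paper factors the argument as (i) an intermediate result from \cite{FM22} (Theorem~\ref{thm:ApproxEigsSimp-app} in the appendix) replacing the discrete sums in Theorem~\ref{thm:CCH} by the integrals $\int r_j r_l(\int f\,dy)\,dx$ and $\int r_jr_l\,dx$, with the same error rate $\cO(\sqrt{\omega_n})$ under $n^{-2/3}\ll\omega_n$; (ii) Lemma~\ref{lem:eigFuncEigVec} identifying $\theta_k=\nu_k$ and $r_k(x_i^*)=\bm v_k(i)/\sqrt{s_i}$ by showing the vectors $\bm r_k^n(i)=\sqrt{s_i}\,r_k(x_i^*)$ are the spectral decomposition of $\bm M$; and (iii) Lemma~\ref{lem:ApproxEigsSimp-M} rewriting the integrals as the finite sums \eqref{eqn:FirstOrderVar}, \eqref{eqn:defB1}, \eqref{eqn:defB2}. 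You do all three, but redo the discretization step (i) from scratch by directly estimating $\bm e_j^T\bm e_l$ and $\bm e_j^T\E{(\bA-\E{\bA})^2}\bm e_l$; your bookkeeping (boundary Riemann errors $\cO(1/n)$ amplified by $n\omega_n$ to $\cO(\omega_n)$, and the $-\omega_n^2 f^2$ Bernoulli correction contributing $\cO(\omega_n)$) is consistent with the cited rate, so this is a valid and more self-contained alternative, at the cost of redoing a sub-result the paper outsources. Your derivation of $\theta_k=\nu_k$ via the similarity $\bm M\sim\bm M_f D$ is a clean equivalent of the paper's $\bm r_k^n$ construction. One soft spot: the step ``expand $1=\sum_k\beta_k r_k$ and apply $L_f$'' silently assumes the constant function lies in the span of the nonzero-eigenvalue eigenfunctions (i.e., $\bm M_f$ nonsingular); the paper avoids this by instead substituting the finite-rank expansion $f(x,y)=\sum_k\theta_k r_k(x)r_k(y)$ directly into $\int f(x,y)\,dy$, which gives $\sum_k\theta_k\beta_k r_k(x)$ with no invertibility assumption and is the safer way to reach \eqref{eqn:defB2}.
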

\begin{cproof}
	We show these results in \ref{app:FiniteRank}. The proof technique observes that the estimation of the expected eigenvalues in Theorem \ref{thm:CCH} is given in terms of the inner products between discretized eigenfunctions and a discretized operator. Because every term is piecewise Lipschitz, these discretizations converge to their respective limits at a rate $\frac 1 n$ which allows one to replace the vectors $\bm e_j$ with the corresponding eigenfunctions. Then, because the eigenfunctions of stochastic block model graphs are piecewise constant, we represent these quantities with the eigenvectors of the matrix $\bm M$.
\end{cproof}

The prior corollary discusses the convergence in distribution of the random variable 
\begin{align}
	\left(\omega_n^{-1/2}(\lambda_i(\bA_{\mu}) - \E{\lambda_i(\bA_{\mu})}[\mu]) \right).
\end{align}
In general, we do not have convergence of the second moment as a result of convergence in distribution. However, we assume that for a large value of $n$, the quantity $\Cov(Z_i, Z_j)$ provides a good estimate of the term 
\begin{align}
	\Cov_{\mu}\left(\frac{1}{\sqrt{\omega_n}}\lambda_i(\bA) ,\frac{1}{\sqrt{\omega_n}}\lambda_j(\bA)  \vert \bm P_J = \bp \right).
\end{align}
With this assumption, Corollary \ref{cor:FiniteMatrixRep} provides a relationship between the expected value and covariance of $H_n$ in terms of the parameters of the stochastic block model. 

For general stochastic block models, equations (\ref{eqn:FirstOrderVar}) and (\ref{eqn:FirstOrderMean}) are not analytic in terms of the parameters of the model even when $q$ is taken to be constant. A consequence is that the estimation of the moments of $J$ becomes non-trivial. We consider a regime where we define $p_{\min} = \underset{i = 1,...,c}{\min} \bp$ along with a fixed parameter $\epsilon \ll 1$ and set $q = \epsilon p_{\min}$. Under these conditions, analytic expressions for (\ref{eqn:FirstOrderVar}) and (\ref{eqn:FirstOrderMean}) are determined up to $\cO(\epsilon^2)$ in the following lemma. The implications of these analytic expressions results in a subsequent lemma to determine the first and second moments of $J$.
\begin{Lemma}
\label{lem:FirstOrder}
	Let $\bm M$ and $\bm M_f$ be as defined in equation (\ref{eqn:FiniteMatrices}). Assume $q = \epsilon p_{\min}$, where $p_{\min} = \underset{i = 1,...,c}{\min}\bp$ and $\epsilon \ll 1$.
	\begin{align}
	&\left \vert \frac{\E{\lambda_i(\bA)}}{ n \omega_n s_i} - p_i \right \vert = \cO\left(\epsilon^2 p^2_{\min}\right) + \cO \left(\frac{t_i(\bp)}{n \omega_n s_i}\right), \label{eqn:FirstOrderErrMean}\\
	&\left \vert \Cov(Z_i, Z_j) - \begin{cases}2 p_i \quad \text{if } i =j\\ 0 \quad \text{if } i \neq j\end{cases}\right \vert = \cO(\epsilon^2 p_{\min}^2), \label{eqn:FirstOrderErrVar}
\end{align}
where $t_i(\bp)$ is a bounded function of the parameters $\bp$ and $Z$ is defined in Theorem \ref{thm:CCH}.
\end{Lemma}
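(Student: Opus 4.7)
The plan is to apply standard Rayleigh--Schr\"odinger perturbation theory to $\bm{M}$, viewed as the diagonal matrix $\bm{M}_0 = \mathrm{diag}(s_1 p_1, \ldots, s_c p_c)$ plus a symmetric off-diagonal perturbation $\bm{V} = \bm{M} - \bm{M}_0$ whose entries $\bm{V}_{ij} = \sqrt{s_i s_j}\, q = \cO(\epsilon p_{\min})$ vanish on the diagonal. The resulting expansions of the eigenpairs $(\nu_i, \bm{v}_i)$ in powers of $q$ then feed directly into the exact formulas (\ref{eqn:FirstOrderMean}) and (\ref{eqn:FirstOrderVar}) of Corollary \ref{cor:FiniteMatrixRep}.

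For the eigenvalues, because $\bm{V}$ has zero diagonal in the eigenbasis $\{\bm{e}_i\}$ of $\bm{M}_0$ the first-order Rayleigh--Schr\"odinger correction to each $\nu_i$ vanishes, and the second-order formula yields $\nu_i = s_i p_i + \sum_{j \neq i} \frac{s_i s_j q^2}{s_i p_i - s_j p_j} + \cO(q^3)$, whence $\nu_i/s_i - p_i = \cO(\epsilon^2 p_{\min}^2)$. Next, I would treat $\bm{B}^{*,(2)}$ (which is $\cO(1)$ in the scaling $n\omega_n$) as a sub-leading perturbation of $\bm{B}^{*,(1)} = n\omega_n\,\mathrm{diag}(\nu_1,\ldots,\nu_c)$, so first-order perturbation gives $\lambda_i(\bm{B}^*) = n\omega_n \nu_i + b^{*,(2)}_{i,i} + \cO((n\omega_n)^{-1})$. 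Combining this with (\ref{eqn:FirstOrderMean}), dividing by $n\omega_n s_i$, and absorbing the $\cO(\sqrt{\omega_n})$ and $\cO((n\omega_n)^{-1})$ remainders produces the stated decomposition, with $t_i(\bp) := b^{*,(2)}_{i,i}/s_i$ a bounded function of $\bp$ whenever the entries of $\bp$ and $\bs$ stay in a compact set bounded away from zero.

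For the covariance, the first-order eigenvector expansion is $\bm{v}_i = \bm{e}_i + q \sum_{j \neq i} \frac{\sqrt{s_i s_j}}{s_i p_i - s_j p_j} \bm{e}_j + \cO(q^2)$. The key cancellation is that the Hadamard product kills the first-order cross terms: since $\bm{e}_i .* \bm{e}_j = 0$ for $i \neq j$, one obtains $\bm{v}_i .* \bm{v}_i = \bm{e}_i + \cO(q^2)$, and for $i \neq j$, $\bm{v}_i .* \bm{v}_j = q\,\frac{\sqrt{s_i s_j}}{s_j p_j - s_i p_i}(\bm{e}_i - \bm{e}_j) + \cO(q^2)$. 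Plugging into the quadratic form gives $(\bm{v}_i .* \bm{v}_i)^T \bm{M}_f (\bm{v}_i .* \bm{v}_i) = \bm{e}_i^T \bm{M}_f \bm{e}_i + \cO(q^2) = p_i + \cO(\epsilon^2 p_{\min}^2)$ and, for $i \neq j$, a quadratic form in an $\cO(q)$ vector which is $\cO(q^2)$. Multiplying by $2$ gives (\ref{eqn:FirstOrderErrVar}).

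The main obstacle I anticipate is justifying nondegenerate perturbation theory when the unperturbed eigenvalues $\{s_i p_i\}$ of $\bm{M}_0$ fail to be distinct, since the denominators $s_i p_i - s_j p_j$ appear throughout. I would handle this either by establishing the bounds first for generic $\bp$ and then invoking continuity of symmetric-matrix eigenvalues and eigenprojectors in the entries, or directly by degenerate perturbation theory within each eigenspace of $\bm{M}_0$: the symmetric, zero-diagonal structure of $\bm{V}$ still forces $\cO(q^2)$ eigenvalue shifts and only $\cO(q)$ rotation of the eigenvector basis out of the eigenspaces, which is precisely what the Hadamard-product cancellations above require.
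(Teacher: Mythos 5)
Your proposal follows essentially the same route as the paper's proof in Appendix~\ref{app:FirstOrder}: both decompose $\bm M = \bm M_0 + \epsilon p_{\min} \bm M_1$ with $\bm M_0$ diagonal, expand the eigenpairs $(\nu_i, \bm v_i)$ of $\bm M$ in powers of $\epsilon p_{\min}$ (using the zero-diagonal structure of the perturbation to kill the first-order eigenvalue shift, so $\nu_i = s_i p_i + \cO((\epsilon p_{\min})^2)$), and then exploit the Hadamard-product cancellation $\bm e_j .\ast \bm e_k = 0$ for $j \neq k$ to get $\bm v_i .\ast \bm v_i = \bm e_i + \cO((\epsilon p_{\min})^2)$ and hence $\Cov(Z_i,Z_i) = 2 p_i + \cO(\epsilon^2 p_{\min}^2)$, with the off-diagonal case giving a quadratic form in an $\cO(q)$ vector and thus an $\cO(q^2)$ covariance. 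The one place you diverge is the treatment of $\lambda_i(\bm B^*)$: the paper invokes the Weyl--Lidskii bound (Theorem~\ref{thm:SpecInc}) to get $\lvert \lambda_i(\bm B^*) - \lambda_i(\bm B^{*,(1)}) \rvert \leq \lVert \bm B^{*,(2)} \rVert$ unconditionally and leaves $t_i(\bp)$ as an unspecified bounded function, whereas you apply first-order Rayleigh--Schr\"odinger to $\bm B^{*,(1)} + \bm B^{*,(2)}$ and identify $t_i(\bp)$ explicitly as $b^{*,(2)}_{i,i}/s_i$. Your version is sharper but inherits a second nondegeneracy hypothesis (distinctness of the $\nu_i n \omega_n$); the paper's is cruder but needs no such assumption at that step. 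On the first nondegeneracy issue (distinctness of the $s_i p_i$), the paper simply adopts well-separation of the $\gamma_i$ as a modeling assumption, so your explicit discussion of how to recover the result in the degenerate case via continuity of eigenprojectors or degenerate perturbation theory is actually more careful than the paper's own argument.
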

\begin{lproof}
	The proof is in \ref{app:FirstOrder}
\end{lproof}

When conditioned on an observation $\bp$, Lemma \ref{lem:FirstOrder} shows how to determine a relative error in the parameters. Utilizing these results, we now show how to determine the relative error in the first two moments of $J$ with the following lemma.
\begin{Lemma}
\label{lem:RelErrJMoments}
Let $\bm P_J$ be an observation from $J$ with components $P_i$, let $P_{\min} = \underset{i=1,...,c}{\min} P_i$ and define $q = \epsilon P_{\min}$ where $\epsilon \ll 1$.
\begin{align}
	\frac{\E{\lambda_i}[H_n]}{n \omega_n s_i} &= \E{P_i}[J] + \cO(\epsilon^2 ) + \cO\left(\frac{1}{n \omega_n}\right) \label{eqn:First-E}\\
 	\frac{\Cov_{H_{n}}(\lambda_i, \lambda_j)}{n^2 \omega_n^2 s_i s_j}&+ \begin{cases} \frac{2\E{\lambda_i}[H_n]}{n^3 \omega_n^2 s_i^3} \quad \text{if } i = j \\ 0 \quad \text{if } i \neq j \end{cases}\\
 	 &= \Cov_{J}\left(P_i + \cO(\epsilon^2 P_{\min}^2) + \cO\left(\frac{t_i(\bp)}{n \omega_n }\right),  P_j + \cO(\epsilon^2 P_{\min}^2) + \cO\left(\frac{t_j(\bp)}{n \omega_n}\right)\right) +  \cO\left(\frac{\epsilon^2}{n^2 \omega_n }\right)
\end{align}
where $t_i(\bp)$ is a bounded function of the parameters for each $i$.
\end{Lemma}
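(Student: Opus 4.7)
The plan is to apply the law of total expectation (equation (7)) and the law of total covariance (equation (8)) and then substitute the per-parameter estimates from Lemma 1 inside the outer integration over $J$, carefully tracking the orders of the residual terms.

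For the first moment, I would start from $\E{\lambda_i}[H_n] = \E{\E{\lambda_i(\bA)\mid \bm P_J = \bp}[\mu]}[J]$. Equation (\ref{eqn:FirstOrderErrMean}) of Lemma \ref{lem:FirstOrder} rewrites the inner conditional expectation as $n\omega_n s_i P_i + n\omega_n s_i \cdot \cO(\epsilon^2 P_{\min}^2) + \cO(t_i(\bm P_J))$ with $t_i$ bounded. Dividing through by $n\omega_n s_i$ and pushing the outer $\E{\cdot}[J]$ past these (now bounded) residuals gives $\E{\lambda_i}[H_n]/(n\omega_n s_i) = \E{P_i}[J] + \cO(\epsilon^2) + \cO(1/(n\omega_n))$, which is exactly equation (\ref{eqn:First-E}).

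For the covariance statement I would work from equation (8): the left-hand side decomposes as $\Cov_J(\E{\lambda_i/\sqrt{\omega_n}\mid \bp}[\mu], \E{\lambda_j/\sqrt{\omega_n}\mid \bp}[\mu]) + \E{\Cov_\mu(\lambda_i/\sqrt{\omega_n}, \lambda_j/\sqrt{\omega_n}\mid \bp)}[J]$. Inside the $\Cov_J$ I substitute equation (\ref{eqn:FirstOrderErrMean}), pulling the deterministic factor $n\sqrt{\omega_n}s_i$ outside and absorbing the residuals as perturbations of $P_i$; this yields a factor $n^2\omega_n s_i s_j$ times $\Cov_J(P_i + \cO(\epsilon^2 P_{\min}^2) + \cO(t_i/(n\omega_n)), P_j + \text{analogous})$. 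Inside the $\E{\cdot}[J]$ term I invoke the assumption stated right before the lemma, namely that for large $n$ the conditional covariance is well-approximated by $\Cov(Z_i,Z_j)$ as given in Theorem \ref{thm:CCH}, and then apply equation (\ref{eqn:FirstOrderErrVar}) to replace $\Cov(Z_i,Z_j)$ with $2P_i\delta_{ij} + \cO(\epsilon^2 P_{\min}^2)$. Taking $\E{\cdot}[J]$ of this and then multiplying the entire identity by $\omega_n$ and dividing by $n^2\omega_n^2 s_i s_j$ produces the target expression, with the diagonal surface term $2\E{P_i}[J]/(n^2\omega_n s_i^2 s_j)\cdot\mathbf{1}_{i=j}$. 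Finally I substitute the already-proved first-moment identity $\E{P_i}[J] = \E{\lambda_i}[H_n]/(n\omega_n s_i) + \cO(\epsilon^2) + \cO(1/(n\omega_n))$ to turn this surface term into the form $2\E{\lambda_i}[H_n]/(n^3\omega_n^2 s_i^3)$ appearing on the left-hand side of the lemma, sweeping the lower-order corrections into the overall $\cO(\epsilon^2/(n^2\omega_n))$ residual.

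The main obstacle is bookkeeping the error terms cleanly across the two different scales ($\cO(\epsilon^2 P_{\min}^2)$ from the truncation at small $q$, and $\cO(t_i/(n\omega_n))$ from the finite-$n$ spectral asymptotics) as they propagate through $\Cov_J$ and $\E{\cdot}[J]$. In particular one must verify that the bounded-ness of $t_i$ ensures $\Cov_J$ of two $\cO(t_i/(n\omega_n))$ perturbations is $\cO(1/(n^2\omega_n^2))$, which after multiplying by $\omega_n$ and dividing by $n^2\omega_n^2 s_i s_j$ yields the stated $\cO(\epsilon^2/(n^2\omega_n))$-type residual (or smaller). A secondary, more conceptual obstacle is the explicit invocation of the convergence-of-moments assumption discussed immediately before the lemma: convergence in distribution in Theorem \ref{thm:CCH} does not automatically give convergence of the covariance, so the proof must either cite this assumption directly or assume additional tightness of the normalized eigenvalues; I would simply state that this replacement is justified under the same hypothesis used in the preceding paragraph.
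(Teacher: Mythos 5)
Your proposal follows the same route as the paper: apply the law of total expectation for the first moment, apply the law of total covariance for the second, substitute the per-parameter expansions of Lemma \ref{lem:FirstOrder} into both terms (after invoking the stated assumption that the conditional covariance is well approximated by $\Cov(Z_i,Z_j)$), normalize by $n^2\omega_n^2 s_i s_j$, and then back-substitute the already-established first-moment identity to convert $\E{P_i}[J]$ into $\E{\lambda_i}[H_n]/(n\omega_n s_i)$ in the diagonal correction term. The paper merely organizes these same steps into four intermediate propositions, so the two arguments are essentially identical in both decomposition and bookkeeping of the $\cO(\epsilon^2 P_{\min}^2)$ and $\cO(t_i(\bp)/(n\omega_n))$ residuals.
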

\begin{lproof}
	The proof is in \ref{app:RelErr}.
\end{lproof}
When the random parameter stochastic block model has a parameter $q$ that satisfies the assumptions of Lemma \ref{lem:RelErrJMoments}, the associated probability measure is denoted as $\mu_{\omega_n, J, \epsilon, \bs}$. Lemma  \ref{lem:RelErrJMoments} suggests the following algorithm in practice to estimate the probability measure $\mu_{\omega_n, J, \epsilon, \bs}$ given a sample set of graphs.
\begin{algorithm}[H]
  \caption{Estimation of $\mu_{\omega_n, J, \epsilon, \bs}$ given sample data}
  \label{alg:DetJ}
  \begin{algorithmic}[1]
    \Require Set of graphs, $M = \lbrace \Gk \rbrace_{k=1}^N$.
    \State Compute the eigenvalues of the adjacency matrix of each graph as $\blamb^{(k)} = \sigma_c(\bA^{(k)})$.
    \State Compute the arithmetic average of the $c$ largest eigenvalues as $\bar{\blamb}$.
    \State Compute the sample covariance matrix of the $c$ largest eigenvalues as $\hat{\Sigma}$.
    \State Determine the average density of the set of graphs as $\bar{\rho}_n$.
    \State Set $\bs$ such that $s_i = \frac 1 c$ for all $1 \leq i \leq c$.
    \State Set $\omega_n = C \bar{\rho}_n$. Taking $C =1$ is typically sufficient, a discussion follows in Remark \ref{rem:geom_density}.
    \State Determine the first moment of $J$ according to 
    \begin{align}
    	\E{P_i}[J] = \frac{\bar{\lambda}_i}{n \omega_n s_i}.
    \end{align}
    \State Determine the second moment of $J$ according to
    \begin{align}
    	\Cov_J(P_i,P_j) = \frac{\hat{\Sigma}_{ij}}{n^2 \omega_n^2 s_i s_j} - \begin{cases} \frac{2 \bar{\lambda}_i}{n^3 \omega_n^2 s_i^3} \quad \text{if } i = j \\0 \quad \text{if } i \neq j \end{cases} \label{alg:step8}
    \end{align}
    \State Set 
    \begin{align}
    	\epsilon = \frac{1 - \sum_{i = 1}^c \E{P_i}[J] s_i^2}{ \underset{i = 1,...,c}{\min}( \E{P_i}[J]) (1 - \sum_{i=1}^c s_i^2)}
    	\end{align}
    	\State Return: $\omega_n, J, \epsilon, \bs$
  \end{algorithmic}
\end{algorithm}
\begin{Remark}
\label{rem:geom_density}
The choice of scaling as $\omega_n = \bar{\rho}_n$ is for simplicity. In practice, taking $\omega_n = C\bar{\rho}_n$, where $C$ is a positive constant, will always yield the same result for the following reason. Observe that for graphs sampled according to a kernel probability measure, the probability of an edge existing in the graph is modeled by
	\begin{align}
		\prob{a_{ij}} = \bern{\omega_n f(\frac i n,\frac j n)} = \bern{\left(C \omega_n\right) \frac{f(\frac i n,\frac j n)}{C}}.
	\end{align}
	Implying that the choice $\tilde{\omega}_n = C \omega_n$ and $\tilde{f}(x,y) = \frac 1 C f(x,y)$ for any $C$ such that $\tilde{f}(x,y) \in [0,1]$ defines an equivalent probability measure, $\mu_{\omega_n f} = \mu_{\tilde{\omega}_n \tilde f}$.
		
The choice of $\epsilon$ is such that in expectation, the density of graphs sampled from the estimated probability distribution is equal to the sample density of graphs observed. It is derived by setting $\E{||f||_1}[J] = 1$. Note that when taking $\omega_n = C\bar{\rho}_n$, $\epsilon$ should be chosen such that $\E{||f||_1}[J] = \frac{1}{C}$ so the expected density of the graphs is preserved when this is a desired quantity. The choice of $\omega_n$ and $\epsilon$ is a suggestion, other quantities may exist to suggest a different method of selecting these parameters of the model.

The work in \cite{FM22} shows that for arbitrarily large graphs, taking $s_i$ as a constant allows for the determination of a disconnected stochastic block model with the correct expected eigenvalues. We discuss in Remark \ref{rem:geom} that in practice, an estimate of $\bs$ that is data set dependent may perform better.
\end{Remark}

When fitting a distribution to sample data, aligning the sample moments with the mean and variance of the probability measure is a common practice and is the approach taken when fitting a random parameter stochastic block model. In the following section, it is shown that both $\bar{\bm \lambda}$ and $\hat{\Sigma}$ provide a good estimate of the eigenvalues of the sample Fr\'echet mean graph and the information contained within the total sample Fr\'echet variance respectively. Another perspective common to parameter estimation is likelihood maximization which is not explored in this paper.
\section{The first and second moments of $\mu \in \cM(\cG)$}
\label{sec:GraphMoments}
The first and second moments of a distribution $\mu \in \cM(\cG)$ respectively characterize the mean and spread of the probability measure. The mean and variance for metric spaces was generalized in \cite{F48}, and are respectively called the Fr\'echet mean and total Fr\'echet variance along with their sample alternatives. In this section, the Fr\'echet mean and total Fr\'echet variance are introduced and it is shown that the arithmetic mean of the eigenvalues of the adjacency matrices of a set of graphs, $\sGk_{k=1}^N$, and the sample covariance of the eigenvalues are closely related to the sample alternatives of the Fr\'echet mean and total Fr\'echet variance when $n$ is large.
\subsection{The first moment: Fr\'echet mean}
\label{subsec:FM}
We equip the set of graphs, $\cG$, defined on $n$ vertices (see definition \ref{def:SetOfGraphs}) with the pseudometric defined by the $\ell_2$ norm between the spectra of the respective adjacency matrices, $d_{A_c}$, (see (\ref{distance-trunc})). We consider a probability measure $\mu \in \cM(\cG)$ that describes the probability of obtaining a given graph when we sample $\cG$ according to $\mu$. Using $d_{A_c}$, we quantify the spread of the graphs, and we define a notion of centrality, which gives the location of the expected graph, according to $\mu$.

\begin{Definition}[Fr\'echet mean \cite{F48}]
  The Fr\'echet mean of the probability measure $\mu$ in the pseudometric space $(\cG,d_{A_c})$ is the set of graphs $G^*$ whose expected distance squared to $\cG$ is the minimum,
  \begin{equation}
    \left\{G^* \in \cG\right\} =  \argmin{G \in \cG} \E{d_{A_c}^2(G,G_{\mu})}[\mu], \label{def:FM}
    \end{equation}
\end{Definition}
where $G_\mu$ is a random realization of a graph distributed according to the probability measure $\mu$, and the expectation $\E{d^2(G,G_{\mu})}[\mu]$ is computed with respect to the probability measure $\mu$. The analysis in this section applies to any graph in the Fr\'echet mean set. We therefore assume that the Fr\'echet mean is unique and present the sample Fr\'echet mean as a unique graph rather than the more general set valued sample Fr\'echet mean.

The sample Fr\'echet mean is a natural extension of the population Fr\'echet mean that is defined by replacing the measure $\mu$ with the empirical measure. 

\begin{Definition}[Sample Fr\'echet mean \cite{F48}] 
  Let $\left\{\Gk\right\}\; 1 \leq k \leq N$ be a set of graphs in $\cG$. The sample Fr\'echet mean is defined by
  \begin{align}
    G_N^* = \argmin{G \in \cG} \frac 1 N \sum_{k=1}^N d_{A_c}^2(G,\Gk). \label{eqn:EFM}
  \end{align}
\end{Definition}

	\begin{Remark}
	The sample Fr\'echet mean, when consider the distance $d_{A_c}$, is discussed at length in \cite{FM22}. Of particular note, the largest $c$ eigenvalues of the adjacency matrix of the sample Fr\'echet mean graph are shown to cluster about the arithmetic average of the largest $c$ eigenvalues from the sample, namely Theorem 2 in \cite{FM22}.
	\end{Remark}
\subsection{The second moment: Fr\'echet variance}
\label{subsec:FV}
With a notion of mean in hand, the second moment, which captures the variability about the mean, follows naturally.
\begin{Definition}[Total Fr\'echet variance \cite{F48}]
	The total Fr\'echet variance of the probability measure $\mu$ in the pseudometric space $(\cG,d_{A_c})$ is defined  as
  \begin{equation}
    V^*_{tot} =  \E{d_{A_c}^2(G^*,G_{\mu})}[\mu], \label{def:TFV}
    \end{equation}
\end{Definition}
which is the evaluation of the Fr\'echet mean objective at the Fr\'echet mean graph. Similarly, the sample total Fr\'echet variance is given by evaluating the sample Fr\'echet mean objective at the sample Fr\'echet mean graph.
\begin{Definition}[Sample total Fr\'echet variance \cite{F48}]
	Let $\left\{\Gk\right\}\; 1 \leq k \leq N$ be a set of graphs in $\cG$. The sample total Fr\'echet variance is defined as
  \begin{align}
    V_{N, tot}^* =  \frac{1}{N-1} \sum_{k=1}^N d_{A_c}^2(G_N^*,\Gk). \label{eqn:STFV}
  \end{align}
\end{Definition}

While the (sample) total Fr\'echet variance applies for any metric object (all it requires is a distance), the adjacency spectral pseudo-metric, $d_{A_c}$, allows for a more specific definition of variance. In fact, the covariance matrix of the observed eigenvalues captures identical information to (\ref{eqn:STFV}) as shown in the following lemma.

For a dataset of graphs $\lbrace \Gk \rbrace_{k=1}^N$, denote the arithmetic mean of the eigenvalues of the adjacency matrices as
\begin{equation}
	\bar{\blamb} = \frac 1 N \sum_{k=1}^N \blamb^{(k)}
\end{equation}
where $\blamb^{(k)}$ is the vector of $c$ largest eigenvalues of the adjacency matrix, $\bA^{(k)}$, of graph $\Gk$. Define the sample covariance matrix as
\begin{align}
	\hat \Sigma = \frac{1}{N-1} \sum_{k=1}^N (\blamb^{(k)} - \bar{\blamb}) (\blamb^{(k)} - \bar{\blamb})^T.
\end{align}

\begin{Lemma}
\label{lem:STFV_Cov}
	Let $\lbrace \Gk \rbrace_{k=1}^N$ be a sample of graphs with sample Fr\'echet mean $G_N^*$ and total sample Fr\'echet variance $V_{N,tot}^*$. Let $\bar \blamb$ denote the arithmetic mean of the largest $c$ eigenvalues, and let $\hat{\Sigma}$ be the sample covariance matrix; then,
	\begin{equation}
		\lim_{n \to \infty} \vert V_{N, tot}^* - \sum_{i=1}^c \hat{\Sigma}_{ii}\vert = 0
	\end{equation}
\end{Lemma}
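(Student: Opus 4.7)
The approach is essentially an application of the parallel axis identity in Euclidean space, combined with the cited result from \cite{FM22} that the eigenvalues of the sample Fr\'echet mean cluster about the arithmetic mean $\bar{\blamb}$ as $n \to \infty$.

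First, I would unpack both quantities in terms of spectra. By definition,
\begin{equation*}
V_{N,tot}^{*} = \frac{1}{N-1}\sum_{k=1}^{N} \|\sigma_c(\bA_N^{*}) - \blamb^{(k)}\|_2^{2},
\qquad
\sum_{i=1}^{c} \hat{\Sigma}_{ii} = \frac{1}{N-1}\sum_{k=1}^{N} \|\blamb^{(k)} - \bar{\blamb}\|_2^{2},
\end{equation*}
where $\bA_N^{*}$ is the adjacency matrix of $G_N^{*}$. Thus both quantities are sums of squared Euclidean distances in $\R^{c}$; the first uses the ``center'' $\sigma_c(\bA_N^{*})$, the second uses the arithmetic mean $\bar{\blamb}$.

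Next I would apply the standard parallel-axis identity in $\R^{c}$. For any fixed $\bm{u} \in \R^{c}$ and any finite set of points $\{\blamb^{(k)}\}$ with arithmetic mean $\bar{\blamb}$,
\begin{equation*}
\sum_{k=1}^{N} \|\bm{u} - \blamb^{(k)}\|_2^{2}
\;=\;
\sum_{k=1}^{N} \|\bar{\blamb} - \blamb^{(k)}\|_2^{2} \;+\; N\,\|\bm{u} - \bar{\blamb}\|_2^{2},
\end{equation*}
because the cross term vanishes by definition of $\bar{\blamb}$. Taking $\bm{u} = \sigma_c(\bA_N^{*})$ and dividing by $N-1$ yields the exact identity
\begin{equation*}
V_{N,tot}^{*} - \sum_{i=1}^{c}\hat{\Sigma}_{ii} \;=\; \frac{N}{N-1}\,\|\sigma_c(\bA_N^{*}) - \bar{\blamb}\|_2^{2}.
\end{equation*}

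The final step is to invoke Theorem 2 of \cite{FM22}, cited in the remark just above this lemma, which asserts that the largest $c$ eigenvalues of the sample Fr\'echet mean graph cluster arbitrarily close to $\bar{\blamb}$ as $n \to \infty$; equivalently, $\|\sigma_c(\bA_N^{*}) - \bar{\blamb}\|_2 \to 0$. Since $N$ is fixed and $N/(N-1)$ is a harmless constant, the right-hand side tends to $0$, giving the claim.

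The only potentially delicate point is the use of the cited clustering result: one must verify it applies under the current scaling regime $n^{-2/3} \ll \rho_n \ll 1$ and to the set $\{G^{(k)}\}$ at hand. Because the lemma's hypothesis simply takes an arbitrary sample and because that regime is the standing assumption of the paper, no extra work beyond quoting the previous theorem is required. Everything else reduces to the elementary parallel-axis computation above.
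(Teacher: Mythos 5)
Your argument is correct, and it actually takes a cleaner route than the paper's own proof while relying on the same external input (the $\ell_2$ clustering of $\sigma_c(\bA_N^*)$ around $\bar{\blamb}$ from \cite{FM22}). The paper instead compares the two sums term by term: it applies the triangle inequality to get $\bigl|\,\|\blamb^{(k)}-\bar{\blamb}\|_2-\|\blamb^{(k)}-\blamb_N^*\|_2\,\bigr|<\epsilon$ for each $k$, and then jumps directly to the bound $\bigl|\,\|\blamb^{(k)}-\blamb_N^*\|_2^2-\|\blamb^{(k)}-\bar{\blamb}\|_2^2\,\bigr|<\epsilon$, implicitly treating the squared quantities as if the linear nearness propagated unchanged. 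That step is not justified as written, since $|a^2-b^2|=|a-b|\,|a+b|$ and the factor $|a+b|$ may not be $\mathcal{O}(1)$ as $n$ grows (the eigenvalue deviations $\|\blamb^{(k)}-\bar{\blamb}\|_2$ scale with $n$ through $\omega_n$). Your parallel-axis identity
\begin{equation*}
V_{N,tot}^{*}-\sum_{i=1}^{c}\hat{\Sigma}_{ii}=\frac{N}{N-1}\,\bigl\|\sigma_c(\bA_N^{*})-\bar{\blamb}\bigr\|_2^{2}
\end{equation*}
sidesteps this entirely: it is an exact algebraic equality, so the only limiting step needed is the convergence $\|\sigma_c(\bA_N^{*})-\bar{\blamb}\|_2\to 0$, and no control on the magnitudes of the individual $\blamb^{(k)}$ is required. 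In short, both proofs rest on the same Frech\'et-mean clustering theorem, but yours replaces a termwise perturbation estimate (which has a latent gap in the squaring step) with the bias--variance decomposition, yielding a sharper and fully rigorous bound.
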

\begin{lproof}
	The proof is in \ref{app:ApproxFV}.
\end{lproof}
\section{Conditions for a feasible distribution $J$}
\label{sec:ProbDensityEst}
This section explores the situation in which steps 7 and 8 in Algorithm \ref{alg:DetJ} are solvable independent of any assumptions on $J$. First, we consider Step 7. Because the support of $J$ is a subset of $[0,1]^c$,
\begin{align}
	\frac{\bar{\lambda}_i}{n \omega_n s_i}  \in [0,1] \quad \forall i. \label{eqn:GeomFeas}
\end{align}
\begin{Remark}
\label{rem:geom}
While it is shown in \cite{FM22} that the size of the communities is arbitrary when the size of the graphs is arbitrarily large, in practice, this condition suggests that an estimate of $s_i$ that is data set dependent may perform better. Unless otherwise specified in this manuscript, we always take $\bs$ such that $s_1 \geq s_j$ for all $j = 2,...,c$ and $s_i = s_j$ for all $2 \leq i,j \leq c$.
\end{Remark}
Interpreting this condition is straightforward; it states that the largest eigenvalue from a block in the stochastic block model cannot exceed the total number of connections available within that block. 

 Another condition on the feasibility of the moments of $J$ comes from Step 8. An obvious observation is simply that the variance is bounded below by $0$,
\begin{align}
	0 &\leq Var_J(P_i) \implies 0 \leq \frac{\hat{\Sigma}_{ii}}{n^2 \omega_n^2 s_i^2} -\frac{2 \bar{\lambda}_i}{n^3 \omega_n^2 s_i^3 }.
\end{align}
This results in the following condition on the relationship between the sample variance and the sample mean eigenvalues;
\begin{align}
	0 &\leq  \frac{\hat{\Sigma}_{ii}}{\omega_n} -\frac{2 \bar{\lambda}_i}{n \omega_n s_i } \iff 0 \leq \hat{\Sigma}_{ii} -\frac{2 \bar{\lambda}_i}{n s_i } \label{eqn:CondHom}
\end{align}
where we have expressed the condition in two forms, the first for interpretability and the second motivated by practical implementation.

We consider that a data set of graphs, $\sGk_{k=1}^N$, is considered homogeneous when the data set has low total sample Fr\'echet variance and is considered heterogeneous when the data set has high total sample Fr\'echet variance. The condition in equation (\ref{eqn:CondHom}) is related to the homogeneity of the sample set of graphs. It relates the scaled sample variance $\hat{\Sigma}_{ii}$ to the expected variance inherent to the stochastic block model process. Recall that the expected value of the inherent variance of the stochastic block model when $q = \epsilon p_{\min}$ is estimated to the first order as
\begin{align}
	\E{\Var(Z_i)}[J] = \E{2P_i}[J] + \cO(\epsilon^2)
\end{align}
where $Z$ is defined as in Corollary \ref{cor:FiniteMatrixRep}. By rewriting $\E{P_i}[J] = \frac{\bar{\lambda}_i}{n \omega_n s_i}$, it is clear that the condition given in equation (\ref{eqn:CondHom}) is equivalent to
\begin{align}
0 &\leq  \frac{\hat{\Sigma}_{ii}}{\omega_n} - \E{\Var(Z_i)}[J].
\end{align}

If the sample variance of the eigenvalues, $\hat{\Sigma}_{ii}$, is smaller than $\E{\Var(Z_i)}[J]$, the data set, $\sGk_{k=1}^N$, is considered to be more homogeneous with respect to the eigenvalues than a set of graphs sampled according to a stochastic block model with parameters $p_i = \frac{\bar{\lambda}_i}{n \omega_n s_i}$ and $\omega_n = \bar{\rho}_n$.

At times, the term $\frac{2 \bar{\lambda}_i}{n s_i }$ will be negligible when compared to $\hat{\Sigma}_{ii}$. This observation, along with equation (\ref{eqn:CondHom}) leads to the following three regimes of variance that may be considered when fitting a random parameter stochastic block model to a data set of graphs.

\subsection{Regimes of variance}
\label{subsec:VarReg}
Let $\lbrace \Gk \rbrace_{k=1}^N$ be a sample set of graphs where $\bar{\blamb}$ and $\hat{\Sigma}$ respectively denotes the arithmetic mean of the largest $c$ eigenvalues of the adjacency matrices and the sample covariance of the eigenvalues about the arithmetic mean. When 
\begin{align}
	\hat{\Sigma}_{ii} < \frac{2 \bar{\lambda}_i}{n s_i}
\end{align}
 the $i$-th largest eigenvalue is said to live in the \textit{small variance regime} and there does not exist a distribution $J$ such that $\mu_{\omega_n, J, q, \bs}$ captures both the sample mean eigenvalue and the sample variance of the eigenvalues.
 
The \textit{large variance regime} occurs when the term $\frac{2 \bar{\lambda}_i}{n s_i}$ is negligible as compared to $\hat{\Sigma}_{ii}$. For each $i = 1,...,c$, we determine whether the $i$-th eigenvalue is in this regime by examining whether 
\begin{align}
	\frac{2 \bar{\lambda}_i}{n s_i} \ll \hat{\Sigma}_{ii}.
\end{align}
When the data set of graphs falls within this regime, omitting the contribution to variance from $\frac{2 \bar{\lambda}_i}{n s_i}$ as
\begin{align}
	Var_J(P_i) &= \frac{\hat{\Sigma}_{ij}}{n^2 \omega_n^2 s_i^2} - \frac{2\bar{\lambda}_i}{n^3 \omega_n^2 s_i^3} \\
	&\approx \frac{\hat{\Sigma}_{ij}}{n^2 \omega_n^2 s_i^2}
\end{align}
provides a reasonable estimate for the variance of $J$. In this case, equation (\ref{alg:step8}) in Step 8 of Alg. \ref{alg:DetJ} simplifies to
\begin{align}
    	\Cov(P_i,P_j) = \frac{\hat{\Sigma}_{ij}}{n^2 \omega_n^2 s_i s_j} 
\end{align}

The \textit{medium variance regime} occurs when the term $\E{2 P_i}[J] = \frac{2 \bar{\lambda}_i}{n s_i}$ is significant when compared to $\hat{\Sigma}_{ii}$, which is the case when
\begin{align}
	\frac{2 \bar{\lambda}_i}{n s_i\hat{\Sigma}_{ii}} = \cO(1).
\end{align}
When this is the case, the distribution $J$ will only add a minor contribution to the total variance of the $i$-th largest eigenvalue. In this situation, the data may be interpreted as being accurately modeled by a classic stochastic block model and the need for a distribution on the parameters is diminished.

\subsubsection{Summary}
When $J$ comes from a two parameter family of distributions, resolving the first and second moments precisely identifies the distribution $J$. In some cases, when $J$ has more than two parameters, further moments may need to be considered.

When possible, taking $J$ to be parametric is preferred primarily because of the faster convergence rates when estimating parametric probability density functions. Often it is the case that for large graphs, the number of samples, $N$, is small and thus a fast convergence rate with respect to $N$ is desirable. However, when $J$ is non-parametric we suggest an alternative approach to estimate the probability density from which the graphs $\lbrace \Gk \rbrace_{k=1}^N$ were sampled.

\subsection{Non-parametric density estimation}
\label{subsec:Nonparam}
Assume that $J$ is a non-parametric distribution over the parameters $\bp$. Let $\bm P_J$ be an observation from $J$ with components $P_i$, let $P_{\min} = \underset{i=1,...,c}{\min} P_i$ and define $q = \epsilon P_{\min}$, where $\epsilon \ll 1$. For a sample set of graphs $\lbrace \Gk \rbrace_{k=1}^N$, assume that each $\Gk$ is sampled iid from a random parameter stochastic block model, $\mu_{\omega_n, J,\epsilon,\bs}$. Rather than determining $J$ via the moments, we suggest an alternative approach to the estimation of the probability density which is a generalization of kernel density estimation. We define an estimate of $\mu_{\omega_n J,\epsilon,\bs}$ as
\begin{align}
	\hat{\mu} = \frac 1 N \sum_{k=1}^N  \mu_{\omega_n^{(k)}, J^{(k)}, \epsilon^{(k)}, \bs^{(k)}}, \label{eqn:KernDensEst}
\end{align}
where the probability measures $\mu_{\omega_n^{(k)},J^{(k)}, \epsilon^{(k)}, \bs^{(k)}}$ act analogously to kernels when performing kernel density estimation and the distribution $J^{(k)}$ can be interpreted as the bandwidth parameter which determines the variance of each kernel.

To determine $\mu_{\omega_n^{(k)}, J^{(k)}, \epsilon^{(k)}, \bs^{(k)}}$ for each $k$, we suggest the following algorithm, which is closely related to Alg. \ref{alg:DetJ}. The difference is in the determination of the second moment of $J^{(k)}$.

\begin{algorithm}[H]
  \caption{Determination of  $\hat{\mu}$ given sample data}
  \label{alg:DetJKDE}
  \begin{algorithmic}[1]
    \Require Set of graphs, $M = \lbrace \Gk \rbrace_{k=1}^N$.
    \State Compute the eigenvalues of the adjacency matrix of each graph as $\blamb^{(k)} = \sigma_c(\bA^{(k)})$.
    \State Define the set $\lbrace \blamb^{(k)} \rbrace_{k=1}^N \subset \R^c$.
    \State Consider the $c$ largest eigenvalues as a subset of $\R^c$ and perform kernel density estimation. Define
    \begin{align}
	\widehat{T}_{\bm H}(\blamb) = \frac 1 N \sum_{k=1}^N K^{(k)}_{\bm H}(\blamb). \label{eqn:KernR}
    \end{align}
    as the kernel density estimator given $\lbrace \blamb^{(k)} \rbrace_{k=1}^N \subset \R^c$ where each kernel $K^{(k)}_{\bm H}(\blamb)$ is a probability density function that satisfies
    \begin{align}
    		\E{\blamb}[K^{(k)}_{\bm H}] = \blamb^{(k)} \quad \forall k = 1,...,N\\
    		\Cov_{K^{(k)}_{\bm H}}(\lambda_i, \lambda_j) =  \bm H_{ij} \quad \forall k = 1,...,N
    \end{align}
    where $\bm H$ is estimated at the choice of the researcher. For an overview of methods to estimate $\bm H$ see \cite{KDE}.
    \State For each $k$,
    \State \hspace{0.5cm} Compute the eigenvalues of the adjacency matrix of each graph as $\blamb^{(k)} = \sigma_c(\bA^{(k)})$.
    \State \hspace{0.5cm} Determine the density of each graph as $\rho^{(k)}_n$.
    \State \hspace{0.5cm} Set $\bs^{(k)}$ such that $s_i^{(k)} = \frac 1 c$ for all $1 \leq i \leq c$.
    \State \hspace{0.5cm} Set $\omega_n^{(k)} = \rho^{(k)}_n$.
    \State \hspace{0.5cm} Determine the first moment of $J^{(k)}$ according to 
    \begin{align}
    	\E{P_i}[J^{(k)}] = \frac{\lambda^{(k)}_i}{n \omega^{(k)}_n s^{(k)}_i}.
    \end{align}
    \State \hspace{0.5cm} Determine the second moment of $J^{(k)}$ according to
    \begin{align}
    	\Cov_{J^{(k)}}(P_i,P_j) = \frac{\bm H_{ij}}{n^2 (\omega^{(k)}_n)^2 s^{(k)}_i s^{(k)}_j} - \begin{cases} \frac{ 2\lambda^{(k)}_i}{n^3 (\omega^{(k)}_n)^2 (s^{(k)}_i)^3} \quad \text{if } i = j \\0 \quad \text{if } i \neq j \end{cases} \label{eqn:DetJ2}
    \end{align}
    \State \hspace{0.5cm} Set 
    \begin{align}
    	\epsilon^{(k)} = \frac{1 - \sum_{i = 1}^c \E{P_i}[J^{(k)}] (s_i^{(k)})^2}{ \underset{i = 1,...,c}{\min}( \E{P_i}[J^{(k)}]) (1 - \sum_{i=1}^c (s_i^{(k)})^2)}
    	\end{align}
	\State Return: $\hat{\mu} = \frac 1 N \sum_{k=1}^N \mu_{\omega_n^{(k)}, J^{(k)}, \epsilon^{(k)}, \bs^{(k)}}$.
  \end{algorithmic}
\end{algorithm}
	Two new conditions for the feasibility of each $J^{(k)}$ arise for Alg. \ref{alg:DetJKDE}, namely
	\begin{align}
		&\frac{\lambda_i^{(k)}}{n \omega^{(k)}_n s^{(k)}_i}  \in [0,1], \label{eqn:CondKDE1}\\
		&0 < \bm H_{ii} - \frac{2 \lambda^{(k)}_i}{n s^{(k)}_i}. \label{eqn:CondKDE2}
	\end{align}
	Condition (\ref{eqn:CondKDE2}) is explored in depth in subsection \ref{subsec:ExplorationCriticalN}.
\begin{Remark}
\label{rem:Coarse}
	In practice, there are two common properties of graph valued data sets to note. First, for a large value of $n$, the inherent variance of the largest eigenvalues from a stochastic block model is small. Therefore, for heterogeneous data sets of graphs, we expect a small contribution to the total variance from the inherent variance of the stochastic block model. This suggests that the term $\frac{2 \lambda^{(k)}_i}{n s^{(k)}_i}$ is small in practice and that equation (\ref{eqn:CondKDE2}) is likely to be satisfied.
	
	Second, for large graphs, the data is typically coarse in that $N \ll n$. Given the coarse data set in $\cG$, the expectation is that the data, $\lbrace \blamb^{(k)} \rbrace_{k=1}^N \subset \R^c$, is also coarse. When performing kernel density estimation in $\R^c$, for a coarse data set, the estimation of $\bm H$ will be large, see for instance Silverman's rule of thumb when $N$ is small while $\hat{\Sigma}$ is large.
	
	With the expectation that $\bm H_{ii}$ is large and $\frac{2 \lambda^{(k)}_i}{n s^{(k)}_i}$ is small, equation (\ref{eqn:CondKDE2}) is expected to be satisfied for a wide variety of graph valued data.
	
	For each $i$ and $k$ where equation (\ref{eqn:CondKDE2}) is not met, we take the $i$-th marginal distribution of $J^{(k)}$ to be a Dirac delta distribution centered at $\frac{\lambda_i^{(k)}}{n s_i^{(k)}}$ and accept a local over-smoothing\footnote{Smoothing is a symptom of bandwidth selection and is relative to the choice of bandwidth selection in $\R^c$. Remark \ref{rem:Coarse} states that the kernel density estimator will over-smooth the $i$-th eigenvalue at the $k$-th data point as compared to the method chosen for kernel density estimation in $\R^c$, defined by equation (\ref{eqn:KernR}).} of the distribution of the $i$-th eigenvalue at the $k$-th data point.
\end{Remark}
\section{Simulation studies}
\label{sec:Sim}
Throughout this section we fit a random parameter stochastic block model to various datasets taking either a parametric or non-parametric approach. We first verify that when the data comes from a random parameter stochastic block model, we recover the parameters of the model well. We then test the limits of the model when the data is not sampled according to a random parameter stochastic block model. Within each section, we describe how the data set of graphs was generated and showcase the quality of the estimated probability density.
\subsection{Recoverability}
\label{subsec:Recov}
We first verify that when data is sampled according to a random parameter stochastic block model, Alg. \ref{alg:DetJ} recovers the parameters well. Let $\lbrace \Gk \rbrace_{k=1}^N$ be a data set of graphs sampled according to a random parameter stochastic block model, $\mu_{\omega_n J, \epsilon, \bs}$, where
\begin{align}
	N = 50, \quad n = 1000, \quad \omega_n = 10n^{-1/2}, \quad \epsilon = 0.05, \quad \bs = [0.5, 0.5, 0, ...]^T, \quad J = \mathcal{U}_{[0.8,0.9]} \times \mathcal{U}_{[0.55,0.6]}.
\end{align}
Here, $\mathcal{U}_{[0.8,0.9]}$ and $\mathcal{U}_{[0.55,0.6]}$ denote uniform probability measures on the respective intervals. We represent the support of $J$ by determining the centers of each uniform distribution as, $\bp^* = [0.85, 0.55]$, and capturing the width about the mean values by the vector $\bm{\delta}^* = [0.1, 0.0.5]$. We denote the estimated parameters by $\hat{\omega}_n, \hat{\bp}^*,$ and $\hat{\bm \delta}^*$. 
\begin{Remark}
When determining $J$ we aim to recover the product of $\omega_n \bp^*$ and $\omega_n \bm{\delta}^*$, which defines the support of the observations from $J$ when $n = 1000$. If the estimate for $\hat{\omega}_n = \omega_n$, then the direct comparison of $\hat{\bp}^*$ and $\hat{\bm \delta}^*$ is equivalent, but because we take $\hat{\omega}_n = \bar{\rho}_n$ these comparisons do not provide useful information. This analysis is unique to the case of recoverability; in general, we will not have oracle knowledge of the parameters that generated the data, and we will only report the error in terms of the sample Fr\'echet mean and sample total Fr\'echet variance.
\end{Remark}

Let $\blamb^{(k)} = \sigma_c(\bA^{(k)})$, where $\bA^{(k)}$ denotes the adjacency matrix of graph $\Gk$. We compute the sample mean spectrum and sample covariance matrix as
\begin{align}
	\bar \blamb &= \frac 1 N \sum_{k=1}^N \blamb^{(k)}, \quad \hat{\Sigma} = \frac{1}{N-1} \sum_{k=1}^N (\blamb^{(k)} - \bar \blamb)^T(\blamb^{(k)} - \bar \blamb).
\end{align}
The values of the mean eigenvalue and diagonal entries of the covariance are
\begin{align}
	\bar \blamb = \begin{bmatrix} 133.9401\\ 91.1005 \end{bmatrix} \quad \hat{\Sigma}_{11} = 25.3956 \quad \hat{\Sigma}_{22} = 5.5628
\end{align}
We assume a constant community size, so $s_i = 0.5$ for each $i$, and check which regime of variance each eigenvalue falls within by computing $\hat{\Sigma}_{ii} - \frac{2 \bar{\lambda}_i}{n s_i}$ for each $i$:
\begin{align}
	\begin{bmatrix}
		\hat{\Sigma}_{11} - \frac{2 \bar{\lambda}_1}{n s_1}\\ 
		\hat{\Sigma}_{22} - \frac{2 \bar{\lambda}_2}{n s_2}
	\end{bmatrix}
	= 
	\begin{bmatrix}
		24.8599 \\ 
		5.1984
	\end{bmatrix}.
\end{align}
The magnitude of each term indicates that each eigenvalue is in the large variance regime and the contribution to variance from $ \frac{2 \bar{\lambda}_i}{n s_i}$ can be ignored. Determining the parameters according to Alg. \ref{alg:DetJ},
\begin{align*}
	&\bar{\rho}_n \hat{p}_1^*= 0.2679, \quad \bar{\rho}_n \hat{p}_2^* = 0.1822, \quad \bar{\rho}_n \hat{\delta}_1 = 0.0173, \quad \bar{\rho}_n \hat{\delta}_2 = 0.0079.
\end{align*}
The absolute relative error for each component of each parameter is
\begin{align*}
	&\left \vert \frac{\bar{\rho}_n  \hat{p}_1^* - \omega_n p_1^*}{\omega_n p_1^*}\right \vert = 0.2455\% \quad \left \vert \frac{\bar{\rho}_n  \hat{p}_2^* - \omega_n p_2^*}{\omega_n p_2^*}\right \vert= 0.0673\%\quad \left \vert \frac{\bar{\rho}_n \hat{\delta}_1- \omega_n \delta_1}{\omega_n \delta_1}\right \vert = 9.2371\% \quad \left \vert \frac{\bar{\rho}_n \hat{\delta}_2- \omega_n \delta_2}{\omega_n \delta_2}\right \vert = 0.0958\%\\	
\end{align*}
The most notable error is in the recovery of $\epsilon$. To compare this term, we consider
\begin{align}
	\left \vert \frac{\hat{\epsilon}  -  \epsilon }{ \epsilon}\right \vert = 83.023\%
\end{align}
which is a significant relative error; however, recall that
\begin{align}
	\frac{\E{\lambda_i(\bA) \vert \bm P_{J} = \bm p}}{ n \omega_n s_i}  = n \omega_n s_i \left( p_i + \cO\left(\epsilon^2 p^2_{\min}\right) \right) + \cO \left(t_i(\bp)\right).
\end{align}
The term 
\begin{align}
	n \hat{\omega}_n s_1 \hat{\epsilon}^2 \underset{i = 1,...,c}{\min}\E{P_i}^2 = n \omega_n s_1 \hat{\epsilon}^2 \hat{p}_2^* = 0.0105
\end{align}
suggests that even with a large relative error, the contribution to the largest eigenvalue from this error remains negligible. Evaluating for $s_2$ shows the error in the second largest eigenvalue, which is similarly small.

In addition to verifying the recovery of the parameters, we also compare the sample Fr\'echet mean and sample total Fr\'echet variance of the recovered probability measure. We define a new sample of graphs $\lbrace \tilde G^{(k)} \rbrace_{k=1}^{50}$ sampled iid from $\mu_{\hat{\omega}_n, \hat{J}, \hat{\epsilon}, \bs}$ and compute the new sample mean eigenvalue and sample variance of the data set as
\begin{align}
	\bar \blamb_{new} = \begin{bmatrix} 133.3465\\ 91.4807 \end{bmatrix} \quad \hat{\Sigma}_{11, new} = 25.1882 \quad \hat{\Sigma}_{22, new} = 6.2949
\end{align}
with absolute relative errors of
\begin{align}
	\left \vert \frac{\bar \lambda_{1,new} - \bar{\lambda}_1}{\bar{\lambda}_1} \right \vert = 0.4432\%, \quad \left \vert \frac{\bar \lambda_{2,new} - \bar{\lambda_2}}{\bar{\lambda_2}} \right \vert = 0.4173\%, \quad \left \vert \frac{\hat{\Sigma}_{11, new} - \hat{\Sigma}_{11}}{\hat{\Sigma}_{11}} \right \vert = 1.3207\%, \quad  \left \vert \frac{\hat{\Sigma}_{22, new} - \hat{\Sigma}_{22}}{\hat{\Sigma}_{22}} \right \vert = 21.0933\%,
\end{align}
indicating that the sample Fr\'echet mean graph and sample total Fr\'echet variance are approximated well.
\subsection{Mixture model estimation via parametric $J$}
\label{subsec:MixtureModel}
We now test the flexibility of the model by attempting to fit a random parameter stochastic block model to a mixture model. Let $\lbrace \Gk \rbrace_{k=1}^N$ be a set of graphs sampled from a mixture of four stochastic block models where $\bp$ is allowed to vary between one of four different values. The following parameters are uniform across the mixture of models
\begin{align}
	n = 1000, \quad N = 200, \quad c = 2, \quad q = 0.05, \quad \omega_n = 10n^{-1/2}
\end{align}
The values of $\bp$ considered are
\begin{align}
	\bp \in \left \lbrace \begin{bmatrix} 0.9\\ 0.5 \end{bmatrix}, \begin{bmatrix} 0.9\\ 0.3 \end{bmatrix}, \begin{bmatrix} 0.6\\ 0.5 \end{bmatrix}, \begin{bmatrix} 0.6\\ 0.3 \end{bmatrix}\right \rbrace.
\end{align}
We generate a data set of graphs, $\lbrace \Gk \rbrace_{k=1}^N$, by first assigning equal weights to each possible value of $\bp$ and randomly selecting a value of $\bp^{(k)}$ from the possible set. A graph $\Gk$ is then sampled from $\mu_{\omega_n, \bp^{(k)}, q, \bs}$. As before, we compute $\bar \blamb$ and the diagonal of $\hat{\Sigma}$;
\begin{align}
	\bar{\blamb} = \begin{bmatrix} 120.2088 \\ 61.78\end{bmatrix}, \quad \begin{bmatrix} \hat{\Sigma}_{11} \\ \hat{\Sigma}_{22} \end{bmatrix} = \begin{bmatrix} 524.6778\\ 228.0952\end{bmatrix}
\end{align}
which we use to check the regime of variance of the data,
\begin{align}
	\begin{bmatrix}
		\hat{\Sigma}_{11} - \frac{2 \bar{\lambda}_1}{n s_1} = \\ 
		\hat{\Sigma}_{22} - \frac{2 \bar{\lambda}_2}{n s_2}
	\end{bmatrix}
	= 
	\begin{bmatrix}
		524.1970 \\ 
		227.8481
	\end{bmatrix}.
\end{align}
The values clearly indicate the data resides in the regime of high variance and thus we may safely ignore the inherent variability of the stochastic block model again. 

There are multiple perspectives to consider when constructing a generative model given this data. A researcher could first cluster the data and attempt to reconstruct the mixture, or she may interpret the data as a bi-modal distribution of graphs. We select the latter perspective for experimental purposes, and construct a generative model taking $J$ to be a product measure of shifted Beta probability measures. Taking this perspective allows for no consideration of the covariance for $J$. We first ensure that this assumption is consistent with the data by examining the sample correlation,
\begin{align}
	\frac{\hat{\Sigma}_{1,2}}{\sqrt{\hat{\Sigma}_{1,1} \hat{\Sigma}_{2,2}}} = -0.0188.
\end{align}
With the correlation being nearly $0$, we are assured in our assumption of a negligible covariance contribution and taking $J$ to be a product between shifted beta probability measures is therefore reasonable.

Each shifted Beta distribution, denoted $\text{Beta}_i$, depends on the four parameters, 
\begin{align*}
	&a_i \text{: minimum value of range,}\\
	&b_i \text{: maximum value of range,}\\
	&\alpha_i \text{: shape parameter,}\\
	&\beta_i \text{: shape parameter,}
\end{align*}
with mean and variance
\begin{align}
	&\E{P_i}[Beta_i] = a_i + (b_i - a_i)\frac{\alpha_i}{\alpha_i + \beta_i},\\
	&Var_{Beta_i}(P_i) = (b_i - a_i)^2 \frac{\alpha_i \beta_i}{(\alpha_i + \beta_i)^2(\alpha_i + \beta_i + 1)}.
\end{align}
Crude estimates for $a_i$ and $b_i$ are given by
\begin{align}
	a_i = \underset{k = 1,...,N}{\min} \frac{\lambda_i(\bA^{(k)})}{n \bar{\rho}_n s_i},\\
	b_i = \underset{k = 1,...,N}{\max}\frac{\lambda_i(\bA^{(k)})}{n \bar{\rho}_n s_i}.
\end{align}
where we have taken $\omega_n = \bar{\rho}_n$ as described in Alg. \ref{alg:DetJ}. Solving the following set of equalities for $\alpha_i$ and $\beta_i$,
\begin{align}
	\E{P_i}[Beta_i]&= a_i + (b_i - a_i)\frac{\alpha_i}{\alpha_i + \beta_i}\\
	Var_{Beta_i}(P_i) &= (b_i - a_i)^2 \frac{\alpha_i \beta_i}{(\alpha_i + \beta_i)^2(\alpha_i + \beta_i + 1)}
\end{align}
where $\E{P_i}[Beta_i] = \frac{\bar{\lambda}_i}{n \bar{\rho}_n s_i}$ and $Var_{Beta_i}(P_i) = \frac{\hat{\Sigma}_{ii}}{n^2 \bar{\rho}_n^2 s_i^2}$ result  in the following set of parameters for the two Beta distributions;
\begin{align}
	\bm a = \begin{bmatrix} 1.94656\\ 0.942685\end{bmatrix}, \quad \bm b = \begin{bmatrix} 2.97084\\ 1.63105\end{bmatrix}, \quad \bm \alpha = \begin{bmatrix} 0.0996378\\ 0.111218\end{bmatrix}, \quad \bm \beta = \begin{bmatrix} 0.102785\\ 0.13049\end{bmatrix}.
\end{align}

To compare the goodness of fit between the estimated distribution $\hat{\mu}$ resulting from Alg. \ref{alg:DetJ} we take a new sample of graphs, $\lbrace \Gk_{new} \rbrace_{k=1}^{200}$ from $\mu_{\hat{\omega}_n,\hat{J}, \hat{\epsilon}, \bs}$ and compare the sample Fr\'echet mean and sample total Fr\'echet variance. Lemma \ref{lem:STFV_Cov} shows that an estimate for the eigenvalues of the sample Fr\'echet mean for large graphs is given by the arithmetic mean eigenvalue, and an estimate for the sample total Fr\'echet variance is given by the diagonal of the sample covariance matrix. The sample mean eigenvalue and sample covariance matrix of the new sample are
\begin{align}
\bar{\blamb}_{new} = \begin{bmatrix} 121.0192 \\ 60.8119\end{bmatrix}, \quad \begin{bmatrix} \hat{\Sigma}_{11,new} \\ \hat{\Sigma}_{22,new}\end{bmatrix} = \begin{bmatrix} 504.1689\\ 209.7644\end{bmatrix}.
\end{align}
The relative error for each eigenvalue and the diagonal entries of the covariance matrix
\begin{align}
	&\left \vert \frac{\bar{\lambda}_{1,new} - \bar{\lambda}_1}{\bar{\lambda}_1} \right \vert = 0.6742\%, \quad \left \vert \frac{\bar{\lambda}_{2,new} - \bar{\lambda}_2}{\bar{\lambda}_2} \right \vert = 1.567\%, \quad \left \vert \frac{\hat{\Sigma}_{11,new} - \hat{\Sigma}_{11}}{\hat{\Sigma}_{11}} \right \vert = 3.9089 \%, \quad \left \vert \frac{\hat{\Sigma}_{22,new} - \hat{\Sigma}_{22}}{\hat{\Sigma}_{22}} \right \vert = 8.0365 \%.
\end{align}

While we have confirmed that the statistics of the recovered distribution match both the first and second moments of the original data set, we may further check the goodness of fit for the choice of $J$ by considering an estimate of the probability density of the two largest eigenvalues from each data set, as presented in Fig. \ref{fig:PDFBeta}.
\begin{figure}[H]
	  \centering
	  \includegraphics[width= \textwidth]{./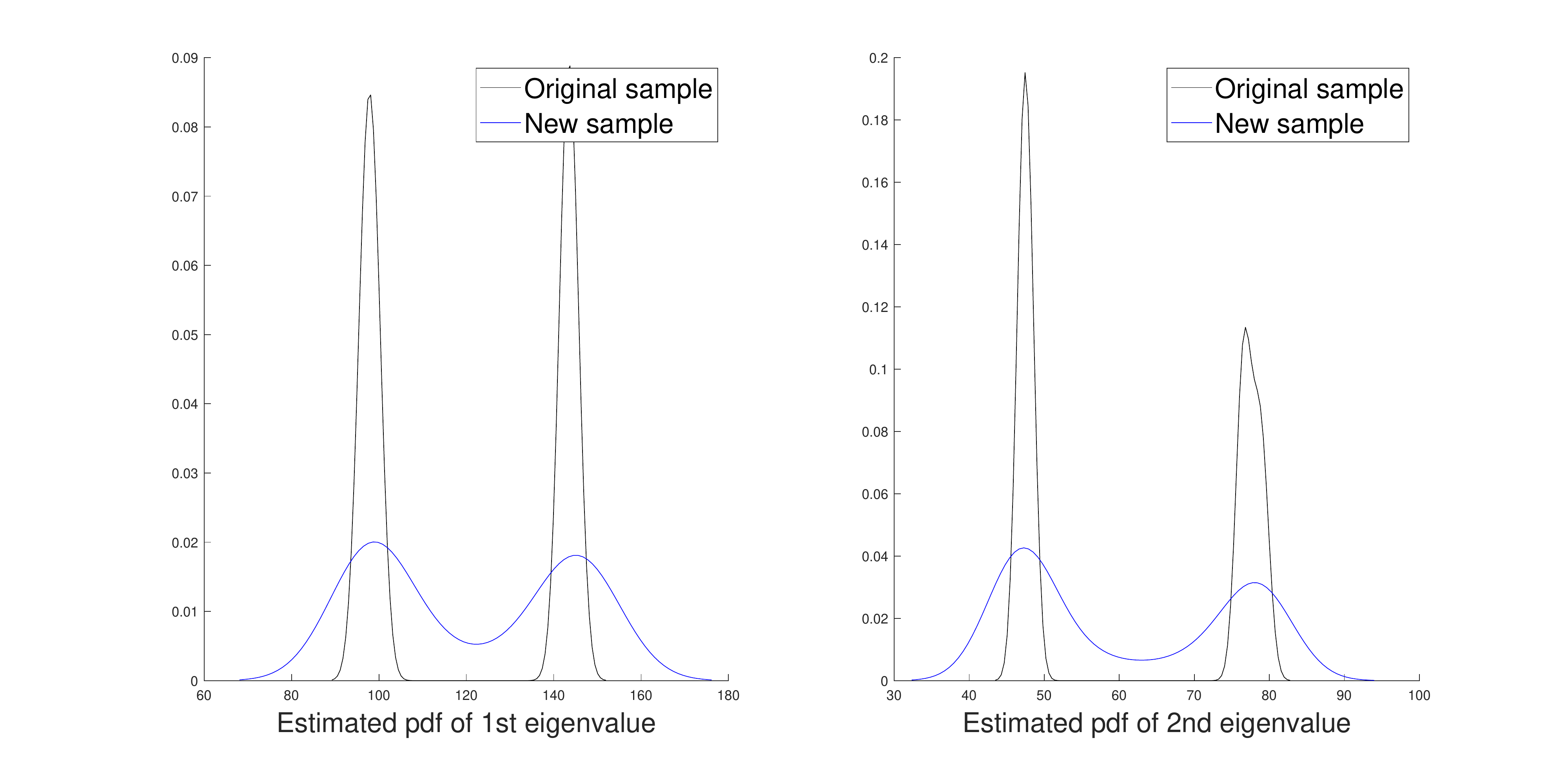}
	  \caption{A comparison between the estimated probability densities of the two largest eigenvalues from the original data set (black) compared to the recovered data set (blue). Each curve was generated by first embedding the observed sets of eigenvalues in $\R^c$ and performing a kernel density estimate.}
	   \label{fig:PDFBeta}
\end{figure}
As is evident in Fig. \ref{fig:PDFBeta}, using a random parameter stochastic block model with $J$ given by the product of Beta distributions to fit a mixture model may not always result in an ideal fit of the distributions, even when aligning the first and second moments.

\subsection{The effect of sample size for non-parametric models when $J^{(k)}$ is given by a Dirac delta}
\label{subsec:ExplorationCriticalN}
Throughout this section we allow $N$, the sample size of the data set, to increase while $n$ is fixed and examine the effect on the condition given by equation (\ref{eqn:CondKDE2}). In this section, graphs will be sampled according to various $\text{\ER}$ models. We first review the properties of the limiting distribution of the largest eigenvalue of graphs sampled according to this model, as shown by Theorem \ref{thm:CCH}. For an $\text{\ER}$ model, we denote the associated probability measure given parameters $p$ and $\omega_n$ as
\begin{align}
	\mu_{\omega_n, p}
\end{align}
because there is no inter-community connection probability, $q$, and $\bs$ is given trivially because all nodes belong to one community.
Recall from Remark \ref{rem:dirac} that $\mu_{\omega_n, p} = \mu_{\omega_n, J}$ when $J$ is given by a Dirac delta distribution with mean $p$.

Let $G \sim \mu_{\omega_n, p}$ with adjacency matrix $\bA_{\mu_{\omega_n, p}}$. Theorem \ref{thm:CCH} shows that
\begin{align}
	\frac{1}{\sqrt{\omega_n}} \left( \lambda_1(\bA_{\mu_{\omega_n, p}}) - \E{\lambda_1(\bA_{\mu_{\omega_n, p}})} \right) \overset{d}{\to} N(0,2p)
\end{align}
with 
\begin{align}
	\E{\lambda_1(\bA_{\mu_{\omega_n, p}})} = \lambda_1(\bm B) + \cO(\sqrt{\omega_n} + \frac{1}{n \omega_n}) = n \omega_n p + 1 + \cO(\sqrt{\omega_n} + \frac{1}{n \omega_n}).
\end{align}
For a given $\text{\ER}$ model, we define the probability density function
\begin{align}
	f(z) = \Phi(n \omega_n p + 1, \sqrt{2p}) = \frac{1}{\sqrt{4 p \pi}} e^{-\frac{1}{2} \left( \frac{z - (n \omega_n p +1)}{\sqrt{2 p}}\right)^2}
\end{align}
as an estimate of the distribution of the largest eigenvalue when $n$ is large.
\subsubsection{Simulation setup and algorithm}
We now consider the recovery of a simple mixture of two $\text{\ER}$ models with the following parameters:
\begin{align}
	n = 1000 \quad \omega_n = 2 n^{-1/2} \quad p_1 = 0.75 \quad p_2 = 0.85.
\end{align}
Define
\begin{align}
	\mu = \frac 1 2 \left( \mu_{\omega_n, p_1} + \mu_{\omega_n p_2} \right).
\end{align}
The following algorithm describes the simulations performed in this section. The algorithm is similar to Alg. \ref{alg:DetJKDE}. 
\begin{algorithm}[H]
  \caption{Estimation of  $\mu$ given sample data}
  \label{alg:DetJKDE-ER}
  \begin{algorithmic}[1]
    \Require Set of graphs, $M = \lbrace \Gk \rbrace_{k=1}^N$, with $\Gk \sim \mu$.
    \State Compute $\lambda_1^{(k)} = \lambda_1(\bA^{(k)})$.
    \State Compute $\rho_n^{(k)}$ as the density of graph $\Gk$.
    \State Construct an oracle knowledge estimate of the mixture of two Gaussian distributions,
    \begin{align}
    		f^{true}(z) = \frac 1 2 \left(\Phi(n \omega_n p_1 + 1, \sqrt{2p_1})  + \Phi(n \omega_n p_2 + 1, \sqrt{2p_2}) \right).
    \end{align}    
    \State \textbf{For each $k$},
    \State \hspace{0.5cm} Define $p^{(k)} = \frac{\lambda_1^{(k)} - 1}{n \rho_n^{(k)}}$.
    \State \hspace{0.5cm} Define the probability measures $\mu_{\rho_n^{(k)}, p^{(k)}}$.
    \State \hspace{0.5cm} Define $f^{(k)}(z) = \Phi(n \rho_n^{(k)} p^{(k)} + 1, \sqrt{2p^{(k)}})$.
    \State \textbf{end for}
    \State Define the kernel density estimator in $\cG$ as $\hat{\mu} = \frac 1 N \sum_{k=1}^N \mu_{\rho_n^{(k)}, p^{(k)}}$.
    \State Define
    \begin{align}
    		\hat{f}(z) = \frac 1 N \sum_{k=1}^N \Phi(n \rho_n^{(k)} p^{(k)} + 1, \sqrt{2p^{(k)}}).
    \end{align}
    \State Construct a kernel density estimator given  $\lbrace \lambda^{(k)}_1 \rbrace_{k=1}^N \subset R^1$ where $f^{(k)}_{Silverman}(z) = \Phi(n \rho_n^{(k)} p^{(k)} + 1, \sqrt{h_N})$ such that $h_N$ is constructed with oracle knowledge of the variance of $\mu$ according to Silverman's rule of thumb,
	\begin{align}
		&\sigma^2_{oracle} = \frac 1 2 (2 p_1 + 2 p_2) + \frac 1 2 \left(  \left( n \omega_n p_1 \right)^2 + \left( n \omega_n p_2 \right)^2\right) - \left(\frac 1 2 \left( n \omega_n p_1 + n \omega_n p_2\right) \right)^2\\
		&h_N = \left(\left( \frac{4}{3}\right)^{\frac{1}{5}} N^{\frac{-1}{5}} \sigma _{oracle} \right)^2\label{eqn:Silver}\\
    		&f^{Silverman}(z) = \frac 1 N \sum_{k=1}^N \Phi(n \rho_n^{(k)} p^{(k)} + 1, \sqrt{h_N}).
    	\end{align}
	\State Plot $f^{true}(z), \hat{f}(z),$ and $f^{Silverman}(z)$.
  \end{algorithmic}
\end{algorithm}
Alg. \ref{alg:DetJKDE-ER} is used to explore the condition given by equation (\ref{eqn:CondKDE2}). Rewriting this condition yields,
\begin{align}
	\frac{2 \lambda^{(k)}_i}{n s^{(k)}_i} < \bm H_{ii}
\end{align}
which, for the simulation in this section, reduces to
\begin{align}
	2 p^{(k)} < h_N.
\end{align}
For this condition to be met for every $k$, it requires that
\begin{align}
	\underset{k = 1,..., N}{\max}2 p^{(k)} < h_N. \label{eqn:Crit}
\end{align}
Because $h_N$ monotonically decays in $N$ and $\underset{k = 1,..., N}{\max}2 p^{(k)}$ monotonically increases in $N$, there exists a finite critical value of $N$ where the condition is not met for all $N > N_{crit}$. For the particular choice of $\mu$, the critical value of $N$ is estimated to be $N_{crit} = 125$. 

The relation to equation (\ref{eqn:CondKDE2}) is that for all values of $N$ where (\ref{eqn:Crit}) is satisfied, equation (\ref{eqn:DetJ2}) in Alg. \ref{alg:DetJKDE} which defines the second moment of $J^{(k)}$, can be solved for every $k = 1,...,N$. The feasibility of determining the second moment of $J^{(k)}$ is visualized below where each value of $N$ that satisfies equation (\ref{eqn:Crit}) is shaded in green.

\begin{figure}[H]
	 \includegraphics[width= \textwidth]{./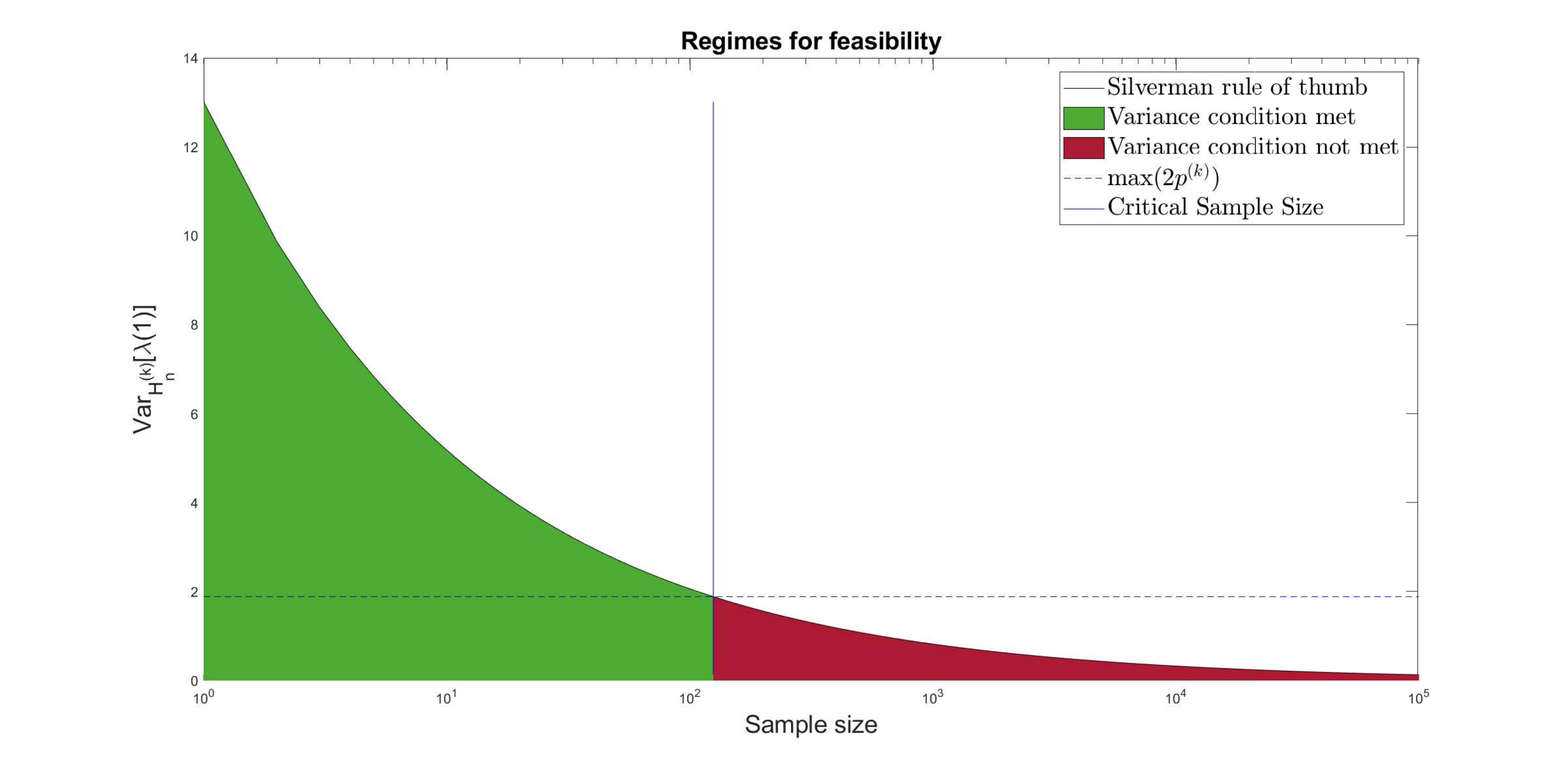} 
	  \caption{(Black) $h_N$: Silverman's rule of thumb as a function of sample size, $N$, when considering the oracle knowledge of the true variance, $\sigma^2$. (Green) This area depicts all sample sizes $N$ from $\mu$ where equation (\ref{eqn:DetJ2}) can be solved fro all $k$. (Red) This area depicts all sample sizes $N$ from $\mu$ where at least one $k$ exists such that equation (\ref{eqn:DetJ2}) cannot be resolved. (Blue dashed) An estimate of $\max_{k = 1,...,N} p^{(k)}$. (Blue solid) The critical sample size where there exists at least one $k^*$ such that $2 p^{(k)} > h_N$.}
	  \label{fig:feasibility}
\end{figure}

Fig. \ref{fig:feasibility} indicates that for large sample sizes and a finite $n$, the family of random parameter stochastic block models is a poor choice for kernel density estimation as defined in equation (\ref{eqn:KernDensEst}). In fact, Theorem \ref{thm:CCH} shows that any change in $\theta_k$ and $r_k(x)$ for any inhomogeneous $\text{\ER}$ random graph model affects both $\E{\lambda_i(\bA)}$ and $\Cov(Z_i,Z_j)$, see equations (\ref{eqn:CovCCH}) and (\ref{eqn:EstLargeEigs-CCH}). This observation supports a broader claim that this critical value of $N$ will exist for general inhomogeneous $\text{\ER}$ random graph models, not just stochastic block models, because the variance of the eigenvalues cannot decay to $0$ without changing the mean.

For the simulation presented in this section, Fig. \ref{fig:feasibility} suggests examining three different sample sizes to determine the quality of the proposed estimator found by Alg. \ref{alg:DetJKDE-ER}.
\subsubsection{Three different sample sizes}
\label{subsec:ExplorationCriticalN}
Three different samples sizes of graphs, $N_1 < N_2 = N_{crit}< N_3$, are drawn from $\mu$:
\begin{align}
	&M_1 = \lbrace \Gk \rbrace_{k=1}^{10}\\
	&M_2 = \lbrace \Gk \rbrace_{k=1}^{125}\\
	&M_3 = \lbrace \Gk \rbrace_{k=1}^{325}.
\end{align}
For each data set, we evaluate Alg. \ref{alg:DetJKDE-ER} and plot the results.
\begin{figure}[H]
	 \includegraphics[width= \textwidth]{./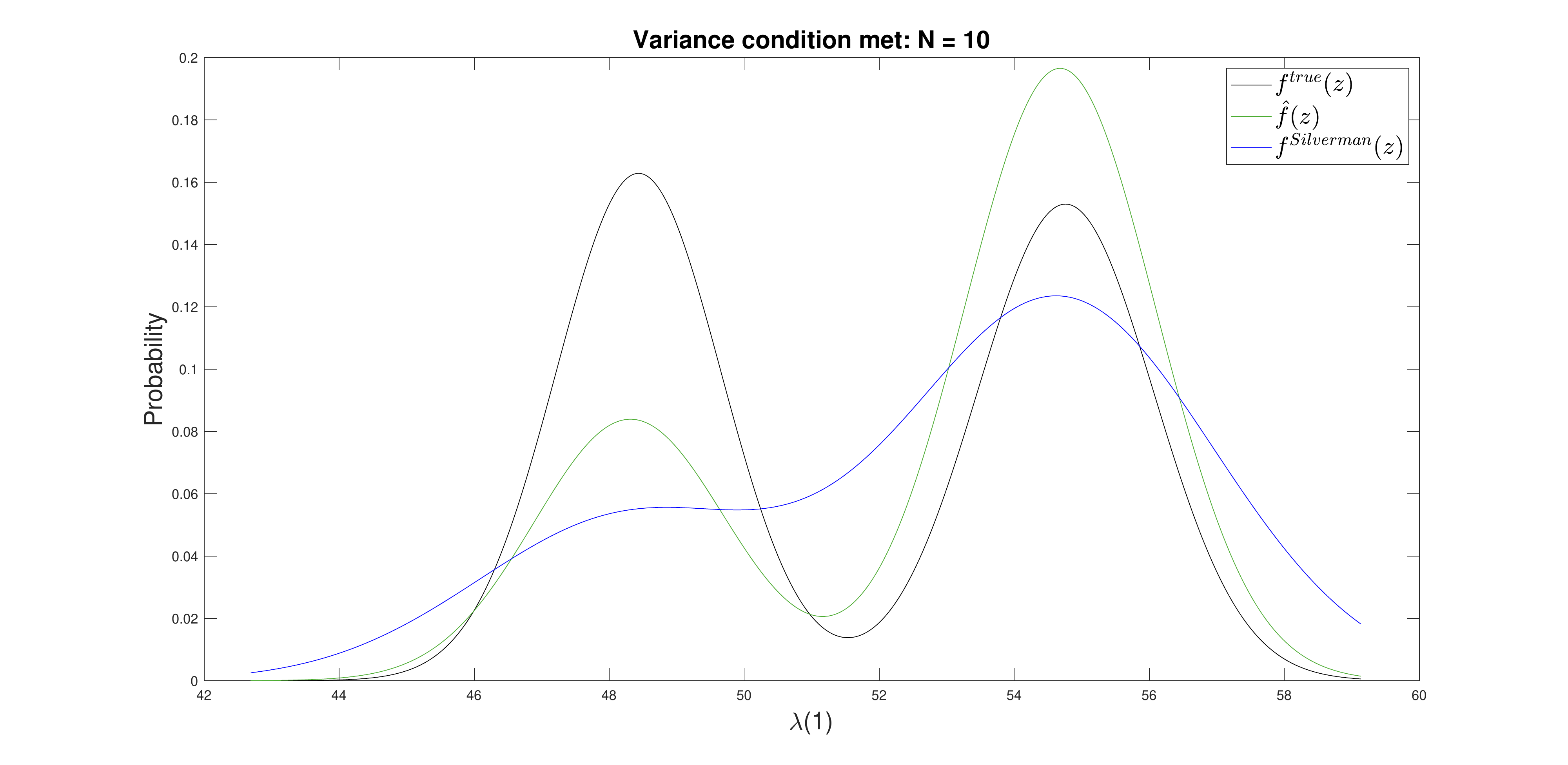} 
	  \caption{(Black) Oracle knowledge of the limiting behavior of the true distribution for $\mu$ given by $f(z)$. (Green) The limiting behavior of the eigenvalues when graphs are sampled according to the mixture model $\hat{\mu}$ given by $\hat{f}_{\delta}^{N}(z)$ . (Blue) The limiting behavior of the eigenvalues when performing kernel density estimation in $\R$ when the bandwidth is chosen according to Silverman's rule of thumb for the set $\lbrace \lambda_1^{(k)} \rbrace_{k=1}^N$, given by $\hat{f}^{N}_{Silverman}(z)$.}
	  \label{fig:Subcritical}
\end{figure}
When the condition on variance is met, defined by equation (\ref{eqn:Crit}), the regime is termed feasible due to the feasibility of solving equation (\ref{eqn:DetJ2}) in Alg. \ref{alg:DetJKDE}. Note however that the blue and green curves need not align even when resolving (\ref{eqn:DetJ2}); all that is guaranteed is a condition on the variance.

We next examine the behavior of the kernel density estimator at the critical value of $N^*$. By construction, the quantity
\begin{align}
	\max \left(2 p^{(k)} \right) - \min \left(2 p^{(k)}\right)
\end{align}
is small due to the small variation between $p_1$ and $p_2$ when defining the mixture model $\mu$. For this reason, the difference between $\hat{f}(z)$ and $f^{Silverman}(z)$ is expected to be minimal because $2 p^{(k)}$ is close to $h_{N_{crit}}$ for each value of $k$. Fig. \ref{fig:Critical} shows this approximate alignment.
\begin{figure}[H]
	 \includegraphics[width= \textwidth]{./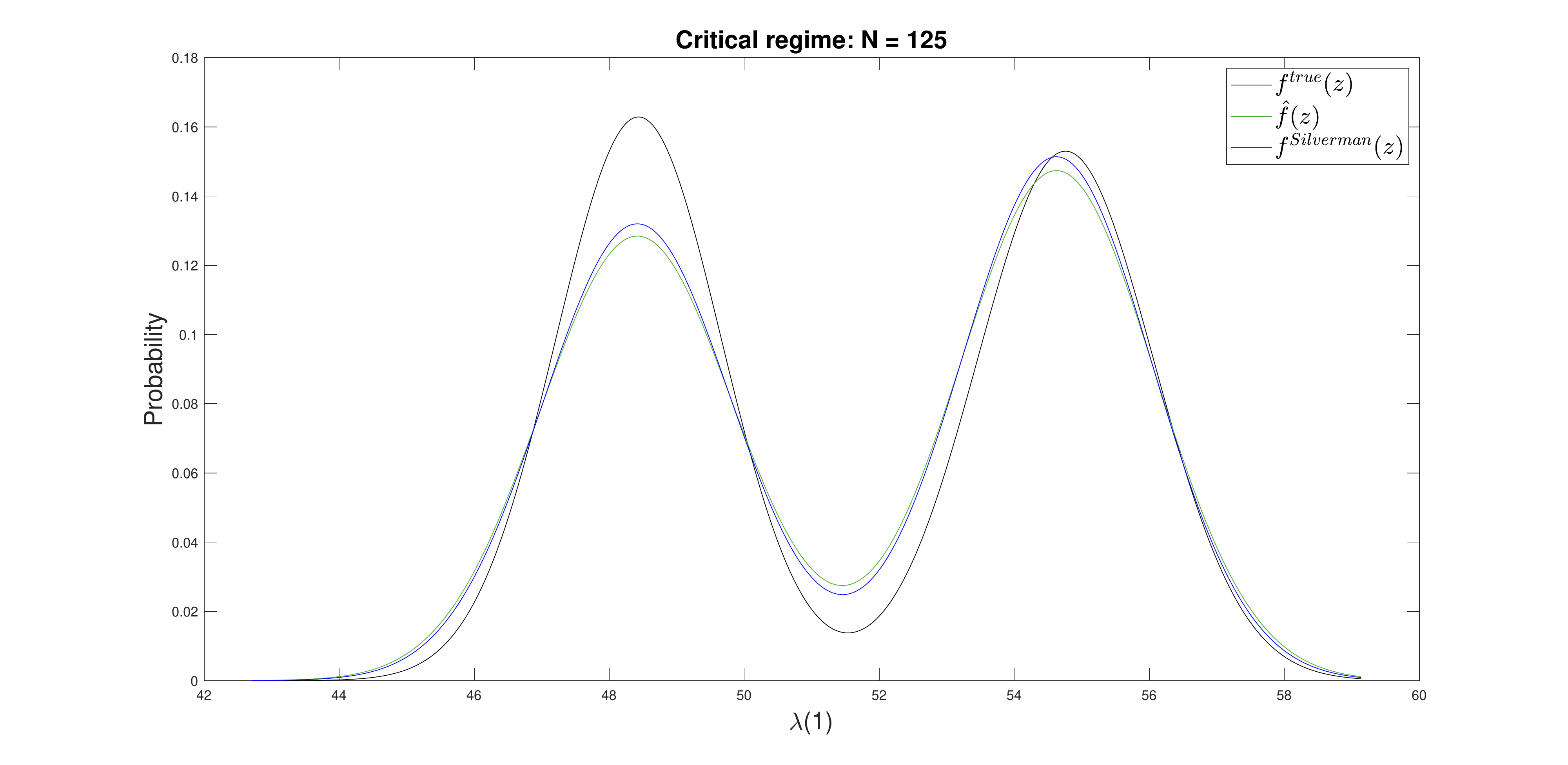} 
	  \caption{(Black) Oracle knowledge of the limiting behavior of the true distribution for $\mu$ given by $f(z)$. (Green) The limiting behavior of the eigenvalues when graphs are sampled according to the mixture model $\hat{\mu}$ given by $\hat{f}_{\delta}^{N}(z)$.  (Blue) The limiting behavior of the eigenvalues when performing kernel density estimation in $\R$ when the bandwidth is chosen according to Silverman's rule of thumb for the set $\lbrace \lambda_1^{(k)} \rbrace_{k=1}^N$, given by $\hat{f}^{N}_{Silverman}(z)$.}
	  \label{fig:Critical}
\end{figure}
Past this critical value of $N$, $\hat{f}(z)$ and $f^{Silverman}(z)$ are expected to be dissimilar again, as shown in Fig. \ref{fig:Supercritical}.
\begin{figure}[H]
	 \includegraphics[width= \textwidth]{./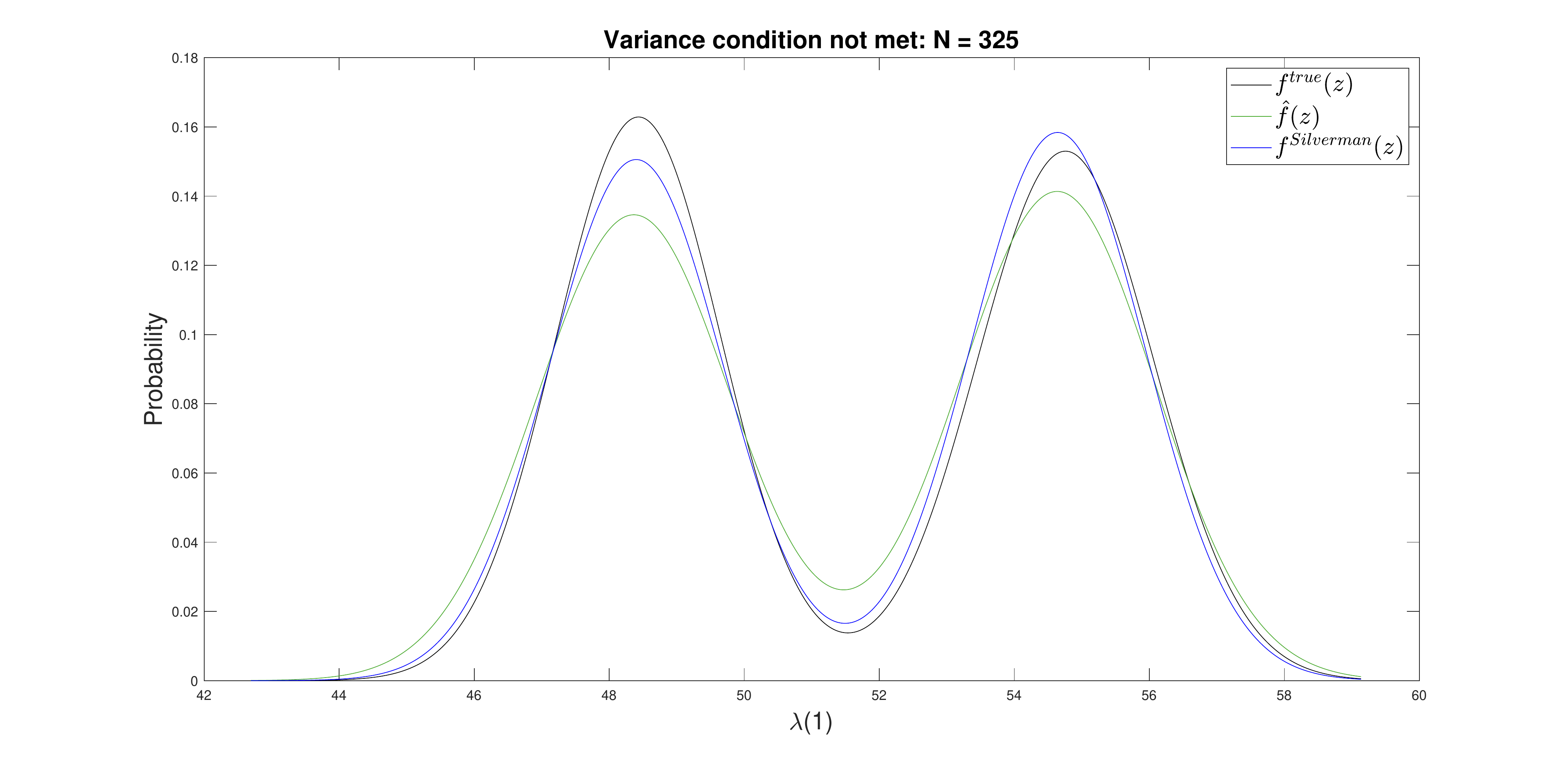} 
	  \caption{(Black) Oracle knowledge of the limiting behavior of the true distribution for $\mu$ given by $f(z)$. (Green) The limiting behavior of the eigenvalues when graphs are sampled according to the mixture model $\hat{\mu}$ given by $\hat{f}_{\delta}^{N}(z)$.  (Blue) The limiting behavior of the eigenvalues when performing kernel density estimation in $\R$ when the bandwidth is chosen according to Silverman's rule of thumb for the set $\lbrace \lambda_1^{(k)} \rbrace_{k=1}^N$, given by $\hat{f}^{N}_{Silverman}(z)$.}
	  \label{fig:Supercritical}
\end{figure}
\subsection{Primary school time varying networks}
\label{subsec:Primary}
The final data set of graphs considered is a time series of networks, collected via RFID tags in a French primary school \cite{Socio1, Socio2}. These networks exhibit dynamic structural behaviors temporally by the manner of merging communities throughout the day in addition to the random fluctuations of the individual interactions. The school is composed of five grades with two classes per grade for a total of 10 different classes and a fixed vertex set of size $n = 242$.

At every time $t$, a collection of edges is given that corresponds to the face-to-face contacts of the graph, and a new graph is recorded every $20$ seconds. We collect all connections within a time window of $45$ minutes (2700 seconds) to define a single graph and shift the window by one time step to collect the next graph. The graph $\Gk$ describes all connections made from $20 (k-1)$ seconds to $2700 + 20(k-1)$ seconds.

We expect a high degree of correlation from $\Gk$ to $G^{(k+1)}$ and yet, as is displayed in Fig. \ref{fig:VecsDifTimes}, there is notable dynamic behavior in the graphs as observed by the change in the largest eigenvectors. Many perspectives analyze the time series of graphs as a change point problem or perhaps anomaly detection. We ask a fundamentally different question: What is the distribution of the graphs during a time interval?

We consider only the initial 2 hours of the school day, 9:00 - 11:00 a.m. This time interval includes the behavior of students both in the classrooms and during the 10:30 - 11:00 a.m. recess and gives a data set of graphs with size $N = 225$, denoted as $M = \lbrace \Gk \rbrace_{k=1}^N$. 

Informed by the dynamics of the networks, such as the merging of communities during inter-classroom projects as well as the mixing of classes during recess, we suggest having a dynamic estimate for the geometry vector. Rather than taking $\bs$ to be constant, as was done in all prior experiments when recovering a random-parameter stochastic block model, we now estimate $\bs^{(k)}$ for each $k$. The work in \cite{GLM17, HST13, LS14,N06} showcases that the sum of the logarithms of the largest eigenvectors can be used to identify the geometry vector $\bs^{(k)}$ (see Fig. \ref{fig:VecsDifTimes} and Fig. \ref{fig:ChangePoints}). An estimate for the largest $K$ eigenvectors from each $\bA^{(k)}$ was determined by counting the number of eigenvalues greater than $\vert \lambda_{242}(\bA^{(k)}) \vert$, which is interpreted as an estimate of the number of extremal eigenvalues.

\begin{figure}[H]
\begin{minipage}{.5\textwidth}
	  \centering
	  \includegraphics[width= \textwidth]{./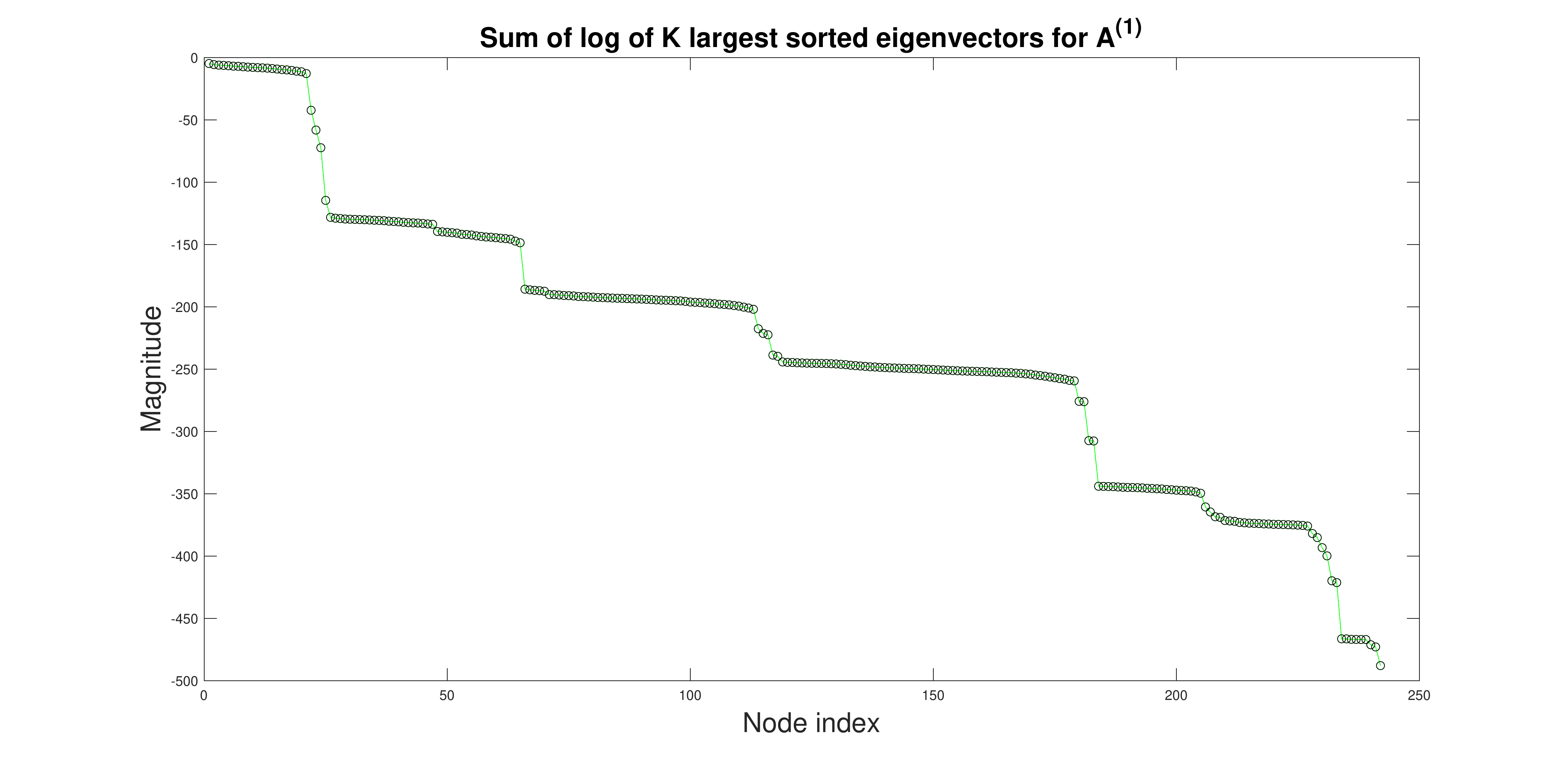}
	\end{minipage}%
	\begin{minipage}{.5\textwidth}
	\centerline{
	  \includegraphics[width = \textwidth]{./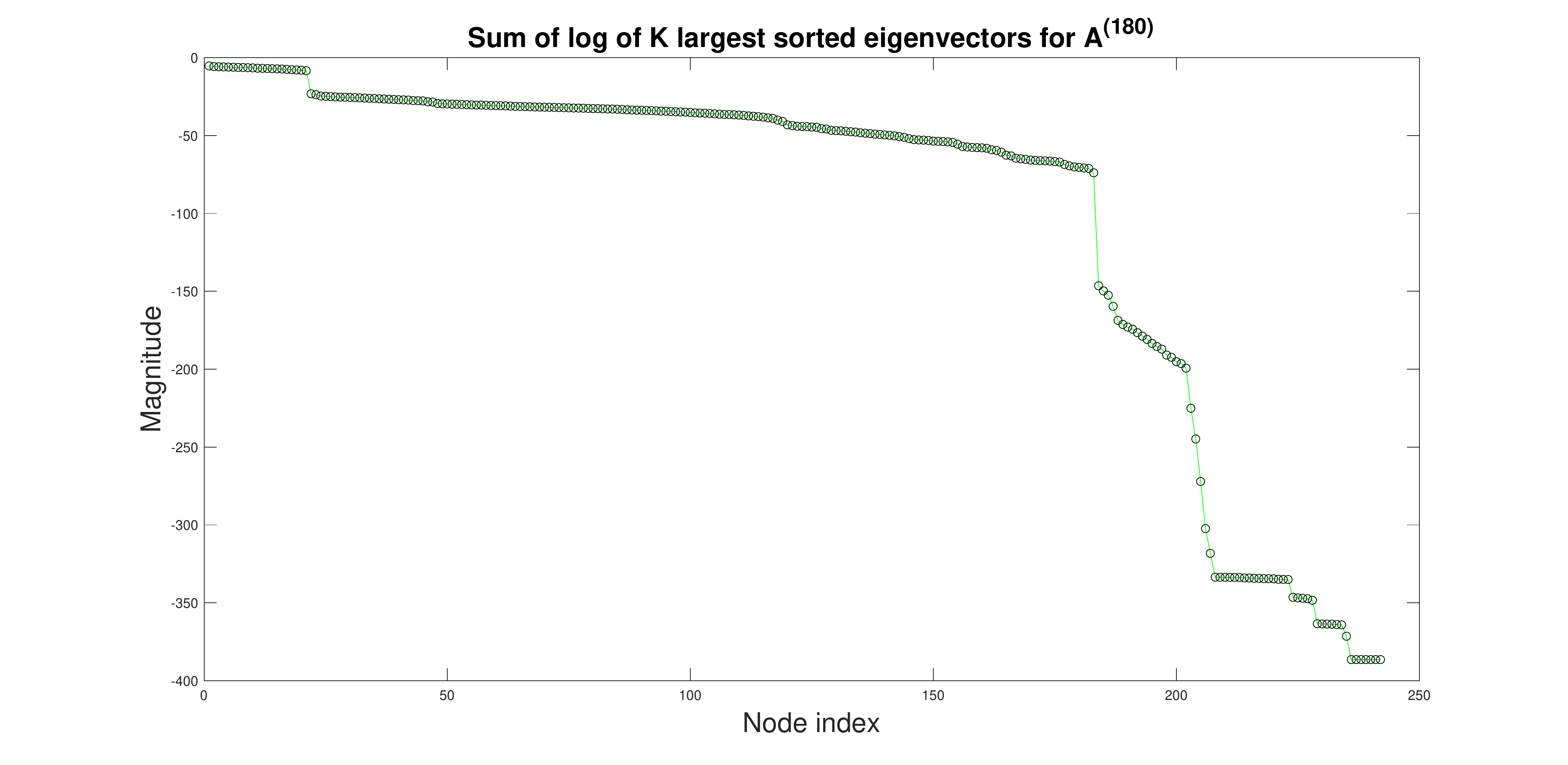}
	  }
	\end{minipage}
	\caption{(Left) A visualization of the sum of the logarithm of the absolute value of the $K$ largest sorted eigenvectors from $\bA^{(1)}$. (Right) A visualization of the sum of the logarithm of the absolute value of the $K$ largest sorted eigenvectors from $\bA^{(180)}$.}
	\label{fig:VecsDifTimes}
\end{figure}
\begin{figure}[H]
\begin{minipage}{.5\textwidth}
	  \centering
	  \includegraphics[width= \textwidth]{./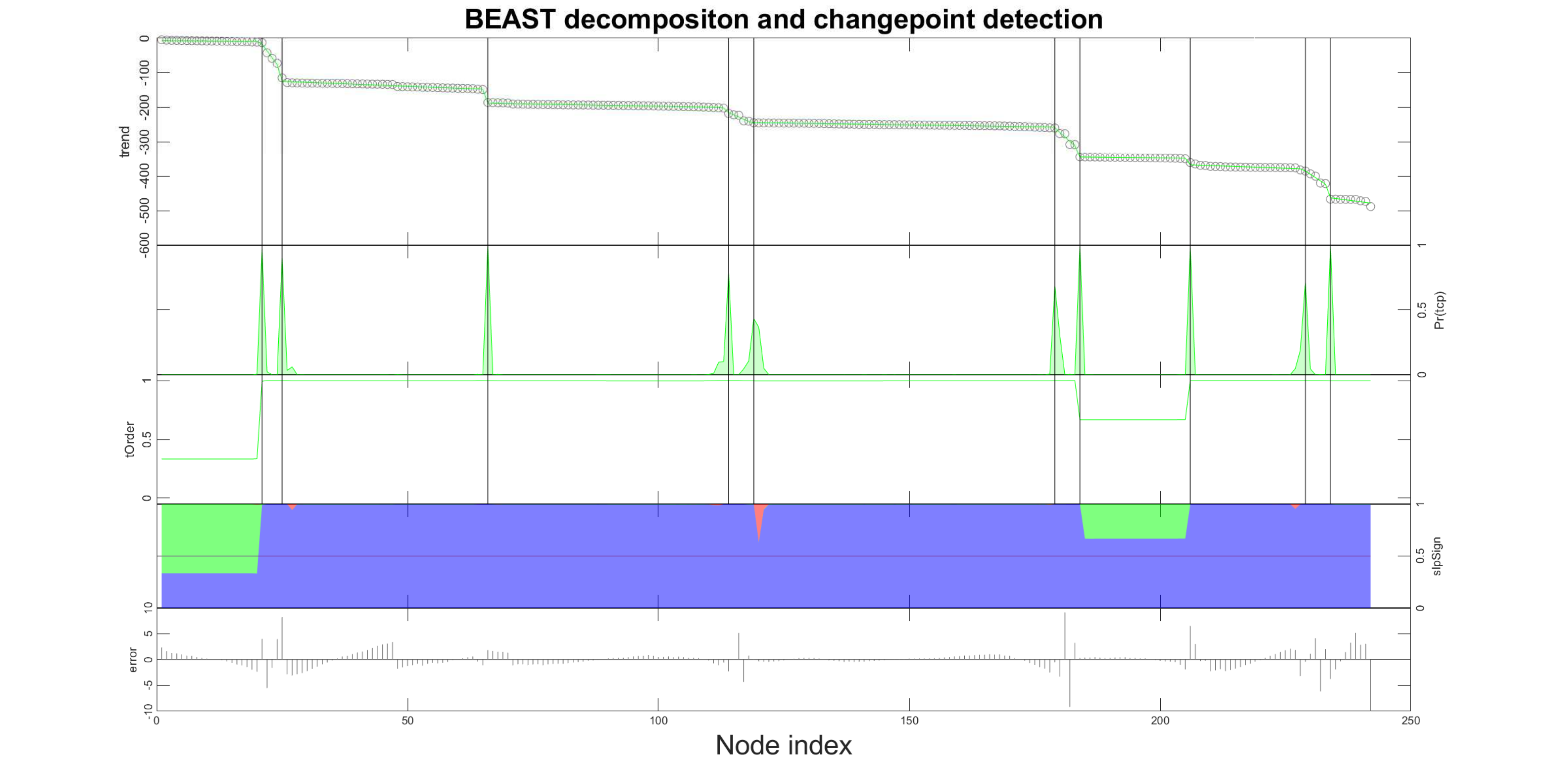}
	\end{minipage}%
	\begin{minipage}{.5\textwidth}
	\centerline{
	  \includegraphics[width = \textwidth]{./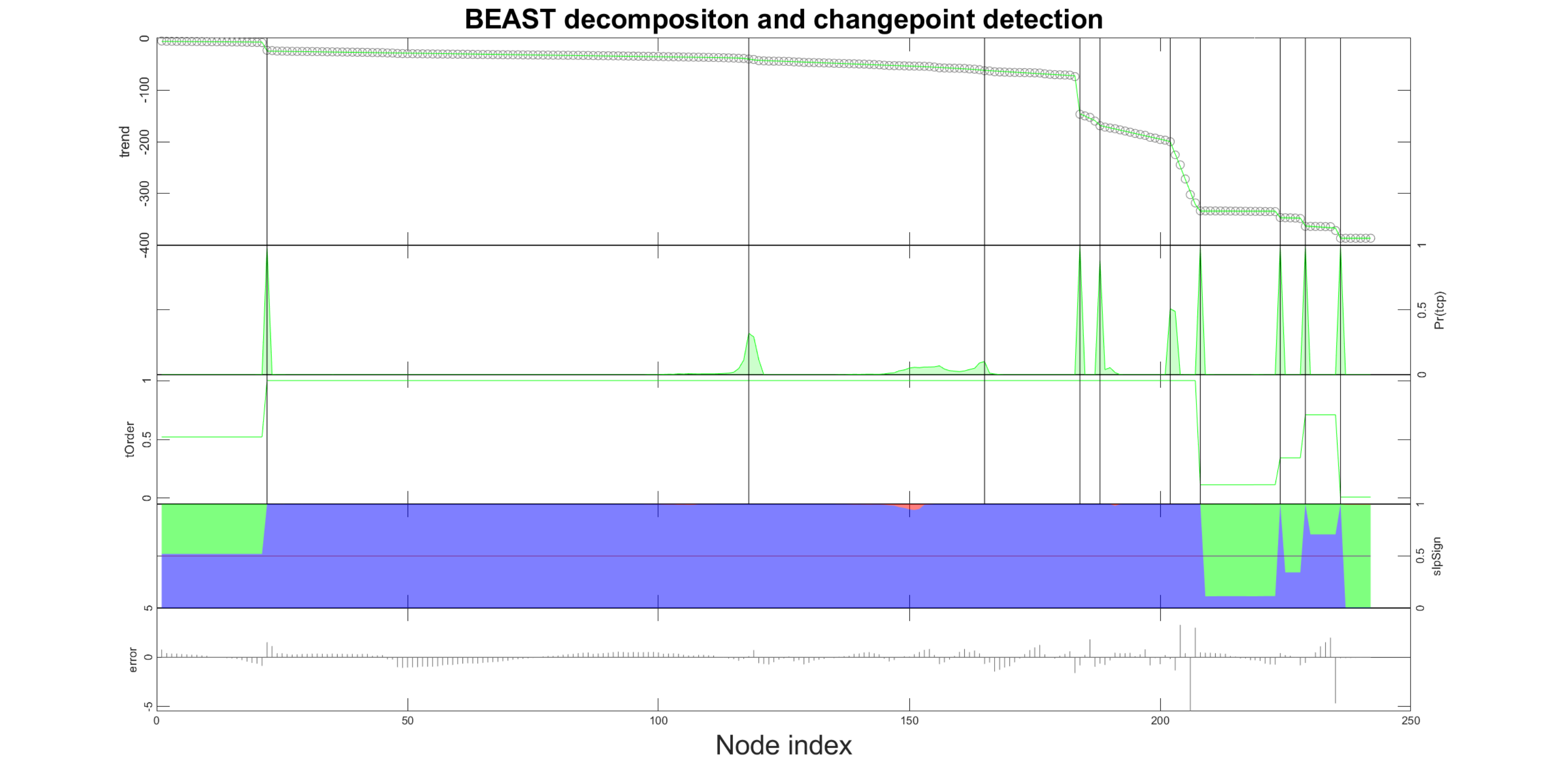}
	  }
	\end{minipage}
	\caption{An estimation of the change points in the plot in the sum of the largest $K$ eigenvectors for $\bA^{(1)}$ and $\bA^{(180)}$. The vertical lines indicate the locations of the changepoints, which are used to infer the communities}
	\label{fig:ChangePoints}
\end{figure}
	The geometry vector $\bs^{(k)}$ is determined by summing the absolute value of the largest $K$ eigenvectors and analyzing the change points.  Each change point is then interpreted as a new community within the graph $G^{(k)}$. A robust estimate for change point detection is depicted above which uses a Bayesian framework (see \cite{Beast}) though any state-of-the-art algorithm for change point detection should suffice. When two sequential change points are detected at $index_{i}$ and $index_{i+1}$ such that $\vert index_i - index_j \vert < \lceil \lambda_1^{(k)}\rceil$, we elect to take the average. This is due to the condition on the geometry, equation (\ref{eqn:GeomFeas}), which shows that the number of nodes in a community must be larger than the corresponding eigenvalue.
	
	Detecting the geometry for each graph $\Gk$ leads to a set of geometry vectors $\lbrace \bs^{(k)} \rbrace_{k=1}^N$. Before estimating a distribution of graphs, we now cluster the graphs $\Gk$ such that, within a cluster, the geometry vectors $\bs^{(k)}$ have the same number of non-zero entries. 
	
	For this particular data set, we find four distinct clusters, namely graphs with 4, 5, 6, and 7 communities respectively. We denote each subset of graphs by $M_c \subset M$, where $c$ denotes the number of communities. The size of each $M_c$ is given by the following.
\begin{align}
	&\vert M_4 \vert = 20\\
	&\vert M_5 \vert = 63\\
	&\vert M_6 \vert = 113\\
	&\vert M_7 \vert = 29\\
	&M = M_4 \cup M_5 \cup M_6 \cup M_7\\
	&\emptyset = M_i \cap M_j \quad \forall i \neq j
\end{align}

Here, we analyze only $M_5$ and $M_6$, the two clusters with a significant number of graphs, using Alg. \ref{alg:DetJKDE}, where we assume $J^{(k)}$ is a product of uniform probability measures, resulting in two probability measures, $\hat{\mu}_{M_5}$ and $\hat{\mu}_{M_6}$. We then sample $N = 500$ graphs according to $\hat{\mu}_{M_5}$ to determine a new set of graphs $\lbrace \Gk_{new} \rbrace_{k=1}^{57}.$ To compare the quality of the estimated probability measure $\hat{\mu}_{M_5}$, we compute the largest $5$ eigenvalues for each $\Gk \in M_5$ and each $\Gk_{new} \in \lbrace \Gk_{new} \rbrace_{k=1}^{500}$, perform kernel density estimation for the vectors of the eigenvalues in $\R^5$, and plot the results.
\begin{figure}[H]
	  \centering
	  \includegraphics[width= \textwidth]{./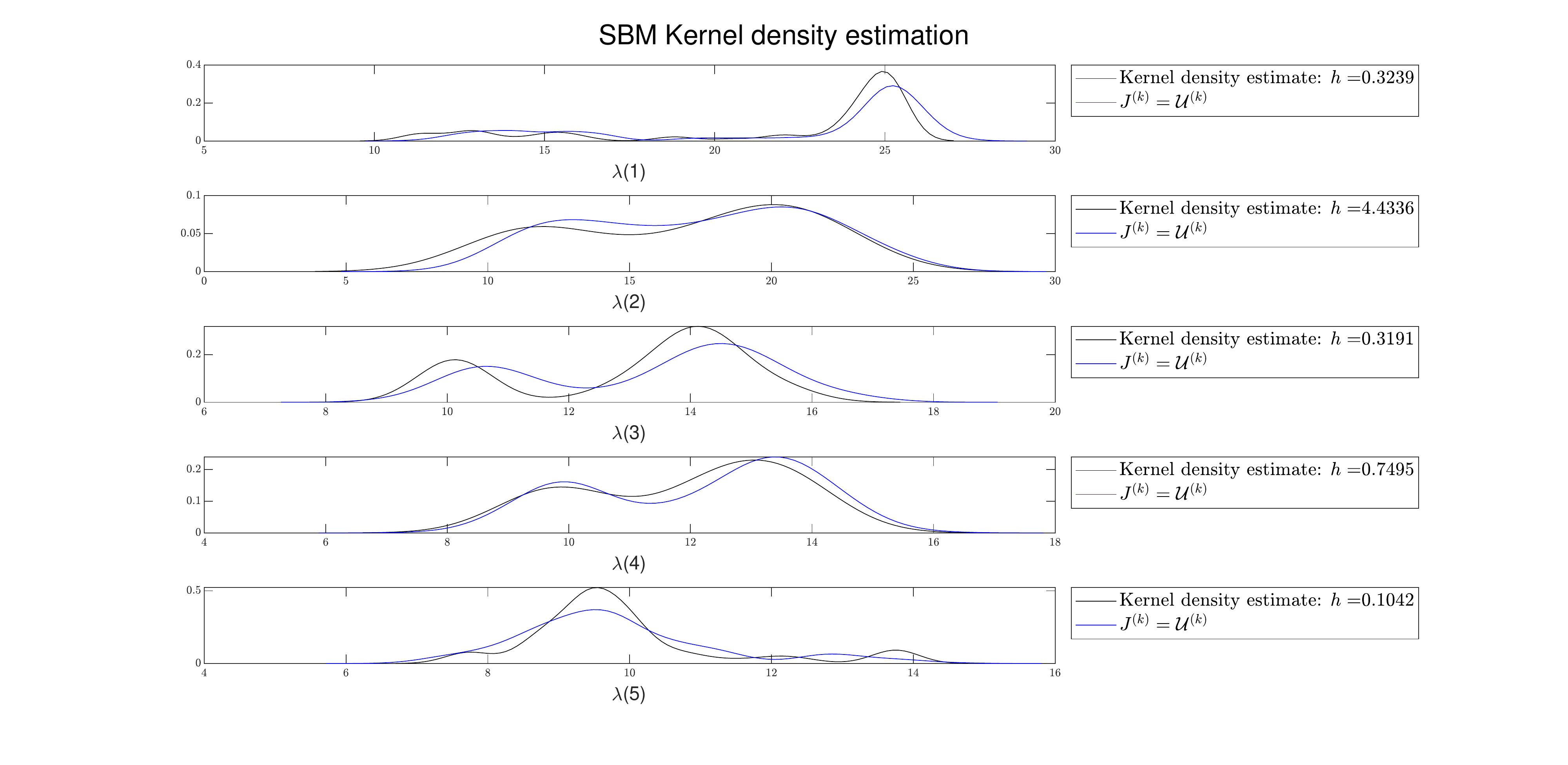}
	  \caption{(Black) A kernel density estimate of the sample eigenvalues of the adjacency matrices of graphs in $M_5$. (Blue) A kernel density estimate of the sample eigenvalues of the adjacency matrices of graphs in $\lbrace \Gk_{new} \rbrace_{k=1}^{|M_5|}$ where $\Gk_{new} \sim \hat{\mu}_5$.}
	   \label{fig:DensityEstimate5}
\end{figure}
For $M_6$, we perform the exact same procedure, except we consider the cluster of graphs with $6$ non-zero entries in the geometry vectors $\bs^{(k)}$.
\begin{figure}[H]
	  \centering
	  \includegraphics[width= \textwidth]{./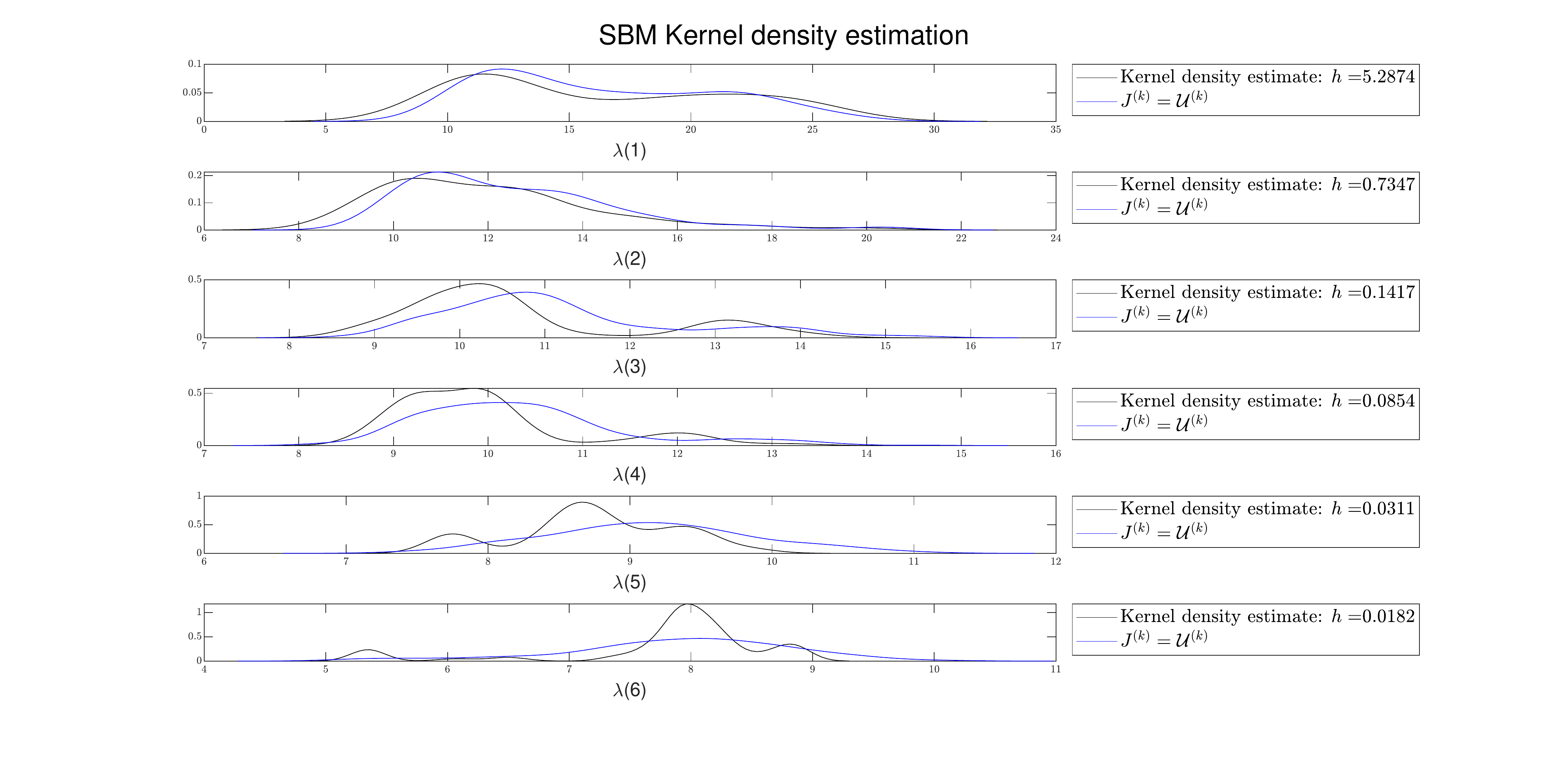}
	  \caption{(Black) A kernel density estimate of the sample eigenvalues of the adjacency matrices of graphs in $M_6$. (Blue) A kernel density estimate of the sample eigenvalues of the adjacency matrices of graphs in $\lbrace \Gk_{new} \rbrace_{k=1}^{|M_5|}$  $\Gk_{new} \sim \hat{\mu}_6$.}
	  \label{fig:DensityEstimate6}
\end{figure}

While the black curve need not resemble the true distribution of the eigenvalues well, this is the only baseline with which we may compare our results. Notably, the black curve defines a distribution in the space $\R^c$ but does not determine how to generate graphs with the corresponding distribution of eigenvalue. The primary advantage to our method is that the blue curve was generated by first sampling a graph and then computing the eigenvalues of the graph's adjacency matrix. The implication being that the probability measure 
\begin{align}
	\hat{\mu} = \frac 1 N \sum_{k=1}^N  \mu_{\omega_n^{(k)}, J^{(k)}, \epsilon^{(k)}, \bs^{(k)}}
\end{align}
found by Alg. \ref{alg:DetJKDE} distributes graphs such that the largest eigenvalues follow a distribution that is similar to the black curve.

The values of $h$ are presented to showcase that when $h$ is small, over-smoothing of the data is expected, which can be seen most clearly in Fig. \ref{fig:DensityEstimate6} for $\lambda(5)$ and $\lambda(6)$. In contrast, for larger values of $h$, such as in Fig. \ref{fig:DensityEstimate6} for $\lambda(1)$, oversmoothing is not expected because the condition given by equation (\ref{eqn:CondKDE2}) is met.
\section{Conclusions}
\label{sec:Conc}
	A preliminary step for the analysis of any graph-valued data set is the choice of metric, or measure of similarity, between graphs. When the distance is chosen with respect to spectral information, such as $d_{A_c}$, a wide class of generative models can be considered when fitting a data set, called inhomogeneous $\text{\ER}$ random graph models.  The spectral properties of graphs generated in this way have been well-studied, as evidenced by the work in \cite{BBK19, CCH20, FFHL19, T18}, among others. Two general results are observed for this class of models. First, there is low variability in the largest eigenvalues given a fixed function $f(x,y)$ that defines the inhomogeneous $\text{\ER}$ model. Second, the location and scale of the largest eigenvalues of the adjacency matrices of graphs generated in this way are dependent. It should be noted that here, the term inhomogeneous is used to characterize the edge probabilities of the graphs, and not the data sets of graphs generated in this manner. A sample set of graphs from such a model remains rather homogeneous.
	
	To mitigate the problem of low variance of the largest eigenvalues, when considering the class of stochastic block models, this manuscript introduced randomness in the parameter space via the distribution $J$ to better model heterogeneous data sets of graphs, $\sGk_{k=1}^N$. We have shown through Lemma \ref{lem:RelErrJMoments} that methods which estimate relative error in the parameters of stochastic block models can be translated to estimate relative error in the first and second moments of the distribution $J$. For parametric assumptions of $J$, the estimation of the first and second moments is typically sufficient to characterize $J$, but, as evidenced by the real-world data, a finite number of moments of $J$ may not characterize the distribution well. In these situations, a generalization of kernel density estimation for the space of graphs is introduced, and the limitations of such a perspective are explored as a function of sample size. It was shown experimentally that for finite $n$, we cannot expect to determine the ``bandiwdth'' for the estimator proposed in \ref{eqn:KernDensEst} for every sized sample $N$. 
	
	The limitations of the family of stochastic block models as kernels for kernel density estimation point to a fundamental limitation of modeling graphs when a Bernoulli process models the edge probabilities. To specify the limitations of this process we recall the results from \cite{FK81}, which were presented in equation (\ref{eqn:ExEqn}). The authors show that for an $\text{\ER}$ random graph with parameters $n$ and $p$, that
\begin{align}
	\lambda_1 \overset{d}{\to} N\left( (n-2)p + 1, 2p(1-p) + \cO(\frac{1}{\sqrt{n}})\right).
\end{align}
Recall that for an $\text{\ER}$, the entries of the adjacency matrix $\bm A$ is defined by the Bernoulli process
\begin{align}
	a_{ij} \sim \bern{p}.
\end{align}
The expected value and variance for each entry is given by
\begin{align}
	&\E{a_{ij}} = p\\
	&\Var(a_{ij}) = p(1-p).
\end{align}
Rewriting the results of \cite{FK81},
\begin{align}
	\lambda_1 \overset{d}{\to} N\left( (n-2)\E{a_{ij}} + 1, 2\Var(a_{ij}) + \cO(\frac{1}{\sqrt{n}})\right),
\end{align}
which shows clearly that while the edges of adjacency matrix, $\bm A$, are modeled by a one-parameter family such that the location and scale of each $a_{ij}$ is coupled, then coupling between the location and scale of the largest eigenvalues of a graph is also expected. These results are also seen in the inhomogeneous $\text{\ER}$ ensemble in equations (\ref{eqn:CovCCH}) and (\ref{eqn:EstLargeEigs-CCH}) of Theorem \ref{thm:CCH} where it can be inferred that any change in the eigenvalues or eigenfunctions of $L_f$, denoted $\theta_i$ and $r_i(x)$ respectively, change the values of equations (\ref{eqn:CovCCH}) and (\ref{eqn:EstLargeEigs-CCH}). It is for these reasons that a new model for graphs may need to be considered, one in which the location and scale of the entries of the adjacency matrix are perhaps independent.

\newpage
\appendix
\begin{center}
\textbf{Appendix}\\
\end{center}

We divide the appendix into five primary sections. First we introduce a classic theorem related to our work in \ref{app:Classic}. \ref{app:ApproxFV} is brief, in which Lemma \ref{lem:STFV_Cov} is proven. We next prove Corollary \ref{cor:FiniteMatrixRep} in \ref{app:FiniteRank}. \ref{app:FirstOrder} is devoted to the first-order computations of the eigenvalues and eigenvectors of stochastic block model graphs when $q = \epsilon p_{\min}$ where $p_{\min} = \min \bp$. \ref{app:RelErr} shows the computations for Lemma \ref{lem:RelErrJMoments}.
\section{Classical results}
\label{app:Classic}
\begin{theorem}[Weyl-Lidskii]\hfill \\
  \label{thm:SpecInc}
  Let $\bm H$ be a self-adjoint operator on a Hilbert space $\mathcal{H}$. Let $\bA$ be a bounded operator on $\mathcal{H}$ Let $\sigma(\bm H)$ and $\sigma(\bm H + \bA)$ denote the spectra of $\bm H$ and $(\bm H + \bA)$ respectively. Then
  \begin{equation}
\sigma(\bm H + \bA) \subset \left \lbrace \lambda : dist(\lambda,\sigma(\bm H)) \leq ||\bA|| \right \rbrace
\end{equation}
where $||\bA||$ denotes the operator norm of $\bA$.
\end{theorem}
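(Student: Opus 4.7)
I would prove this by a standard resolvent/Neumann-series argument, working contrapositively.

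The plan is as follows. Fix $\lambda \in \mathbb{C}$ and assume, for contradiction, that $\lambda \in \sigma(\bm H + \bA)$ while $\mathrm{dist}(\lambda, \sigma(\bm H)) > \|\bA\|$. In particular, $\lambda \notin \sigma(\bm H)$, so the resolvent $R_{\bm H}(\lambda) := (\bm H - \lambda I)^{-1}$ exists as a bounded operator on $\mathcal{H}$. First I would invoke the key consequence of self-adjointness: by the spectral theorem applied to $\bm H$ on $\mathcal{H}$, one has the resolvent identity
\begin{equation}
\| R_{\bm H}(\lambda) \| \;=\; \frac{1}{\mathrm{dist}(\lambda, \sigma(\bm H))},
\end{equation}
so our standing assumption gives $\| R_{\bm H}(\lambda) \| < 1/\|\bA\|$, hence $\| R_{\bm H}(\lambda)\, \bA \| \leq \| R_{\bm H}(\lambda)\| \cdot \|\bA\| < 1$.

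Next I would use the algebraic factorization
\begin{equation}
\bm H + \bA - \lambda I \;=\; (\bm H - \lambda I)\bigl(I + R_{\bm H}(\lambda)\, \bA \bigr).
\end{equation}
The second factor is invertible by the Neumann series, since we just showed $\|R_{\bm H}(\lambda)\, \bA\| < 1$, with inverse $\sum_{k \geq 0} (-R_{\bm H}(\lambda)\bA)^k$. The first factor is invertible by construction. Therefore $\bm H + \bA - \lambda I$ is a product of invertible bounded operators and is itself invertible with bounded inverse, so $\lambda \in \rho(\bm H + \bA)$, contradicting $\lambda \in \sigma(\bm H + \bA)$. This yields the claimed inclusion.

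The only real content beyond bookkeeping is the resolvent-norm identity $\|R_{\bm H}(\lambda)\| = 1/\mathrm{dist}(\lambda, \sigma(\bm H))$, which is the place where self-adjointness is used (for a general closed operator one only has the inequality $\geq$). I would either cite the spectral theorem for (possibly unbounded) self-adjoint operators directly, or, if a more self-contained argument is preferred, derive this identity from the functional calculus by applying the continuous function $z \mapsto 1/(z - \lambda)$ to $\bm H$ and noting that the operator norm of a bounded Borel function of a self-adjoint operator equals the essential supremum of $|f|$ on the spectrum. That step is the only nontrivial input; everything else is elementary Banach-algebra manipulation.
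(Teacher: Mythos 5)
Your proof is correct and is the textbook resolvent argument for this inclusion. The paper itself does not supply a proof: it simply cites Stewart and Sun (\emph{Matrix Perturbation Theory}) and observes that the bound is standard, so there is no in-paper argument to compare against; yours fills that gap in a self-contained way. Two small remarks. First, the resolvent identity $\|R_{\bm H}(\lambda)\| = 1/\mathrm{dist}(\lambda,\sigma(\bm H))$ is exactly where self-adjointness enters, as you correctly flag; for a non-normal operator one only gets $\geq$, and the theorem fails. Second, if $\bm H$ were unbounded you would want to be slightly careful about domains in the factorization $\bm H + \bA - \lambda I = (\bm H - \lambda I)(I + R_{\bm H}(\lambda)\bA)$, since $I + R_{\bm H}(\lambda)\bA$ does not preserve $D(\bm H)$; the cleaner route is to verify directly that $R_{\bm H}(\lambda)\bigl(I + \bA R_{\bm H}(\lambda)\bigr)^{-1}$ is a two-sided bounded inverse of $\bm H + \bA - \lambda I$, which maps into $D(\bm H)$ by construction. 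In the only place the paper invokes the theorem the operators are finite $c\times c$ symmetric matrices, so this subtlety is moot and your argument applies verbatim.
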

\begin{proof}
These are standard bounds that can be found in many good books on matrix perturbation theory (e.g., \cite{SS90}).
\end{proof}

\section{Approximately computing the sample Fr\'echet variance}
\label{app:ApproxFV}

This appendix serves to prove Lemma \ref{lem:STFV_Cov}, which is a consequence of the following theorem. Let $\lbrace \Gk \rbrace_{k=1}^N$ be a sample of graphs with sample Fr\'echet mean $G_N^*$ with adjacency matrix $\bA_N^*$. Let $\bar{\blamb}$ denote the arithmetic mean of the $c$ largest eigenvalues.
\begin{theorem}
  \label{thm:GeomEigsFM}
  $\forall \epsilon > 0$, $\exists n^* \in \N$ such that $\forall n > n^*$,
  \begin{equation}
    ||\sigma_c(\bA_N^*) - \bar{\blamb} ||_2 < \epsilon.
  \end{equation}
\end{theorem}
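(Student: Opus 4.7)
The first step in my plan is to reduce the Fr\'echet mean objective to a cleaner optimization. For any graph $G$ with adjacency matrix $\bA$, a direct expansion yields
\begin{equation*}
\frac{1}{N}\sum_{k=1}^N \|\sigma_c(\bA) - \blamb^{(k)}\|_2^2 = \|\sigma_c(\bA) - \bar{\blamb}\|_2^2 + \left(\frac{1}{N}\sum_{k=1}^N \|\blamb^{(k)}\|_2^2 - \|\bar{\blamb}\|_2^2\right),
\end{equation*}
where the parenthesized quantity is a constant independent of $G$. Therefore $G_N^*$ equivalently minimizes $\|\sigma_c(\bA)-\bar{\blamb}\|_2^2$ over $\cG$. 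The theorem thus reduces to showing that, for every $\epsilon>0$ and every sufficiently large $n$, there exists some $G\in\cG_n$ whose truncated spectrum lies within $\epsilon$ of the target $\bar{\blamb}$; the minimizer $G_N^*$ will then automatically satisfy the same bound.

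The plan for the approximation step is to construct an explicit witness graph $\tilde{G}_n\in\cG_n$. I would take a stochastic block model with $c$ equally-sized communities of size $\lfloor n/c\rfloor$, with $q=0$ (so that the graph is a disjoint union of Erd\H{o}s--R\'enyi components) and with within-community edge probabilities tuned so that the expected top-$c$ eigenvalues match $\bar\blamb$. Using the first-order expansion from Corollary \ref{cor:FiniteMatrixRep}, the leading-order expected eigenvalue contributed by the $i$-th block is $(n/c)p_i$, so the choice $p_i = c\bar\lambda_i/n$ aligns the means, which is a valid probability once $n$ is large enough that $c\bar\lambda_i/n \le 1$ (assumed as part of $\bar\lambda_i=o(n)$, a consequence of the density regime $n^{-2/3}\ll\rho_n\ll 1$).

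To pass from expectations to a deterministic graph, I would invoke the concentration half of Theorem \ref{thm:CCH}: the normalized fluctuation $\omega_n^{-1/2}(\lambda_i(\bA) - \E{\lambda_i(\bA)})$ converges in distribution to a mean-zero Gaussian, so $|\lambda_i(\bA^{\tilde G_n}) - \bar\lambda_i| = O_{\mathbb{P}}(\sqrt{\omega_n}) + \cO(\sqrt{\omega_n}+1/(n\omega_n))$, which tends to zero as $n\to\infty$. With probability tending to one, a realization $\tilde G_n$ thus satisfies $\|\sigma_c(\bA^{\tilde G_n})-\bar\blamb\|_2<\epsilon$; in particular, at least one such graph exists in $\cG_n$. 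Since $G_N^*$ is a minimizer and $\tilde G_n$ is a candidate, $\|\sigma_c(\bA_N^*)-\bar\blamb\|_2 \le \|\sigma_c(\bA^{\tilde G_n})-\bar\blamb\|_2 < \epsilon$, completing the argument.

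The main obstacle I anticipate is the construction and eigenvalue control of $\tilde G_n$: one must verify that the chosen SBM parameters $(p_i,q=0,\bs=(1/c,\dots,1/c))$ lie in a regime where Theorem \ref{thm:CCH} applies (in particular that the induced density $\omega_n$ satisfies $n^{-2/3}\ll\omega_n\ll 1$), and that both the mean-approximation error in \eqref{eqn:EstLargeEigs-CCH} and the fluctuation of order $\sqrt{\omega_n}$ can be made smaller than $\epsilon$ simultaneously. Handling the possible edge case where some $\bar\lambda_i$ fails to grow with $n$ (so that the corresponding block becomes too sparse) may require slightly adjusting the block sizes $s_i$ to keep $p_i$ in a valid range; once those technicalities are settled, the argument is a textbook probabilistic-existence argument combined with the minimality of $G_N^*$.
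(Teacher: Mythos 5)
Your overall strategy is sound and, in fact, closely mirrors what the paper does (via \cite{FM22}): Remark \ref{rem:geom_density} explicitly says the basis of the result is a disconnected stochastic block model with equal block sizes whose expected eigenvalues hit $\bar{\blamb}$, which is exactly your witness. The quadratic decomposition at the start is correct and reduces the problem to exhibiting one graph whose truncated spectrum is within $\epsilon$ of $\bar{\blamb}$, after which minimality closes the argument. The probabilistic-existence step (convergence in distribution $\Rightarrow$ fluctuation is $O_{\mathbb{P}}(\sqrt{\omega_n}) \to 0$ $\Rightarrow$ a favorable realization exists for large $n$) is also legitimate.

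The concrete gap is in the choice $p_i = c\bar\lambda_i/n$. This matches only the leading term $b^{*,(1)}_{ii} = \nu_i n\omega_n$ of Corollary~\ref{cor:FiniteMatrixRep} and ignores the second-order matrix $\bm B^{*,(2)}$, whose diagonal contribution is $\cO(1)$ and does \emph{not} vanish. For the disconnected case ($q=0$, $\bv_k = \bm e_k$) one computes $b^{*,(2)}_{ii} = 1$, so
\begin{equation*}
\E{\lambda_i(\bA^{\tilde G_n})} = \frac{n}{c}\,p_i + 1 + \cO\!\left(\sqrt{\omega_n} + \tfrac{1}{n\omega_n}\right),
\end{equation*}
consistent with the F\"uredi--Koml\'os ``$(n-2)p+1$'' formula quoted in equation~\eqref{eqn:ExEqn}. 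With your choice of $p_i$ you therefore land at $\bar\lambda_i + 1 + o(1)$, not at $\bar\lambda_i$, so your claimed bias bound ``$|\lambda_i(\bA^{\tilde G_n}) - \bar\lambda_i| = O_{\mathbb{P}}(\sqrt{\omega_n}) + \cO(\sqrt{\omega_n}+1/(n\omega_n))$'' is false: the bias term is $1+o(1)$, and the conclusion that the error can be pushed below an arbitrary $\epsilon$ fails. The fix is straightforward: solve $\lambda_i(\bm B^*) = \bar\lambda_i$ for $p_i$, which in the disconnected construction means taking $p_i = c(\bar\lambda_i - 1)/n$ (absorbing the $+1$). With that correction the residual bias really is $\cO(\sqrt{\omega_n}+1/(n\omega_n)) \to 0$ and the rest of your argument goes through. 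One additional point you should make explicit: for the spectrum of the disjoint union to have its top $c$ eigenvalues come one from each block, you need $\bar\lambda_c$ to dominate the bulk edge $2\sqrt{(n/c)p_i}\approx 2\sqrt{\bar\lambda_i}$ of every block, which holds in the prescribed density regime since $\bar\lambda_c \asymp n\omega_n \to \infty$, but this should be stated.
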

\begin{proof}
  The proof is in \cite{FM22}.
\end{proof}
This shows that the $c$ largest eigenvalues of the sample Fr\'echet mean graph are approximated well by the sample mean eigenvalues when the graphs considered are sufficiently large. We utilize this fact to approximately compute the value of the sample total Fr\'echet variance.
\begin{Lemma}[Lemma \ref{lem:STFV_Cov} from the main document]
\label{lem:STFV_Cov-app}
	Let $\lbrace \Gk \rbrace_{k=1}^N$ be a sample of graphs with sample Fr\'echet mean $G_N^*$ and total sample Fr\'echet variance $V_{N,tot}^*$. Let $\bar \blamb$ denote the arithmetic mean of the largest $c$ eigenvalues and $\hat{\Sigma}$ be the sample covariance matrix, then
	\begin{equation}
		\lim_{n \to \infty} \vert V_{N, tot}^* - \sum_{i=1}^c \hat{\Sigma}_{ii}\vert = 0
	\end{equation}
\end{Lemma}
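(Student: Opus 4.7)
The plan is to rewrite both quantities explicitly in terms of the vectors of eigenvalues and to exploit a standard bias--variance identity, after which the difference collapses to a single term that is controlled by the already available Theorem~\ref{thm:GeomEigsFM}.

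First I would expand the sample total Fr\'echet variance using the definition of $d_{A_c}$:
\begin{equation}
V_{N,tot}^{*} \;=\; \frac{1}{N-1}\sum_{k=1}^{N} \bigl\|\sigma_c(\bA_N^{*}) - \blamb^{(k)}\bigr\|_2^{2},
\end{equation}
and observe that the quantity we want to compare to is simply the trace of the sample covariance,
\begin{equation}
\sum_{i=1}^{c} \hat{\Sigma}_{ii} \;=\; \mathrm{tr}(\hat{\Sigma}) \;=\; \frac{1}{N-1}\sum_{k=1}^{N} \bigl\|\blamb^{(k)} - \bar{\blamb}\bigr\|_2^{2}.
\end{equation}

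Next I would apply the classical bias--variance style identity: for any vector $\bm{y}\in\R^c$ and any collection $\{\bm{x}^{(k)}\}$ with mean $\bar{\bm{x}}$,
\begin{equation}
\sum_{k=1}^{N}\|\bm{y} - \bm{x}^{(k)}\|_2^{2} \;=\; \sum_{k=1}^{N}\|\bm{x}^{(k)} - \bar{\bm{x}}\|_2^{2} + N\,\|\bm{y} - \bar{\bm{x}}\|_2^{2}.
\end{equation}
Applying this with $\bm{y} = \sigma_c(\bA_N^{*})$ and $\bm{x}^{(k)} = \blamb^{(k)}$, the difference of the two quantities above reduces to
\begin{equation}
V_{N,tot}^{*} - \sum_{i=1}^{c}\hat{\Sigma}_{ii} \;=\; \frac{N}{N-1}\,\bigl\|\sigma_c(\bA_N^{*}) - \bar{\blamb}\bigr\|_2^{2}.
\end{equation}

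Finally I would invoke Theorem~\ref{thm:GeomEigsFM}, which gives $\|\sigma_c(\bA_N^{*}) - \bar{\blamb}\|_2 \to 0$ as $n\to\infty$. Since $N$ is fixed, the prefactor $N/(N-1)$ is harmless, and taking $n\to\infty$ yields the claimed limit. There is no real obstacle here beyond bookkeeping: the only nontrivial input is Theorem~\ref{thm:GeomEigsFM} (proved in \cite{FM22}), which states that the $c$ largest eigenvalues of the sample Fr\'echet mean graph cluster around the arithmetic mean of the sample eigenvalues for large $n$; everything else is an elementary expansion of squared Euclidean norms.
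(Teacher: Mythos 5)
Your proof is correct and takes a genuinely cleaner route than the paper's. The paper argues by a term-by-term comparison of the two sums via the (reverse) triangle inequality applied to $\|\bm\lambda^{(k)} - \bar{\bm\lambda}\|_2$ and $\|\bm\lambda^{(k)} - \bm\lambda_N^*\|_2$, deducing that each pair of terms differs by at most $\epsilon$; unfortunately, as written, that argument has two imprecisions --- a triangle \emph{inequality} is displayed as an equality, and a bound on the difference of norms $\bigl| \,\|\cdot\|_2 - \|\cdot\|_2 \,\bigr| < \epsilon$ is silently applied to the difference of \emph{squared} norms, which requires the additional factor $\|\bm\lambda_N^* - \bm\lambda^{(k)}\|_2 + \|\bm\lambda^{(k)} - \bar{\bm\lambda}\|_2$ that can grow with $n$. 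Your approach sidesteps all of this: the bias--variance identity gives the \emph{exact} relation
\begin{equation}
V_{N,tot}^* \;-\; \sum_{i=1}^{c}\hat{\Sigma}_{ii} \;=\; \frac{N}{N-1}\,\bigl\|\sigma_c(\bA_N^*) - \bar{\bm\lambda}\bigr\|_2^2,
\end{equation}
with no inequalities and no term-by-term estimates, because the cross term vanishes by the defining property of the arithmetic mean. The limit then follows immediately from Theorem~\ref{thm:GeomEigsFM}, and one even reads off the sharper rate $\frac{N}{N-1}\epsilon^2$ rather than the paper's claimed $\frac{N}{N-1}\epsilon$. In short, you use the same single input (Theorem~\ref{thm:GeomEigsFM}) but replace the paper's inequality chain by an algebraic identity, which is both tighter and avoids the gap in the paper's squared-norm step.
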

\begin{lproof}
By Theorem \ref{thm:GeomEigsFM}, we have $\forall \epsilon > 0$, $\exists n^* \in \N$ such that for all $n > n^*$,
\begin{align}
	|| \sigma_c(\bA_N^*) - \bar{\blamb} ||_2 < \epsilon.
\end{align}
Observe the following,
\begin{align}
	\sum_{i=1}^c \hat{\Sigma}_{ii} &= \frac{1}{N-1} \sum_{k=1}^N (\blamb^{(k)} - \bar{\blamb})^T (\blamb^{(k)} - \bar{\blamb})\\
	&= \frac{1}{N-1} \sum_{k=1}^N ||\blamb^{(k)} - \bar{\blamb}||_2^2 \label{eqn:STFV-approx-sum}
\end{align}
Let $\blamb_N^* =  \sigma_c(\bA_N^*)$ and consider the sample total Fr\'echet variance.
\begin{align}
	V_{N, tot}^* &=  \frac{1}{N-1} \sum_{k=1}^N d_{A_c}^2(G_N^*,\Gk)\\
	&=\frac{1}{N-1} \sum_{k=1}^N || \sigma_c(\bA_N^*) - \sigma_c(\bA^{(k)}) ||_2^2\\
	&= \frac{1}{N-1} \sum_{k=1}^N || \blamb_N^* - \blamb^{(k)} ||_2^2. \label{eqn:STFV-approx}
\end{align}
Because the summation is finite, we only need to show that each term in the sums in equations (\ref{eqn:STFV-approx-sum}) and (\ref{eqn:STFV-approx}) are close.
\begin{align}
	||\blamb^{(k)} - \bar{\blamb}||_2  &= ||\blamb^{(k)} - \blamb_N^* + \blamb_N^*- \bar{\blamb}||_2\\
	&= ||\blamb^{(k)} - \blamb_N^*||_2 + ||\blamb_N^*- \bar{\blamb}||_2\\
	&\leq ||\blamb^{(k)} - \blamb_N^*||_2 + \epsilon.
\end{align}
Therefore,
\begin{align}
	\vert ||\blamb^{(k)} - \bar{\blamb}||_2 - ||\blamb^{(k)} - \blamb_N^*||_2 \vert &< \vert ||\blamb^{(k)} - \blamb_N^*||_2 + \epsilon - ||\blamb^{(k)} - \blamb_N^*||_2 \vert\\
	&= \epsilon.
\end{align}
As a consequence, the difference in the summations is
\begin{align}
	\left\vert V_{N,tot}^* - \sum_{i=1}^c \hat{\Sigma}_{ii} \right\vert &\leq \frac{1}{N-1} \sum_{k=1}^N \left\vert || \blamb_N^* - \blamb^{(k)} ||_2^2 - ||\blamb^{(k)} - \bar{\blamb}||_2^2 \right \vert\\
	&\leq \frac{1}{N-1} \sum_{k=1}^N \left\vert \epsilon \right \vert\\
	&= \frac{N}{N-1}\epsilon.
\end{align}
Therefore, the summations are arbitrarily close for sufficiently large graphs, $n > n^*$.
\end{lproof}

\section{Proof of Corollary \ref{cor:FiniteMatrixRep}}
\label{app:FiniteRank}
This appendix serves to prove Corollary \ref{cor:FiniteMatrixRep} which is a consequence of Theorem \ref{thm:CCH}. We restate both of these below for convenience. Let $f$ be a canonical stochastic block model kernel function and let $L_f$ be the associated linear integral operator with eigenfunctions $r_i(x)$ and eigenvalues denoted by $\theta_i = \lambda_i(L_f)$. Assume that $n^{-2/3} \ll \omega_n \ll 1$ and that $\lim_{n \to \infty} \omega_n = 0$. Because $\mu$ is always taken to be a stochastic block model kernel probability measure with parameters $\omega_n, \bm p,$ $q,$ and $\bs$, we denote the adjacency matrix of a random graph as $\bA_{\mu} = \bA_{\mu_{\omega_n, \bp, q, \bs}}$, where we have suppressed all the subscripts. 

\begin{theorem}[Chakrabarty, Chakraborty, Hazra 2020]
\label{thm:CCH-app}
\begin{align}
	\left(\omega_n^{-1/2}(\lambda_i(\bA_{\mu}) - \E{\lambda_i(\bA_{\mu})}[\mu]) \right) \overset{d}{\longrightarrow} (Z_i: 1 \leq i \leq c), \label{eqn:LimVar-CCH-app}
\end{align}
where the right hand side is a multivariate normal random vector in $\R^c$ with zero mean and
\begin{align}
	\Cov(Z_i, Z_j)  = 2 \int_0^1 \int_0^1 r_i(x) r_i(y) r_j(x) r_j(y) f(x,y) dx dy, \label{eqn:IntCov-app}
\end{align}
for all $1 \leq i,j \leq c$. The first order behavior of $\E{\lambda_i(\bA_{\mu})})$ is given by the following: For every $1 \leq i \leq c$,
 	\begin{equation}
		\E{\lambda_i(\bA_{\mu})} = \lambda_i(\bm B) + \mathcal{O}(\sqrt \omega_n + \frac{1}{n \omega_n}), \label{eqn:EstLargeEigs-CCH-app}
	\end{equation}
	where $\bm B$ is a $c \times c$ symmetric deterministic matrix defined by
	\begin{align}
	b_{j,l} = \sqrt{\theta_j \theta_l} n \omega_n \bm e^T_j \bm e_l + \theta_i^{-2} \sqrt{\theta_j \theta_l} (n \omega_n)^{-1} \bm e^T_j \E{(\bA_\mu - \E{\bA_\mu})^2} \bm e_l + \mathcal{O}(\frac{1}{n \omega_n}), \label{eqn:EstEigDisc-app}
	\end{align}
	and $\bm e_j$ is a vector with entries $\bm e_j(k) = \frac{1}{\sqrt n} r_j(\frac k n)$ for $1 \leq j \leq c$.
\end{theorem}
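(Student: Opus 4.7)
\textbf{Proof plan for Corollary \ref{cor:FiniteMatrixRep}.}

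The plan is to take Theorem \ref{thm:CCH-app} as a black box and reduce each of its ingredients $\{r_i, \theta_i, \bm e_j, \E[(\bA-\E\bA)^2]\}$ to expressions involving the finite $c\times c$ matrices $\bm M$ and $\bm M_f$. First I would diagonalize the integral operator $L_f$ on $L^2([0,1])$ for the canonical SBM kernel. Since $f(x,y)$ is constant on each $I_k \times I_\ell$ block (where $I_k = [\sum_{j<k} s_j,\sum_{j\le k} s_j)$), any $L^2$ eigenfunction must itself be piecewise constant on the partition $\{I_1,\dots,I_c\}$. Writing $r(x) = \sum_m a_m \mathbb{1}_{I_m}(x)/\sqrt{s_m}$ (the $1/\sqrt{s_m}$ is for $L^2$ normalization so that $\|r\|_2^2 = \sum_m a_m^2$), the eigenequation $L_f r = \theta r$ reduces to the finite matrix equation $\bm M \bm a = \theta \bm a$, where $M_{m\ell} = \sqrt{s_m s_\ell}\, f_{m\ell}$ — exactly the matrix $\bm M$ of \eqref{eqn:FiniteMatrices}. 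Thus $\theta_k = \nu_k$ and the $k$-th eigenfunction is $r_k(x) = \sum_m (\bm v_k)_m \mathbb{1}_{I_m}(x)/\sqrt{s_m}$.

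Next I would substitute this formula into the covariance integral \eqref{eqn:IntCov-app}. Because $\mathbb{1}_{I_m}(x)\mathbb{1}_{I_{m'}}(x) = \delta_{mm'}\mathbb{1}_{I_m}(x)$, only the diagonal pairs survive, and the double integral collapses into $2\sum_{m,\ell} (\bm v_i)_m (\bm v_j)_m (\bm v_i)_\ell (\bm v_j)_\ell f_{m\ell}$ once we use $\int\int \mathbb{1}_{I_m}(x)\mathbb{1}_{I_\ell}(y) f(x,y)\,dx\,dy = s_m s_\ell f_{m\ell}$ and cancel with the $1/\sqrt{s_m s_\ell}$ factors. Recognizing $(f_{m\ell}) = \bm M_f$ yields \eqref{eqn:FirstOrderVar} directly.

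For the expected-eigenvalue expansion I would work on \eqref{eqn:EstEigDisc-app} term by term. The first term reads $\sqrt{\theta_j\theta_l}\, n\omega_n\, \bm e_j^T \bm e_l$, where $(\bm e_j)_k = r_j(k/n)/\sqrt n$. Because $r_j$ is piecewise constant and the boundaries of the $I_m$ intervals differ from the grid $\{k/n\}$ at only $O(c)$ points, $\bm e_j^T \bm e_l = \sum_m (\bm v_j)_m (\bm v_l)_m n_m/(n s_m) = \delta_{jl} + O(1/n)$. This recovers $\bm B^{*,(1)}$ of \eqref{eqn:defB1}. The second term requires computing $\E[(\bA_\mu - \E\bA_\mu)^2]$: edge independence forces this matrix to be diagonal with entry $\sum_r \omega_n f(p/n, r/n)(1 - \omega_n f(p/n,r/n))$ at position $(p,p)$. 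For $\omega_n \to 0$ this equals $n\omega_n (L_f \mathbf{1})(p/n) + O((n\omega_n)^2/n)$. Expanding the constant function $\mathbf 1 = \sum_k \alpha_k r_k$ with $\alpha_k = \int r_k = \sum_w \sqrt{s_w}(\bm v_k)_w$ gives $(L_f \mathbf 1)(x) = \sum_k \alpha_k \nu_k r_k(x)$. Plugging this back and converting the discrete sum $\sum_p (\cdot)/n$ into $\int (\cdot)\,dx + O(1/n)$ via the same Lipschitz/Riemann argument yields $\int r_j r_l r_k\,dx = \sum_m (\bm v_j)_m (\bm v_l)_m (\bm v_k)_m/\sqrt{s_m}$. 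Multiplying by the prefactor $\theta_i^{-2}\sqrt{\theta_j\theta_l}(n\omega_n)^{-1}$ and assembling yields exactly \eqref{eqn:defB2}.

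Finally I would collect error terms. The CCH bound $\mathcal O(\sqrt{\omega_n} + 1/(n\omega_n))$ already dominates the $O(1/n)$ discretization error in $\bm e_j^T \bm e_l$, so replacing the discretized matrix $\bm B$ by $\bm B^*$ changes $\lambda_i(\bm B)$ by at most $\mathcal O(1/n)$ via Theorem \ref{thm:SpecInc} (Weyl–Lidskii). Under the standing assumption $n^{-2/3}\ll \omega_n\ll 1$ one has $1/(n\omega_n)\ll \sqrt{\omega_n}$, so the combined error collapses to $\mathcal O(\sqrt{\omega_n})$ in \eqref{eqn:FirstOrderMean}. The main obstacle is bookkeeping the boundary-effect errors when the $I_m$ interval endpoints fall between grid points $k/n$ and $(k+1)/n$: I need to argue that such misalignment produces only an $O(c/n)$ correction in each Riemann sum and that this is dwarfed by $\sqrt{\omega_n}$; everything else is essentially algebraic manipulation of the piecewise-constant structure.
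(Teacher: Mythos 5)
You were asked to address Theorem \ref{thm:CCH-app}, but your proposal explicitly takes that theorem as a black box and instead sketches a proof of Corollary \ref{cor:FiniteMatrixRep}. These are distinct statements. Theorem \ref{thm:CCH-app} is an imported result --- a multivariate central limit theorem for the outlier eigenvalues of sparse inhomogeneous $\text{\ER}$ random graphs, together with a first-order expansion of the expected eigenvalues in terms of discretized eigenfunction overlaps with $\E{(\bA_\mu - \E{\bA_\mu})^2}$. Establishing it requires the random-matrix perturbation and Gaussian-approximation machinery developed in \cite{CCH20}; none of that appears in, or is rederivable from, your proposal. The paper's ``proof'' of Theorem \ref{thm:CCH-app} is accordingly a citation (a compilation of Theorems 2.3 and 2.4 of \cite{CCH20}), not an argument, and your proposal does not supply a substitute derivation of that limit theorem.

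As a sketch of Corollary \ref{cor:FiniteMatrixRep}, your plan is in fact sound and tracks the paper's own argument in \ref{app:FiniteRank} closely: reducing the $L_f$ eigenproblem to the $c\times c$ matrix $\bm M$ and reading off $\theta_k=\nu_k$ and $r_k(x_i^*)=\bm v_k(i)/\sqrt{s_i}$ is exactly Lemma \ref{lem:eigFuncEigVec}; collapsing the covariance double integral via disjoint-support indicators matches equation (\ref{eqn:ThirdResult}) of Lemma \ref{lem:ApproxEigsSimp-M}; and the Riemann/Lipschitz replacement of the discretized vectors $\bm e_j$ by the eigenfunctions, with $\cO(1/n)$ boundary bookkeeping dominated by $\sqrt{\omega_n}$, is precisely the content of Theorem \ref{thm:ApproxEigsSimp-app}. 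So if the corollary had been the target, this would be essentially the paper's route. But as a response to the statement you were actually given --- the theorem, not the corollary --- the proposal answers the wrong question, which is a genuine gap.
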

\begin{proof}
	This is a compilation of Theorems 2.3 and 2.4 from \cite{CCH20}.
\end{proof}
Define the matrices
\begin{align}
		\bm M = 
	\begin{bmatrix} 
		s_1 p_1 & \sqrt{s_1 s_2} q & \dots & \sqrt{s_1 s_c} q\\
		\sqrt{s_2 s_1} q & s_2 p_2 & \dots & \sqrt{s_2 s_c} q\\
		\vdots &  \vdots & \ddots & \vdots \\
		\sqrt{s_c s_1} q & \sqrt{s_c s_2} q & \dots & s_c p_c
	 \end{bmatrix} \quad 
	 	\bm M_{f} = 
	\begin{bmatrix} 
		 p_1 & q & \dots &  q\\
		 q & p_2 & \dots & q\\
		\vdots &  \vdots & \ddots & \vdots \\
		 q &  q & \dots &  p_c
	 \end{bmatrix} \label{eqn:FiniteMatrices-app}
\end{align}
where $\nu_k$ and $\bm v_k$ are the eigenvalues and eigenvectors of $\bm M$ respectively. 
\begin{corollary}
\label{cor:FiniteMatrixRep-app}
\begin{align}
	\left(\omega_n^{-1/2}(\lambda_i(\bA_{\mu}) - \E{\lambda_i(\bA_{\mu})}[\mu]) \right) \overset{d}{\longrightarrow} (Z_i: 1 \leq i \leq c), \label{eqn:LimVar-app}
\end{align}
where the right hand side is a multivariate normal random vector in $\R^c$ with zero mean and
\begin{align}
	\Cov(Z_i, Z_j)  = 2 \left( \bm v_k .* \bm v_j \right)^T \bm M_f \left( \bm v_k .* \bm v_j \right) \label{eqn:FirstOrderVar-app},
\end{align}
for all $1 \leq i,j \leq c$ and $.*$ denotes the component-wise product of the vectors.

The first order behavior of $\E{\lambda_i(\bA_{\mu})})$ is given by the following: For every $1 \leq i \leq c$,
 	\begin{equation}
		\E{\lambda_i(\bA_{\mu})} = \lambda_i(\bm B^*) + \mathcal{O}(\sqrt{\omega_n}) \label{eqn:FirstOrderMean-app}
	\end{equation}
	where $\bm B^* =  \bm B^{*,(1)} + \bm B^{*,(2)}$ whose components are given as
\begin{align}
	\left( \bm B^{*,(1)} \right)_{j,l} &= b_{j,l}^{*,(1)} = \begin{cases}  \nu_j n \omega_n \quad \text{if } j = l \\ 0 \quad \text{if } j \neq l \end{cases} \label{eqn:defB1-app}\\
	\left( \bm B^{*,(2)} \right)_{j,l} &= b_{j,l}^{*,(2)} = \nu_i^{-2} \sqrt{\nu_j \nu_l}  \sum_{k=1}^c \nu_k  \sum_{m=1}^c \frac{1}{\sqrt{s_m}} \bm v_j(m) \bm v_l(m) \bm v_k(m)\sum_{w=1}^c \sqrt{s_w} \bm v_k(w). \label{eqn:defB2-app}
\end{align}
\end{corollary}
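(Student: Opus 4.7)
The plan is to exploit the piecewise-constant structure of the canonical SBM kernel $f(x,y;\bp,\bQ,\bs)$ to replace the continuous objects in Theorem \ref{thm:CCH-app} with exact finite-dimensional expressions involving the matrix $\bm M$. First I would identify the eigenpairs of $L_f$. Because $f$ is constant on each product block $I_m \times I_w$ with $I_m = [\sum_{k<m}s_k,\,\sum_{k\le m}s_k)$, the subspace of functions that are constant on each $I_m$ is invariant under $L_f$. Writing a candidate eigenfunction as $r_k(x) = \alpha_k(m)$ for $x\in I_m$ and substituting into $L_f r_k = \theta_k r_k$ gives $\theta_k \alpha_k(m) = \sum_w (\bm M_f)_{mw} s_w \alpha_k(w)$. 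Conjugating by $\bm S^{1/2} = \mathrm{diag}(\sqrt{s_m})$ and setting $\bm v_k(m) = \sqrt{s_m}\,\alpha_k(m)$ turns this into $\bm M \bm v_k = \theta_k \bm v_k$, and the $L^2$ normalization $\int r_k^2\,dx = \sum_m s_m \alpha_k(m)^2 = \|\bm v_k\|_2^2$ matches the Euclidean normalization. Hence $\theta_k = \nu_k$ and $r_k|_{I_m} = \bm v_k(m)/\sqrt{s_m}$. This exhausts the leading eigenvalues of $L_f$ because $f$ itself lies in the piecewise-constant subspace, so $L_f$ vanishes on its orthogonal complement.

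For (\ref{eqn:FirstOrderVar-app}) I would split the double integral in (\ref{eqn:IntCov-app}) into a sum over the $c^2$ blocks. On $I_m \times I_w$ the integrand is constant and equal to $\bm v_i(m)\bm v_j(m)\bm v_i(w)\bm v_j(w)(\bm M_f)_{mw}/(s_m s_w)$, while the Lebesgue measure of the block is $s_m s_w$. Summing gives the quadratic form $2(\bm v_i .* \bm v_j)^T \bm M_f (\bm v_i .* \bm v_j)$ exactly. For (\ref{eqn:FirstOrderMean-app}) I would analyze the two inner products in (\ref{eqn:EstEigDisc-app}) separately. The first, $\bm e_j^T \bm e_l = \tfrac{1}{n}\sum_{k=1}^n r_j(k/n) r_l(k/n)$, is a Riemann sum for $\langle r_j, r_l\rangle = \delta_{jl}$; since $r_j r_l$ is piecewise constant and only $O(1)$ grid points fall in any boundary cell, the discretization error is $\cO(1/n)$, producing $\bm B^{*,(1)}$. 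For the second inner product I would use that $a_{mu}$ and $a_{uw}$ are independent unless $\{m,u\} = \{u,w\}$, forcing $\E{(\bA_\mu - \E{\bA_\mu})^2}$ to be diagonal with $(m,m)$-entry $\sum_u \omega_n f(m/n,u/n)(1-\omega_n f(m/n,u/n)) = n\omega_n (L_f \mathbf{1})(m/n) + \cO(n\omega_n^2)$. The resulting inner product reduces by Riemann approximation to $n\omega_n\int r_j r_l (L_f \mathbf{1})\,dx$; expanding $L_f \mathbf{1} = \sum_k \nu_k \langle r_k,\mathbf{1}\rangle r_k$ with $\langle r_k,\mathbf{1}\rangle = \sum_w \sqrt{s_w}\,\bm v_k(w)$, and computing $\int r_j r_l r_k\,dx = \sum_m \bm v_j(m)\bm v_l(m)\bm v_k(m)/\sqrt{s_m}$, gives exactly $\bm B^{*,(2)}$ after multiplication by the prefactor $\nu_i^{-2}\sqrt{\nu_j\nu_l}(n\omega_n)^{-1}$.

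The main obstacle is the bookkeeping of error terms and verifying that the stated $\cO(\sqrt{\omega_n})$ in (\ref{eqn:FirstOrderMean-app}) absorbs all of them. The residual in Theorem \ref{thm:CCH-app} is $\cO(\sqrt{\omega_n} + 1/(n\omega_n))$; the two Riemann-sum steps contribute $\cO(1/n)$; and linearizing $(1 - \omega_n f)$ costs $\cO(n\omega_n^2)$, which becomes $\cO(\omega_n)$ after the prefactor $(n\omega_n)^{-1}$. Under the standing hypothesis $n^{-2/3}\ll\omega_n\ll 1$ we have $1/(n\omega_n) \ll \sqrt{\omega_n}$ and $1/n \ll \sqrt{\omega_n}$, so every contribution collapses into $\cO(\sqrt{\omega_n})$. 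Apart from this verification the argument is a direct substitution that leverages the fact that piecewise-constant eigenfunctions of $L_f$ diagonalize to eigenvectors of $\bm M$, so both claimed identities reduce to bookkeeping in the finite-dimensional eigenbasis.
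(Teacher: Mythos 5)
Your proposal follows essentially the same route as the paper: (i) replace the discrete inner products in Theorem \ref{thm:CCH-app} by their integral limits using the piecewise-constant structure of the SBM eigenfunctions; (ii) identify the eigenpairs $(\theta_k, r_k)$ of $L_f$ with the eigenpairs $(\nu_k, \bm v_k)$ of $\bm M$ via the weighting $\bm v_k(m) = \sqrt{s_m}\,r_k(x_m^*)$; and (iii) evaluate the resulting integrals block by block. Steps (ii) and (iii) are precisely Lemmas \ref{lem:eigFuncEigVec} and \ref{lem:ApproxEigsSimp-M}, and your conjugation-by-$\bm S^{1/2}$ argument is an equivalent formulation of the rank-one-decomposition argument the paper uses. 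The one place you go further than the paper's in-text argument is step (i): the paper delegates the reduction of the centered-second-moment term $\bm e_j^T \E{(\bA - \E{\bA})^2}\bm e_l$ to an intermediate result (Theorem \ref{thm:ApproxEigsSimp-app}) cited from \cite{FM22}, while you re-derive it self-containedly by observing that $\E{(\bA-\E{\bA})^2}$ is diagonal with entries $\sum_u \omega_n f(m/n,u/n)(1-\omega_n f(m/n,u/n))$, linearizing in $\omega_n$, and passing to the Riemann integral of $r_j r_l (L_f\mathbf{1})$. Your error accounting ($1/(n\omega_n) \ll \sqrt{\omega_n}$, $1/n \ll \sqrt{\omega_n}$, and $\omega_n \ll \sqrt{\omega_n}$ under $n^{-2/3} \ll \omega_n \ll 1$) is correct and matches the residual $\cO(\sqrt{\omega_n})$ in the corollary. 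In short, same mechanism, slightly more self-contained presentation of the step the paper imports from \cite{FM22}.
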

\begin{cproof}
	The proof is a consequence of Theorems \ref{thm:CCH-app} and \ref{thm:ApproxEigsSimp-app} along with Lemmas \ref{lem:eigFuncEigVec} and \ref{lem:ApproxEigsSimp-M}.
\end{cproof}
To connect the corollary to Theorem \ref{thm:ApproxEigsSimp-app}, we introduce the following theorem from \cite{FM22}, which shows that the terms of the matrix $\bm B$ defined element-wise by equation (\ref{eqn:EstEigDisc-app}) can be estimated by a similar matrix $\bm B^*$.
\begin{theorem}
 	\label{thm:ApproxEigsSimp-app}
 	For every $1 \leq i \leq c$,
 	\begin{equation}
		\E{\lambda_i(\bA_{\mu_{\omega_n f}})} = \lambda_i(\bm B^*) + \mathcal{O}(\sqrt \omega_n)
	\end{equation}
	where $\bm B^* =  \bm B^{*,(1)} + \bm B^{*,(2)}$ whose components are given as
\begin{align}
	\left( \bm B^{*,(1)} \right)_{j,l} &= b_{j,l}^{*,(1)} = \sqrt{\theta_j \theta_l} n \omega_n \int_0^1 r_j(x) r_l(x) dx = \begin{cases}  \theta_j n \omega_n \quad j = l \\ 0 \quad j \neq l \end{cases} \label{eqn:first}\\
	\left( \bm B^{*,(2)} \right)_{j,l} &= b_{j,l}^{*,(2)} = \theta_i^{-2} \sqrt{\theta_j \theta_l} \int_0^1 r_j(x) r_l(x) \int_0^1 f(x,y) dy dx. \label{eqn:second}
\end{align}
 \end{theorem}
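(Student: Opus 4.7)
The plan is to deduce Theorem~\ref{thm:ApproxEigsSimp-app} from Theorem~\ref{thm:CCH-app} by taking a continuum limit of the discretized expression (\ref{eqn:EstEigDisc-app}) for $\bm B$ and then invoking Weyl--Lidskii (Theorem~\ref{thm:SpecInc}) to translate the entrywise matrix error into an eigenvalue error. Since Theorem~\ref{thm:CCH-app} already gives
\begin{equation*}
\E{\lambda_i(\bA_\mu)} = \lambda_i(\bm B) + \cO\!\left(\sqrt{\omega_n} + \tfrac{1}{n\omega_n}\right),
\end{equation*}
and the regime $n^{-2/3} \ll \omega_n$ forces $\tfrac{1}{n\omega_n} = o(\sqrt{\omega_n})$, it suffices to show $\lambda_i(\bm B) = \lambda_i(\bm B^*) + \cO(\omega_n)$, which is itself $o(\sqrt{\omega_n})$.

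First I would handle the leading term of (\ref{eqn:EstEigDisc-app}). Note $\bm e_j^T \bm e_l = \tfrac{1}{n}\sum_k r_j(k/n)\, r_l(k/n)$. Because $f$ is the canonical SBM kernel, each eigenfunction $r_j$ is piecewise constant on the community partition, so this Riemann sum differs from $\int_0^1 r_j(x)\, r_l(x)\,dx = \delta_{jl}$ by at worst $\cO(1/n)$ (contributed by at most $c$ grid cells straddling community boundaries). Multiplying through by $\sqrt{\theta_j\theta_l}\,n\omega_n$ produces the entry $b^{*,(1)}_{j,l}$ of (\ref{eqn:first}) up to an entrywise error of $\cO(\omega_n)$.

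Next I would analyze the variance term. Edge independence in the SBM together with $A_{kk}=0$ makes $\E{(\bA-\E{\bA})^2}$ diagonal, with $(k,k)$ entry equal to $\sum_m \omega_n f(k/n,m/n)\bigl(1-\omega_n f(k/n,m/n)\bigr)$. Substituting into the bilinear form,
\begin{align*}
\frac{1}{n\omega_n}\,\bm e_j^T\, \E{(\bA-\E{\bA})^2}\,\bm e_l
&= \frac{1}{n}\sum_{k=1}^n r_j\!\left(\tfrac{k}{n}\right) r_l\!\left(\tfrac{k}{n}\right)\cdot \frac{1}{n}\sum_{m=1}^n f\!\left(\tfrac{k}{n},\tfrac{m}{n}\right) + \cO(\omega_n),
\end{align*}
where the $\cO(\omega_n)$ absorbs the correction from replacing $1-\omega_n f$ by $1$. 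Two successive Riemann-sum approximations, each accurate to $\cO(1/n)$ by the piecewise-constant structure of $f$ and the $r_j$'s, convert the right-hand side to $\int_0^1 r_j(x)\,r_l(x)\int_0^1 f(x,y)\,dy\,dx$, giving the entry $b^{*,(2)}_{j,l}$ of (\ref{eqn:second}). Combined with the explicit $\cO(\tfrac{1}{n\omega_n})$ remainder already in (\ref{eqn:EstEigDisc-app}), the total entrywise discrepancy between $\bm B$ and $\bm B^*$ is $\cO(\omega_n + \tfrac{1}{n} + \tfrac{1}{n\omega_n}) = \cO(\omega_n)$ in the stated regime.

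Applying Weyl--Lidskii to the $c\times c$ self-adjoint matrices $\bm B$ and $\bm B^*$ then bounds $|\lambda_i(\bm B) - \lambda_i(\bm B^*)|$ by $\|\bm B - \bm B^*\|_{\mathrm{op}}$, which, for a fixed-size matrix, is controlled by a constant multiple of the maximum entrywise error, hence $\cO(\omega_n) = o(\sqrt{\omega_n})$. Chaining with Theorem~\ref{thm:CCH-app} yields the announced $\cO(\sqrt{\omega_n})$ bound. The main obstacle is the variance-matrix step: one must verify carefully that the off-diagonal entries of $\E{(\bA-\E{\bA})^2}$ vanish (by edge independence and the vanishing diagonal of $\bA$) and that the $(1-\omega_n f)$ correction is comfortably absorbed into the final envelope. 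Everything else reduces to routine Riemann-sum bookkeeping, made especially clean by the piecewise-constant structure of SBM eigenfunctions.
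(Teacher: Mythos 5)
Your overall strategy matches what the paper does: it takes $\bm B^*$ to be the continuum limit of the discretized $\bm B$ from Theorem~\ref{thm:CCH-app}, invokes the piecewise-constant structure of SBM eigenfunctions to get $\cO(1/n)$ Riemann-sum convergence, and handles the variance term by noting that $\E{(\bm A - \E{\bm A})^2}$ is diagonal by edge independence and the zero diagonal of $\bm A$. Your application of Weyl--Lidskii to pass from entrywise error to eigenvalue error is also the right closing step.

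One slip in the bookkeeping: you claim the entrywise discrepancy $\cO\!\bigl(\omega_n + \tfrac 1 n + \tfrac{1}{n\omega_n}\bigr)$ collapses to $\cO(\omega_n)$ in the stated regime. That is false whenever $n^{-2/3} \ll \omega_n \ll n^{-1/2}$, since there $\tfrac{1}{n\omega_n} \gg \omega_n$; the hypothesis $\omega_n \gg n^{-2/3}$ only guarantees $\tfrac{1}{n\omega_n} = o(\sqrt{\omega_n})$, not $\tfrac{1}{n\omega_n} = \cO(\omega_n)$ (the latter would need $\omega_n \gtrsim n^{-1/2}$). The lapse is harmless for the theorem's conclusion: both $\omega_n$ and $\tfrac{1}{n\omega_n}$ are $o(\sqrt{\omega_n})$ throughout $n^{-2/3} \ll \omega_n \ll 1$, so Weyl--Lidskii still delivers $\bigl|\lambda_i(\bm B) - \lambda_i(\bm B^*)\bigr| = o(\sqrt{\omega_n})$, which chained with Theorem~\ref{thm:CCH-app} gives the asserted $\cO(\sqrt{\omega_n})$. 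You should simply replace the intermediate assertion ``$= \cO(\omega_n)$'' with ``$= o(\sqrt{\omega_n})$'' and the argument is clean.
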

\begin{proof}
	The proof is in \cite{FM22}. Notably, this is a minor modification to Theorem 2.4 in \cite{CCH20} when assuming that $\omega_n \to 0$. The proof relies on the fact that the terms in equation (\ref{eqn:EstEigDisc-app}) are a discretization of the quantities defined in equations (\ref{eqn:first}) and (\ref{eqn:second}), and that because the eigenfunctions are piecewise Lipschitz, the discretization converges at the rate $\cO(\frac 1 n)$.
\end{proof}

We show the following three equalities within this appendix;
\begin{align}
	&\theta_j n \omega_n = \nu_j n \omega_n \label{eqn:equality1}\\
	&\theta_i^{-2} \sqrt{\theta_j \theta_l} \int_0^1 r_j(x) r_l(x) \int_0^1 f(x,y) dy dx = \nu_i^{-2} \sqrt{\nu_j \nu_l}  \sum_{k=1}^c \nu_k  \sum_{m=1}^c \frac{1}{\sqrt{s_m}} \bm v_j(m) \bm v_l(m) \bm v_k(m)\sum_{w=1}^c \sqrt{s_w} \bm v_k(w) \label{eqn:equality2}\\
	&2 \int_0^1 \int_0^1 r_i(x) r_i(y) r_j(x) r_j(y) f(x,y) dx dy = 2 \left( \bm v_k .* \bm v_j \right)^T \bm M_f \left( \bm v_k .* \bm v_j \right). \label{eqn:equality3}
\end{align}

The structure of the proof is as follows,
\begin{enumerate}
	\item In Lemma \ref{lem:eigFuncEigVec}, we show two quantities: (1) the eigenfunctions of $L_f$ are defined by the components of the vector $\bm v$ and (2) $\theta_k$ is given by $\nu_k$.\\
	\item By relating the eigenfunctions to the components of the vectors, we show that the integrals in equations (\ref{eqn:IntCov-app}) and (\ref{eqn:second}) can be written in terms of the vectors $\bm v_k$. Equation (\ref{eqn:first}) is a straightforward consequence of part 2 in step 1 because there are effectively no integrals.
\end{enumerate}
\noindent \textbf{Step 1}
\begin{Lemma}
\label{lem:eigFuncEigVec}
Let $\nu_k$ and $\bm v_k$ be eigenvalues and eigenvectors of $\bm M$ respectively. Let $\theta_k$ and $r_k(x)$ be the eigenvalues and eigenfunctions of $L_f$ respectively. We then have the following,
\begin{align}
	&\theta_k = \nu_k \label{eqn:evals}\\
	&r_k(x_i^*) = \frac{\bm v_k(i)}{\sqrt{s_i}} \quad i = 1,...,c. \label{eqn:efuncs}
\end{align}
\end{Lemma}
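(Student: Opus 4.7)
The plan is to exploit the piecewise-constant structure of the canonical kernel $f$ to reduce the infinite-dimensional eigenproblem for $L_f$ to a finite-dimensional one, then massage that finite-dimensional eigenproblem (by a diagonal rescaling) into exactly the symmetric eigenproblem for $\bm M$.

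First I would observe that since $f$ is piecewise constant on the rectangles $I_i \times I_j$, where $I_i = \left[\sum_{k<i} s_k, \sum_{k \leq i} s_k\right)$, any eigenfunction of $L_f$ with nonzero eigenvalue is automatically piecewise constant on $\{I_i\}_{i=1}^c$. Indeed, if $L_f r = \theta r$ with $\theta \neq 0$, then $r = \theta^{-1} L_f r$, and for any $x \in I_i$,
\begin{equation}
(L_f r)(x) = \sum_{j=1}^c f_{ij} \int_{I_j} r(y)\, dy,
\end{equation}
where $f_{ij}$ denotes the value of $f$ on $I_i \times I_j$ (so $f_{ii} = p_i$ and $f_{ij} = q$ for $i \neq j$). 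The right-hand side depends on $x$ only through the index $i$, so $r$ is constant on each $I_i$. Write $r(x) = \sum_i a_i \mathbf{1}_{I_i}(x)$, so that $r(x_i^*) = a_i$ for any representative $x_i^* \in I_i$.

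Next I would substitute this representation into the eigenvalue equation. Since $\int_{I_j} r(y)\, dy = a_j s_j$, the eigenvalue equation $L_f r = \theta r$ reduces to
\begin{equation}
\sum_{j=1}^c f_{ij}\, s_j\, a_j = \theta a_i, \qquad i = 1, \dots, c.
\end{equation}
This is not symmetric in $(i,j)$, so I would symmetrize by the change of variables $b_i = \sqrt{s_i}\, a_i$. Dividing the $i$-th equation by $\sqrt{s_i}$ and writing $s_j a_j = \sqrt{s_j}\, b_j$, one obtains
\begin{equation}
\sum_{j=1}^c \sqrt{s_i s_j}\, f_{ij}\, b_j = \theta b_i,
\end{equation}
i.e., $\bm M \bm b = \theta \bm b$, where $\bm M$ is precisely the matrix in (\ref{eqn:FiniteMatrices-app}). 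Thus the nonzero eigenvalues of $L_f$ and $\bm M$ coincide: $\theta_k = \nu_k$, proving (\ref{eqn:evals}); and the corresponding eigenfunctions satisfy $a_i = b_i / \sqrt{s_i} = \bm v_k(i)/\sqrt{s_i}$, i.e., $r_k(x_i^*) = \bm v_k(i)/\sqrt{s_i}$, proving (\ref{eqn:efuncs}).

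Finally, I would briefly verify the normalization, since the statement implicitly selects the usual $L^2[0,1]$-normalized eigenfunctions. Assuming $\|\bm v_k\|_2 = 1$, one computes
\begin{equation}
\int_0^1 r_k(x)^2\, dx = \sum_{i=1}^c a_i^2 s_i = \sum_{i=1}^c \frac{\bm v_k(i)^2}{s_i}\, s_i = \|\bm v_k\|_2^2 = 1,
\end{equation}
so the correspondence also preserves normalization. The argument is essentially forced by the structure of $f$, and no step seems to present a real obstacle; the only subtlety is tracking the $\sqrt{s_i}$ factors carefully through the symmetrization, and noting that the kernel of $L_f$ (i.e., eigenfunctions with $\theta = 0$) does not enter the correspondence, since Theorem \ref{thm:CCH-app} only uses the top-$c$ eigenpairs, all of which have nonzero eigenvalues.
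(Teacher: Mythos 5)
Your proof is correct, and it takes a genuinely different route from the paper. The paper first cites (from Section~3 of \cite{CCH20}) the fact that each eigenfunction $r_k$ is piecewise constant on the blocks $S_i$, then works with the Mercer-type spectral representation $f(x,y)=\sum_k \theta_k\, r_k(x)\, r_k(y)$: it forms the vectors $\bm r_k^n$ with entries $\sqrt{s_i}\, r_k(x_i^*)$, shows $\langle \bm r_k^n, \bm r_j^n\rangle = \int_0^1 r_k r_j\,dx$ so these vectors are orthonormal, and then verifies $\bm M = \sum_k \theta_k\, \bm r_k^n (\bm r_k^n)^T$, which identifies $\bm v_k = \bm r_k^n$ and $\nu_k = \theta_k$ at once. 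You instead \emph{derive} the piecewise-constant structure for eigenfunctions with nonzero eigenvalue directly from $r = \theta^{-1} L_f r$, reduce the spectral equation $L_f r = \theta r$ to the finite (nonsymmetric) system $\sum_j f_{ij} s_j a_j = \theta a_i$, and symmetrize by the diagonal change of variables $b_i = \sqrt{s_i}\, a_i$ to land on $\bm M \bm b = \theta \bm b$. Your route is self-contained: it does not appeal to \cite{CCH20} for the block structure of the eigenfunctions and does not invoke the Mercer decomposition of $f$; everything follows by direct manipulation of the eigenvalue equation. The paper's route, in exchange, yields orthonormality of the discretized eigenvectors and the full diagonalization of $\bm M$ explicitly, which is convenient for the subsequent calculations in Lemma~\ref{lem:ApproxEigsSimp-M} where inner products among the $\bm v_k$ are used repeatedly. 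Both correctly establish $\theta_k = \nu_k$ and $r_k(x_i^*) = \bm v_k(i)/\sqrt{s_i}$; your closing remark restricting the correspondence to the nonzero part of the spectrum is the right caveat, since $L_f$ has rank $c$ and only the top-$c$ eigenpairs enter Theorem~\ref{thm:CCH-app}.
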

As shown in Section 3 of \cite{CCH20}, $r_k(x)$ is piecewise constant on $c$ different blocks. The values for each block are given by (\ref{eqn:efuncs}). We characterize the intervals where each $r_k(x)$ is piecewise constant as
\begin{align}
	S_i = \left[\sum_{j=1}^{i-1} s_j, \sum_{j=1}^{i} s_j \right) \subset [0,1].
\end{align}
\begin{lproof}
	We first define a few notations. Recall that $\bs$ is the vector of relative community sizes.
Observe that $S_i \cap S_j = \emptyset$ for all $i \neq j$ and that $\int_{S_i} dx = s_i$, a fact we use throughout the proof. Additionally, each eigenfunction $r_k(x)$ is piecewise constant on each interval $S_i$. Define the points 
\begin{align}
	x_i^* \in S_i
\end{align}
as a fixed point in each interval $S_i$. We first observe that
\begin{align}
	\int_0^1 r_k(x) dx &= \sum_{i=1}^c \int_{S_i} r_k(x) dx \\
	&= \sum_{i=1}^c r_k(x_i^*) \int_{S_i} dx\\
	&=  \sum_{i=1}^c r_k(x_i^*) s_i.
\end{align}
This fact will aid our proof significantly. Define the vector
\begin{align}
	\bm r_k^n = \begin{bmatrix} \sqrt{s_1} r_k(x_1^*) \\ \vdots \\ \sqrt{s_c} r_k(x_c^*) \end{bmatrix}.
\end{align}
Consider that
\begin{align}
	\langle \bm r_k^n, \bm r_j^n \rangle &= \sum_{i=1}^c s_i r_k(x_i^*) r_j(x_i^*) \\
	&= \sum_{i=1}^c r_k(x_i^*) r_j(x_i^*) \int_{S_i} dx\\
	&= \sum_{i=1}^c \int_{S_i} r_k(x) r_j(x) dx\\
	&= \int_0^1 r_k(x) r_j(x) dx 
\end{align}
where we have used the fact that both $r_k(x)$ and $r_j(x)$ are constant on the intervals $S_i$. This implies that
\begin{align}
	&\bm r_k^n \perp \bm r_j^n \quad \forall k \neq j\\
	&1 = \langle \bm r_k^n , \bm r_k^n \rangle.
\end{align}
We now consider
\begin{align}
	\sum_{k=1}^c \theta_k \bm r_k^n(i) \bm r_k^n(j) &= \sum_{k=1}^c \sqrt{s_i} \theta_k r_k(x_i^*) \sqrt{s_j}r_k(x_j^*)\\
	&= \sqrt{s_i s_j} \sum_{k=1}^c \theta_k r_k(x_i^*) \sqrt{s_j}r_k(x_j^*)\\
	&= \sqrt{s_i s_j} f(x_i^*, x_j^*)\\
	&= (\bm M)_{ij}
\end{align}
where we have used the definition
\begin{align}
	f(x,y) = \sum_{k=1}^c \theta_k r_k(x) r_k(y).
\end{align}
The conclusion is then
\begin{align}
	\bm M = \sum_{k=1}^n \theta_k \bm r_k^n (\bm r_k^n)^T,
\end{align}
and $\bm r_k^n$ is an eigenvector. Because $\bm M$ is given in terms of the parameters $\bp,q,$ and $\bs$, we may compute the eigenvalues and eigenfunctions of $\bm M$ as $\nu_k$ and $\bm v_k$, where $\bm v_k$ satisfies
\begin{align}
	\bm v_k = \bm r_k^n.
\end{align}
We can therefore determine the value of each eigenfunction at the points $x_i^*$ as
\begin{align}
	r_k(x_i^*) = \frac{\bm v_k(i) }{\sqrt{s_i}} ,
\end{align}
and the eigenvalues,
\begin{align}
	\theta_k =\nu_k.
\end{align}
Because the eigenfunctions are piecewise constant and we know one value within each piece, this defines the entire eigenfunction which shows equation (\ref{eqn:evals}) and equation (\ref{eqn:efuncs}).
\end{lproof}
\noindent \textbf{Step 2}

The result of the prior lemma can now be used to calculate the quantities in equations (\ref{eqn:equality1}), (\ref{eqn:equality2}), and (\ref{eqn:equality3}) which will accomplish step 2.

Recall the matrices
 	\begin{align}
 		&\bm M = 
	\begin{bmatrix} 
		s_1 p_1 & \sqrt{s_1 s_2} q & \dots & \sqrt{s_1 s_c} q\\
		\sqrt{s_2 s_1} q & s_2 p_2 & \dots & \sqrt{s_2 s_c} q\\
		\vdots &  \vdots & \ddots & \vdots \\
		\sqrt{s_c s_1} q & \sqrt{s_c s_2} q & \dots & s_c p_c
	 \end{bmatrix} \quad \bm M_{f} = 
	\begin{bmatrix} 
		 p_1 & q & \dots &  q\\
		 q & p_2 & \dots & q\\
		\vdots &  \vdots & \ddots & \vdots \\
		 q &  q & \dots &  p_c
	 \end{bmatrix} 
	 \end{align}
\begin{Lemma}
 	\label{lem:ApproxEigsSimp-M}
Let $\nu_k$, $\bm v_k$ be eigenvalues and eigenvectors of $\bm M$. Then, for every $1 \leq i,j,l \leq c$,
\begin{align}
       &\theta_j n \omega_n = \nu_j n \omega_n \label{eqn:FirstResult}\\
	 &\theta_i^{-2} \sqrt{\theta_j \theta_l} \int_0^1 r_j(x) r_l(x) \int_0^1 f(x,y) dy dx = \nu_i^{-2} \sqrt{\nu_j \nu_l}  \sum_{k=1}^c \nu_k  \sum_{m=1}^c \frac{1}{\sqrt{s_m}} \bm v_j(m) \bm v_l(m) \bm v_k(m)\sum_{w=1}^c \sqrt{s_w} \bm v_k(w) \label{eqn:SecondResult}\\
	 &2 \int_0^1 \int_0^1 r_i(x) r_i(y) r_j(x) r_j(y) f(x,y) dx dy = 2\left( \bm v_k .* \bm v_j \right)^T \bm M_f \bm v_k .* \bm v_j. \label{eqn:ThirdResult}
\end{align}
\end{Lemma}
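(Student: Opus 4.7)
The plan is to leverage Lemma \ref{lem:eigFuncEigVec} proved in the previous step, which supplies the identifications $\theta_k = \nu_k$ and $r_k(x_i^*) = \bm v_k(i)/\sqrt{s_i}$ for any $x_i^* \in S_i$, together with the piecewise constancy of $r_k$ on each block $S_i$. Equation (\ref{eqn:FirstResult}) is then immediate: multiplying the identity $\theta_j = \nu_j$ by $n\omega_n$. Equations (\ref{eqn:SecondResult}) and (\ref{eqn:ThirdResult}) are essentially bookkeeping exercises that collapse every integral over $[0,1]$ into a weighted sum over the $c$ block representatives $x_1^*, \ldots, x_c^*$, using $\int_{S_i}\!dx = s_i$ and the fact that the integrands are piecewise constant on these blocks.

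For (\ref{eqn:SecondResult}), I would first expand the kernel via its Mercer-type decomposition $f(x,y) = \sum_{k=1}^c \theta_k r_k(x) r_k(y)$ and carry out the inner integral
\begin{align*}
\int_0^1 f(x,y)\,dy \;=\; \sum_{k=1}^c \theta_k r_k(x) \int_0^1 r_k(y)\,dy \;=\; \sum_{k=1}^c \nu_k r_k(x) \sum_{w=1}^c \sqrt{s_w}\,\bm v_k(w),
\end{align*}
where the last equality uses $\int_0^1 r_k(y)\,dy = \sum_w s_w\, r_k(x_w^*)$ together with $r_k(x_w^*) = \bm v_k(w)/\sqrt{s_w}$. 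Inserting this into the outer integral leaves a triple-product integral $\int_0^1 r_j(x) r_l(x) r_k(x)\,dx$, which by the same piecewise-constant argument becomes $\sum_m s_m r_j(x_m^*) r_l(x_m^*) r_k(x_m^*) = \sum_m s_m^{-1/2}\,\bm v_j(m)\bm v_l(m)\bm v_k(m)$. Collecting terms and substituting $\theta_j = \nu_j$ throughout yields the right-hand side of (\ref{eqn:SecondResult}).

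For (\ref{eqn:ThirdResult}), I would skip the Mercer expansion and instead exploit the fact that $f$ itself is piecewise constant on the rectangles $S_m \times S_w$, with value $p_m$ when $m = w$ and $q$ otherwise; that is, $f(x_m^*, x_w^*) = (\bm M_f)_{mw}$. Partitioning $[0,1]^2$ into these rectangles and using piecewise constancy of $r_i, r_j$, the double integral reduces to
\begin{align*}
\sum_{m=1}^c \sum_{w=1}^c s_m s_w\, r_i(x_m^*) r_i(x_w^*) r_j(x_m^*) r_j(x_w^*) f(x_m^*, x_w^*).
\end{align*}
Substituting $r_i(x_m^*) = \bm v_i(m)/\sqrt{s_m}$ and the analogous identities cancels all $\sqrt{s_\cdot}$ factors, leaving $\sum_{m,w} \bm v_i(m)\bm v_j(m)\,(\bm M_f)_{mw}\,\bm v_i(w)\bm v_j(w)$, which is precisely the quadratic form $(\bm v_i .* \bm v_j)^T \bm M_f (\bm v_i .* \bm v_j)$.

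There is no real obstacle here: once Lemma \ref{lem:eigFuncEigVec} is in hand, the entire argument is a mechanical translation of integrals into finite sums via piecewise constancy, with careful tracking of the $\sqrt{s_i}$ weights coming from $\bm r_k^n(i) = \sqrt{s_i}\, r_k(x_i^*)$. The only point that requires mild care is ensuring, in equation (\ref{eqn:ThirdResult}), that the diagonal and off-diagonal values of $f$ are correctly identified as the entries of $\bm M_f$ rather than of $\bm M$ (the distinction being exactly the $\sqrt{s_m s_w}$ factors that get absorbed by the normalization of the eigenvectors).
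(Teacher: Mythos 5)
Your proposal is correct and follows essentially the same route as the paper: both arguments rest on Lemma \ref{lem:eigFuncEigVec} (identifying $\theta_k = \nu_k$ and $r_k(x_m^*) = \bm v_k(m)/\sqrt{s_m}$) and then collapse each integral over $[0,1]$ or $[0,1]^2$ into a finite sum over the blocks $S_m$ via piecewise constancy, with (\ref{eqn:FirstResult}) trivial and (\ref{eqn:SecondResult}) handled by the Mercer expansion and the identity $\int_0^1 r_k(y)\,dy = \sum_w s_w r_k(x_w^*)$. The only divergence is in (\ref{eqn:ThirdResult}): the paper Mercer-expands $f$, discretizes, and then re-collapses the sum back into $f(x_m^*, x_w^*)$, whereas you note directly that $f$ is piecewise constant on the rectangles $S_m \times S_w$ with $f(x_m^*, x_w^*) = (\bm M_f)_{mw}$ — a small economy that reaches the same quadratic form $(\bm v_i .*\bm v_j)^T \bm M_f (\bm v_i .* \bm v_j)$ without the detour.
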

\begin{lproof}
	The lemma is shown by using the expressions derived in Lemma \ref{lem:eigFuncEigVec} along with the summation representation of $f(x,y)$. Equation (\ref{eqn:FirstResult}) is trivial to show because
	\begin{align}
		\theta_j = \nu_j,
	\end{align}
	as shown in Lemma \ref{lem:eigFuncEigVec}. We now consider equation (\ref{eqn:SecondResult}). To show this equality, we show how to compute the integral 
	\begin{align}
		\int_0^1 r_j(x) r_l(x) \int_0^1 f(x,y) dy dx
	\end{align}
	in terms of the vectors $\bm v_k$. First we use the summation representation of $f(x,y)$ to simplify as follows
\begin{align}
	\int_0^1 r_j(x) r_l(x) \int_0^1 f(x,y) dy dx &= \int_0^1 r_j(x) r_l(x) \int_0^1 \sum_{k=1}^c \theta_k r_k(x) r_k(y) dy dx\\
	&= \sum_{k=1}^c \theta_k  \int_0^1 r_j(x) r_l(x) r_k(x)dx\int_0^1 r_k(y) dy.
\end{align}
	Next, we observe that the eigenfunction is piecewise constant on the same intervals defined by $S_i$,
\begin{align}
	\sum_{k=1}^c \theta_k  \int_0^1 r_j(x) r_l(x) r_k(x)dx\int_0^1 r_k(y) dy = \sum_{k=1}^c \theta_k  \sum_{m=1}^c \int_{S_m} r_j(x) r_l(x) r_k(x)dx \sum_{w=1}^c \int_{S_w} r_k(y) dy.
	\end{align}
	Now, because each eigenfunction is constant on the intervals, we may pull out the constant which is simply the function values evaluated at a point in the interval.
\begin{align}
	\sum_{k=1}^c \theta_k  \sum_{m=1}^c \int_{S_m} r_j(x) r_l(x) r_k(x)dx \sum_{w=1}^c \int_{S_w} r_k(y) dy = \sum_{k=1}^c \theta_k  \sum_{m=1}^c r_j(x_m^*) r_l(x_m^*) r_k(x_m^*) \int_{S_m} dx \sum_{w=1}^c r_k(x_w^*) \int_{S_w} dy
	\end{align}
	Observe now that the integral $\int_{S_m} dx = s_m$.
\begin{align}
	\sum_{k=1}^c \theta_k  \sum_{m=1}^c r_j(x_m^*) r_l(x_m^*) r_k(x_m^*) \int_{S_m} dx \sum_{w=1}^c r_k(x_w^*) \int_{S_w} dy = \sum_{k=1}^c \theta_k  \sum_{m=1}^c s_m r_j(x_m^*) r_l(x_m^*) r_k(x_m^*) \sum_{w=1}^c s_w r_k(x_w^*) 
	\end{align}
	We next utilize equation (\ref{eqn:efuncs})
\begin{align}
	\sum_{k=1}^c \theta_k  \sum_{m=1}^c s_m r_j(x_m^*) r_l(x_m^*) r_k(x_m^*) \sum_{w=1}^c s_w r_k(x_w^*) = \sum_{k=1}^c \theta_k  \sum_{m=1}^c s_m \frac{\bm v_j(m) }{\sqrt{s_m}} \frac{\bm v_l(m) }{\sqrt{s_m}} \frac{\bm v_k(m) }{\sqrt{s_m}} \sum_{w=1}^c s_w \frac{\bm v_k(w) }{\sqrt{s_w}}
	\end{align}
	and we then simplify, which yields the following:
\begin{align}
	\sum_{k=1}^c \theta_k  \sum_{m=1}^c s_m \frac{\bm v_j(m) }{\sqrt{s_m}} \frac{\bm v_l(m) }{\sqrt{s_m}} \frac{\bm v_k(m) }{\sqrt{s_m}} \sum_{w=1}^c s_w \frac{\bm v_k(w) }{\sqrt{s_w}}= \sum_{k=1}^c \theta_k  \sum_{m=1}^c \frac{1}{\sqrt{s_m}} \bm v_j(m) \bm v_l(m) \bm v_k(m)\sum_{w=1}^c \sqrt{s_w} \bm v_k(w).
\end{align} 
Replacing each $\theta_k = \nu_k$ as shown by equation (\ref{eqn:evals}) gives the result.
 
We derive how to express equation (\ref{eqn:IntCov-app}) in terms of the eigenvalues and eigenvectors of $\bm M$ by way of a very similar computation,
 \begin{align}
\Cov(Z_i,Z_j)&= 2 \int_0^1 \int_0^1 r_i(x) r_i(y) r_j(x) r_j(y) f(x,y) dx dy\\
&= 2 \int_0^1 r_i(x) r_j (x) \int_0^1 \sum_{k=1}^c \theta_k r_k(x) r_i(y) r_j (y) r_k(y) dx dy\\
&= 2 \sum_{m=1}^c \int_{S_m} \sum_{w=1}^c \int_{S_w} r_i(x) r_j (x)\sum_{k=1}^c \theta_k r_k(x) r_i(y) r_j (y) r_k(y) dx dy.
\end{align}
We now use the piecewise constant behavior of the functions,
\begin{align}
\Cov(Z_i,Z_j) &= 2 \sum_{m=1}^c \int_{S_m} \sum_{w=1}^c \int_{S_w} r_i(x_m^*) r_j (x_m^*)\sum_{k=1}^c \theta_k r_k(x_m^*) r_i(y_w^*) r_j (y_w^*) r_k(y_w^*) dx dy\\
&= 2 \sum_{m=1}^c s_m \sum_{w=1}^c s_w r_i(x_m^*) r_j (x_m^*)\sum_{k=1}^c \theta_k r_k(x_m^*) r_i(y_w^*) r_j (y_w^*) r_k(y_w^*)
\end{align}
where we have used $\int_{S_m} dx = s_m$ and the fact that there is no longer any $x$ or $y$ dependence in the equations. Recall now that
\begin{align}
	 f(x_m^*, y_w^*) = \sum_{k=1}^c \theta_k r_k(x_m^*) r_i(y_w^*).
\end{align}
Using this and writing $s_w = \sqrt{s_w} \sqrt{s_w}$,
\begin{align}
\Cov(Z_i,Z_j) &= 2 \sum_{m=1}^c \sum_{w=1}^c  \sqrt{s_m} r_i(x_m^*) \sqrt{s_m} r_j (x_m^*) f(x_m^*, x_w^*) \sqrt{s_w} r_j (y_w^*) \sqrt{s_w} r_k(y_w^*).
\end{align}
The right hand side of the above equation can be written as $\bm h^T \bm M_f \bm h$ where
\begin{align}
	\bm h(m) = \sqrt{s_m} r_i(x_m^*) \sqrt{s_m}r_j (x_m^*).
\end{align}
Writing in terms of the vector $\bm v_k$ we see that
\begin{align}
	\bm h = \bm v_i .* \bm v_j,
\end{align}
where $.*$ denotes the component wise product. Therefore,
\begin{align}
	\Cov(Z_i,Z_j) = 2\left( \bm v_i .* \bm v_j \right)^T \bm M_f \bm v_i .* \bm v_j 
\end{align}
\end{lproof}
\section{First-Order Computations}
\label{app:FirstOrder}
Throughout this appendix we prove Lemma \ref{lem:FirstOrder}, which is restated below for convenience.
\begin{Lemma}[Lemma \ref{lem:FirstOrder} from the main document]
\label{lem:FirstOrder-app}
	Let $\bm M$ and $\bm M_f$ be as defined in equation (\ref{eqn:FiniteMatrices}). Assume $q = \epsilon p_{\min}$ where $p_{\min} = \underset{i = 1,...,c}{\min}\bp$ and $\epsilon \ll 1$.
	\begin{align}
	&\left \vert \frac{\E{\lambda_i(\bA)}}{ n \omega_n s_i} - p_i \right \vert = \cO\left(\epsilon^2 p_{\min}\right) + \cO \left(\frac{t_i(\bp)}{n \omega_n s_i}\right), \label{eqn:FirstOrderErrMean-app}\\
	&\left \vert \Cov(Z_i, Z_j) - \begin{cases}2 p_i \quad \text{if } i =j\\ 0 \quad \text{if } i \neq j\end{cases}\right \vert = \cO(\epsilon^2 p_{\min}^2), \label{eqn:FirstOrderErrVar-app}
\end{align}
where $t_i(\bp)$ is a bounded function of the parameters $\bp$ and $Z$ is defined in Theorem \ref{thm:CCH}.
\end{Lemma}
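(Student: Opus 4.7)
The plan is to apply standard matrix perturbation theory to the matrix $\bm M$, treating $q = \epsilon p_{\min}$ as a small parameter, and then substitute the resulting expansions into the formulas of Corollary \ref{cor:FiniteMatrixRep-app}. First, decompose
\[
\bm M = \bm D + q \bm V, \qquad \bm D = \mathrm{diag}(s_1 p_1, \ldots, s_c p_c), \qquad \bm V = \sqrt{\bm s}\sqrt{\bm s}^T - \mathrm{diag}(s_1,\ldots,s_c),
\]
where $\sqrt{\bm s}$ is the componentwise square root of $\bm s$. The crucial structural observation is that $\bm V_{ii} = s_i - s_i = 0$, so the first-order eigenvalue correction in Rayleigh--Schr\"odinger perturbation theory vanishes. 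In the generic (non-degenerate) case $s_i p_i \ne s_j p_j$ for $i \ne j$, this gives
\[
\nu_k = s_k p_k + \cO(q^2), \qquad \bm v_k = \bm e_k + q\, \bm w_k + \cO(q^2), \qquad \bm w_k = \sqrt{s_k} \sum_{j \ne k} \frac{\sqrt{s_j}}{s_k p_k - s_j p_j}\, \bm e_j,
\]
with $(\bm w_k)_k = 0$ under the usual normalization convention. Substituting $q = \epsilon p_{\min}$ bounds the $\cO(q^2)$ error by $\cO(\epsilon^2 p_{\min}^2)$.

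For the mean, apply Corollary \ref{cor:FiniteMatrixRep-app}: $\E{\lambda_i(\bA)} = \lambda_i(\bm B^*) + \cO(\sqrt{\omega_n})$ with $\bm B^* = \bm B^{*,(1)} + \bm B^{*,(2)}$. Since $\bm B^{*,(1)} = n\omega_n\,\mathrm{diag}(\nu_k)$ has entries of order $n\omega_n$, and $\bm B^{*,(2)}$ is a bounded $c\times c$ matrix whose entries are rational functions of $(\nu_k, \bm v_k(m), s_m)$, Weyl--Lidskii (Theorem \ref{thm:SpecInc}) gives $\lambda_i(\bm B^*) = n\omega_n \nu_i + t_i(\bp)$ where $\|t_i(\bp)\| \le \|\bm B^{*,(2)}\| = \cO(1)$ in $n$. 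Using $\nu_i = s_i p_i + \cO(\epsilon^2 p_{\min}^2)$ and dividing by $n\omega_n s_i$ yields \eqref{eqn:FirstOrderErrMean-app}.

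For the covariance, use $\Cov(Z_i,Z_j) = 2(\bm v_i .* \bm v_j)^T \bm M_f (\bm v_i .* \bm v_j)$. The zeroth-order object is $\bm e_i .* \bm e_j = \delta_{ij}\, \bm e_i$, which immediately produces the two cases. For $i=j$: expanding $(\bm e_i + q\bm w_i) .* (\bm e_i + q\bm w_i) = \bm e_i + 2q(\bm w_i)_i \bm e_i + q^2(\bm w_i .* \bm w_i)$ and invoking $(\bm w_i)_i = 0$ gives $\bm v_i .* \bm v_i = \bm e_i + \cO(q^2)$, so
\[
\Cov(Z_i, Z_i) = 2\, \bm e_i^T \bm M_f \bm e_i + \cO(q^2 \|\bm M_f\|) = 2 p_i + \cO(\epsilon^2 p_{\min}^2),
\]
since $\|\bm M_f\| = \cO(1)$. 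For $i \ne j$: the componentwise product $\bm v_i .* \bm v_j = q((\bm w_j)_i \bm e_i + (\bm w_i)_j \bm e_j) + \cO(q^2) = \cO(q)$, so the quadratic form is $\cO(q^2) = \cO(\epsilon^2 p_{\min}^2)$, giving \eqref{eqn:FirstOrderErrVar-app}.

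The main obstacle is handling the non-generic case where some $s_i p_i$ coincide, for which Rayleigh--Schr\"odinger must be replaced by degenerate perturbation theory; however, since $\bm V$ restricted to any degenerate eigenspace is still small, the leading-order expansions $\nu_k = s_k p_k + \cO(q^2)$ and $\bm v_k = \bm e_k + \cO(q)$ survive. A secondary technical issue is that the bound on $t_i(\bp)$ requires $\nu_i$ to be bounded away from zero (because of the factor $\nu_i^{-2}$ in \eqref{eqn:defB2-app}), which is fine as long as $p_{\min}$ is bounded below by a constant independent of $n$; otherwise the $t_i$ bound must be made explicit in terms of $p_{\min}$. These technicalities aside, the proof reduces to plugging the perturbative expansions of $\bm M$ into the closed-form expressions from Corollary \ref{cor:FiniteMatrixRep-app}.
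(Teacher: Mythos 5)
Your proof is correct and follows essentially the same route as the paper: the paper's Appendix D also splits $\bm M$ into a diagonal part plus $\epsilon p_{\min}$ times a hollow (zero-diagonal) matrix, applies Rayleigh--Schr\"odinger perturbation to get $\nu_i = s_i p_i + \cO((\epsilon p_{\min})^2)$ and the same first-order eigenvector correction, uses Weyl--Lidskii to peel off $\bm B^{*,(2)}$ as the $t_i(\bp)$ term, and exploits the vanishing of $\bm e_j .* \bm e_k$ for $j\ne k$ (equivalently your $(\bm w_i)_i = 0$) in the covariance expansion. The only difference is cosmetic: you make the degenerate-eigenvalue caveat and the lower bound on $\nu_i$ explicit, whereas the paper simply assumes well-separated eigenvalues.
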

\begin{lproof}
	The proof is a consequence of Lemmas \ref{lem:RelErrEigs} and \ref{lem:RelErrCov}
\end{lproof}

The ideas for the proof are related to the quantities in Corollary \ref{cor:FiniteMatrixRep}. Recall that
\begin{align}
	\E{\lambda_i(\bA)} = \lambda_i(\bm B^*) + \cO(\sqrt{\omega_n}),\\
	\Cov(Z_i,Z_j) = 2\left( \bm v_i .* \bm v_j \right)^T \bm M_f \bm v_i .* \bm v_j.
\end{align}
Within this appendix, we always assume that $q = \epsilon p_{\min}$ where $\epsilon \ll 1$ and $p_{\min} = \min \bp$. Under these assumptions we calculate a first-order expansion for $\lambda_i(\bm B^*) $ and $2\left( \bm v_i .* \bm v_j \right)^T \bm M_f \bm v_i .* \bm v_j.$ to prove Lemma \ref{lem:FirstOrder-app}. To accomplish these tasks, we split this appendix into three subsections
\begin{enumerate}
	\item In \ref{subsec:prelim}, we show the first order behavior of the eigenvectors and eigenvalues of $\bm M$ when $q = \epsilon p_{\min}$.
	\item In \ref{subsec:ExpEigFirst}, we show equation (\ref{eqn:FirstOrderErrMean-app}).
	\item In \ref{subsec:VarFirst}, we show equation (\ref{eqn:FirstOrderErrVar-app}).
\end{enumerate}
\subsection{Step 1: Preliminaries}
\label{subsec:prelim}
Assume that $q = \epsilon p_{\min}$, where $\epsilon \ll 1$ and $p_{\min} = \min \bp$. Let $\bm M$ be defined as in equation (\ref{eqn:FiniteMatrices}) with eigenvalues and eigenvectors given by $\nu_k$ and $\bm v_k$, respectively. The matrix $\bm M$ may be split into two components,
\begin{align}
	&\bm M_0 = 
	\begin{bmatrix} 
		s_1 p_1 & 0 & \dots & 0\\
		0 & s_2 p_2 & \dots & 0\\
		\vdots &  \vdots & \ddots & \vdots \\
		0 & 0 & \dots & s_c p_c
	 \end{bmatrix}\\
	 \epsilon p_{\min} &\bm M_1 = 
	\begin{bmatrix} 
		0 & \sqrt{s_1 s_2}  & \dots & \sqrt{s_1 s_c} \\
		\sqrt{s_2 s_1} q & 0 & \dots & \sqrt{s_2 s_c} \\
		\vdots &  \vdots & \ddots & \vdots \\
		\sqrt{s_c s_1}  & \sqrt{s_c s_2}  & \dots & 0
	 \end{bmatrix}\\
	&\bm M = \bm M_0 + \epsilon p_{\min} \bm M_1
\end{align}
Let $\gamma_i$ and $\bm g_i$ be the eigenvalues and eigenvectors of $\bm M_0$. Clearly,
\begin{align}
	&\gamma_i = s_i p_i,\\
	&\bm g_i = \bm e_i,
\end{align}
where $\bm e_i$ denotes the canonical basis vector in $\R^c$. We assume that the eigenvalues of $\bm M_0$, denoted by $\gamma_i$, are well separated so that the first order expansion of the eigenvectors is well-behaved. This is an assumption of the model and is the same assumption that the eigenvalues of $L_f$ are well-separated. An expansion for the eigenvectors of $\bm M$, denoted by $\bm v_i$, in terms of the eigenvectors of $\bm M_0$ is then
\begin{align}
	\bm v_i &= \bm e_i + \epsilon p_{\min} \sum_{k \neq i}^c \frac{\bm e_i \bm M_1 \bm e_k}{\gamma_i - \gamma_k} \bm e_k + \cO((\epsilon p_{\min})^2)\\
	&= \bm e_i + \epsilon p_{\min}  \sum_{k \neq i}^c \frac{\sqrt{s_i s_k}}{s_i p_i - s_k p_k} \bm e_k + \cO((\epsilon p_{\min})^2).
\end{align}

Within this section, we also determine a first order expansion for the eigenvalues of $\bm M$, denoted by $\nu_i$. 
\begin{align}
	\nu_i &= \bm v_i^T \bm M \bm v_i\\
	&= \left(\bm e_i + \epsilon p_{\min} \sum_{k \neq i}^c \frac{\sqrt{s_i s_k}}{s_i p_i - s_k p_k} \bm e_k\right)^T \bm M \left(\bm e_i + \epsilon p_{\min} \sum_{k \neq i}^c \frac{\sqrt{s_i s_k}}{s_i p_i - s_k p_k} \bm e_k\right)+ \cO((\epsilon p_{\min})^2)\\
	&= \bm e_i^T \bm M \bm e_i + \epsilon p_{\min} \sum_{k \neq i}^c \frac{\sqrt{s_i s_k}}{s_i p_i - s_k p_k} \bm e_k^T \bm M \bm e_i + \epsilon p_{\min} \bm e_i^T \bm M \sum_{k \neq i}^c \frac{\sqrt{s_i s_k}}{s_i p_i - s_k p_k} \bm e_k + \cO((\epsilon p_{\min})^2).
\end{align}
We have the following identity to help simplify:
\begin{align}
	\bm e_i^T \bm M \bm e_k = (\bm M)_{ik} = m_{ik} = m_{ki}
\end{align}
since $\bm M$ is symmetric. Therefore,
\begin{align}
	\nu_i &= \bm e_i^T \bm M \bm e_i + \epsilon p_{\min} \sum_{k \neq i}^c \frac{\sqrt{s_i s_k}}{s_i p_i - s_k p_k} \bm e_k^T \bm M \bm e_i + \epsilon p_{\min} \bm e_i^T \bm M \sum_{k \neq i}^c \frac{\sqrt{s_i s_k}}{s_i p_i - s_k p_k} \bm e_k + \cO((\epsilon p_{\min})^2)\\
	&= \bm e_i^T \bm M \bm e_i + 2\epsilon p_{\min}\sum_{k \neq i}^c \frac{\sqrt{s_i s_k}}{s_i p_i - s_k p_k} \bm e_k^T \bm M \bm e_i + \cO((\epsilon p_{\min})^2)\\
	&= m_{ii} + 2\sum_{k \neq i}^c \epsilon p_{\min} \frac{\sqrt{s_i s_k}}{s_i p_i - s_k p_k} m_{ki} + \cO((\epsilon p_{\min})^2)\\
	&= s_i p_i+ 2\epsilon p_{\min} \sum_{k \neq i}^c \frac{\sqrt{s_i s_k}}{s_i p_i - s_k p_k} \epsilon p_{\min} \sqrt{s_i s_k} + \cO((\epsilon p_{\min})^2)\\
	&= s_i p_i + 2 (\epsilon p_{\min})^2 \sum_{k \neq i}^c \frac{s_i s_k}{s_i p_i - s_k p_k}  + \cO((\epsilon p_{\min})^2)\\
	&= s_i p_i + \cO((\epsilon p_{\min})^2)\\
	&= \gamma_i + \cO((\epsilon p_{\min})^2). \label{eqn:CompEigsExp}
\end{align}

\subsection{Step 2: Expected eigenvalues}
\label{subsec:ExpEigFirst}
This section of the appendix computes equation (\ref{eqn:FirstOrderErrMean-app}). We begin this section by recalling the first order estimate of the expected eigenvalues given by Corollary \ref{cor:FiniteMatrixRep} in terms of the matrix $\bm B^*$.

\begin{equation}
		\E{\lambda_i(\bA_{\mu})} = \lambda_i(\bm B^*) + \mathcal{O}(\sqrt{\omega_n}),
\end{equation}
	where $\bm B^* =  \bm B^{*,(1)} + \bm B^{*,(2)}$ whose components are given as
\begin{align}
	\left( \bm B^{*,(1)} \right)_{j,l} &= b_{j,l}^{*,(1)} = \begin{cases}  \nu_j n \omega_n \quad j = l \\ 0 \quad j \neq l \end{cases} \label{eqn:defB1}\\
	\left( \bm B^{*,(2)} \right)_{j,l} &= b_{j,l}^{*,(2)} = \nu_i^{-2} \sqrt{\nu_j \nu_l}  \sum_{k=1}^c \nu_k  \sum_{m=1}^c \frac{1}{\sqrt{s_m}} \bm v_j(m) \bm v_l(m) \bm v_k(m)\sum_{w=1}^c \sqrt{s_w} \bm v_k(w). \label{eqn:defB2}
\end{align}

Observe that the eigenvalues of $\bm B^*$ are determined by the eigenvalues and eigenvectors of the matrix $\bm M$.  The results of this subsection are summarized by the following lemma, which shows equation (\ref{eqn:FirstOrderErrMean-app}) and its proof.
\begin{Lemma}
\label{lem:RelErrEigs}
Assume $q = \epsilon p_{\min}$, where $p_{\min} = \underset{i = 1,...,c}{\min}\bp$ and $\epsilon \ll 1$ then
\begin{align}
	\left \vert \frac{\E{\lambda_i(\bA)}}{ n \omega_n s_i} - p_i \right \vert = \cO(\epsilon^2 p_{\min}) + \cO\left(\frac{t_i(\bp)}{n \omega_n}\right).
\end{align}
\end{Lemma}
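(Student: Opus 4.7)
The plan is to use the explicit formula for $\E{\lambda_i(\bA)}$ provided by Corollary \ref{cor:FiniteMatrixRep}, namely
\begin{equation*}
\E{\lambda_i(\bA)} = \lambda_i(\bm B^*) + \cO(\sqrt{\omega_n}), \qquad \bm B^* = \bm B^{*,(1)} + \bm B^{*,(2)},
\end{equation*}
and then treat $\bm B^{*,(2)}$ as a low-order additive perturbation of the diagonal matrix $\bm B^{*,(1)}$. First I would observe that $\bm B^{*,(1)} = n\omega_n\,\mathrm{diag}(\nu_1,\ldots,\nu_c)$, so its $i$-th eigenvalue is exactly $n\omega_n \nu_i$. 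By the Weyl--Lidskii theorem (Theorem \ref{thm:SpecInc}), and using the well-separation hypothesis on the eigenvalues of $\bm M_0$ to guarantee that the perturbed spectrum pairs with the unperturbed one in the same order, I would deduce
\begin{equation*}
\bigl|\lambda_i(\bm B^*) - n\omega_n \nu_i\bigr| \leq \|\bm B^{*,(2)}\|.
\end{equation*}

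Next I would control each of the two pieces. For the eigenvalue of $\bm M$ itself, the perturbative expansion established in Subsection \ref{subsec:prelim} (equation (\ref{eqn:CompEigsExp})) gives $\nu_i = s_i p_i + \cO(\epsilon^2 p_{\min}^2)$, so
\begin{equation*}
n\omega_n \nu_i = n\omega_n s_i p_i + \cO(n\omega_n \epsilon^2 p_{\min}^2).
\end{equation*}
For the correction term $\bm B^{*,(2)}$, whose entries are defined by (\ref{eqn:defB2-app}), I would substitute $\nu_k = s_k p_k + \cO(\epsilon^2 p_{\min}^2)$ and $\bm v_k = \bm e_k + \cO(\epsilon p_{\min})$ into the sums. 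The crucial observation is that the whole expression does not scale with $n\omega_n$ and that the prefactor $\nu_i^{-2}$ stays bounded (since $\nu_i$ is bounded away from $0$ by the well-separation assumption and $p_i>0$), so each entry of $\bm B^{*,(2)}$ is a bounded function of $\bp$ and $\bs$. Hence $\|\bm B^{*,(2)}\| \leq t_i(\bp)$ for some bounded $t_i(\bp)$ collecting these dependencies.

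Assembling these bounds yields
\begin{equation*}
\E{\lambda_i(\bA)} = n\omega_n s_i p_i + \cO(n\omega_n \epsilon^2 p_{\min}^2) + \cO(t_i(\bp)) + \cO(\sqrt{\omega_n}),
\end{equation*}
and dividing by $n\omega_n s_i$ produces exactly
\begin{equation*}
\frac{\E{\lambda_i(\bA)}}{n\omega_n s_i} - p_i = \cO(\epsilon^2 p_{\min}^2) + \cO\!\left(\tfrac{t_i(\bp)}{n\omega_n s_i}\right),
\end{equation*}
absorbing the $\sqrt{\omega_n}/(n\omega_n) = 1/(n\sqrt{\omega_n})$ error into the $t_i(\bp)/(n\omega_n)$ term since $\omega_n \ll 1$.

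The main obstacle I anticipate is handling $\bm B^{*,(2)}$ carefully: one must verify that the triple sum over eigenvector components in (\ref{eqn:defB2-app}) does not blow up under the $\epsilon$-expansion. The saving grace is that at leading order in $\epsilon$ the eigenvectors are canonical basis vectors, so only terms with $m=k=w$ survive and the structure collapses nicely; the remainder is $\cO(\epsilon p_{\min})$ in $\|\bm B^{*,(2)}\|$, which is still bounded and absorbed into $t_i(\bp)$. A secondary, purely bookkeeping difficulty is ensuring the correct matching of eigenvalue indices under the Weyl--Lidskii inclusion, which follows provided the spectral gaps of $\bm M_0$ scaled by $n\omega_n$ dominate $\|\bm B^{*,(2)}\|$ for large $n$.
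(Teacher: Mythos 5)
Your proposal follows essentially the same route as the paper's proof in \ref{subsec:ExpEigFirst}: invoking Corollary~\ref{cor:FiniteMatrixRep}, applying Weyl--Lidskii with $\bm H = \bm B^{*,(1)}$ and $\bm A = \bm B^{*,(2)}$, substituting $\nu_i = s_i p_i + \cO(\epsilon^2 p_{\min}^2)$ from equation~(\ref{eqn:CompEigsExp}), and noting that $\|\bm B^{*,(2)}\|$ is independent of $n$ so it defines $t_i(\bp)$. The only cosmetic difference is that you track the $\cO(\sqrt{\omega_n})$ remainder explicitly and verify its absorption, and you flag the eigenvalue-pairing subtlety under the Weyl--Lidskii inclusion, both of which the paper handles implicitly via the same well-separation hypothesis.
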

The proof of the above lemma takes the following intermediate steps. First we show that
\begin{align}
	\lambda_i(\bm B^*) = \lambda_i(\bm B^{*,(1)}) + \cO(t_i(\bp))
\end{align}
using Weyl-Lidskii's theorem. Next we show that 
\begin{align}
	\lambda_i(\bm B^{*,(1)}) = n \omega_n \left(s_i p_i + \cO(\epsilon^2 p_{\min}^2) \right).
\end{align}
We conclude the proof using the result that
\begin{align}
	\E{\lambda_i(\bA_{\mu})} = \lambda_i(\bm B^*) + \mathcal{O}(\sqrt{\omega_n}).
\end{align}
We now begin with the proof.
\begin{lproof}
	As stated, we first show that by Weyl-Lidskii's theorem, 
	\begin{align}
		\lambda_i(\bm B^*) = \lambda_i(\bm B^{*,(1)}) + \cO(t_i(\bp)).
	\end{align}
	For the theorem we take the following quantities.
	\begin{align}
		\bm A = \bm B^{*,(2)}\\
		\bm H = \bm B^{*,(1)}.
	\end{align}
	Then, $\bm B^* = \bm A + \bm H$. Furthermore, $\bm H$ is self-adjoint because it is diagonal and $\bm A$ is bounded. Since all the eigenvalues are real, we have the following result:
	\begin{align}
		\vert \lambda_i(\bm B^*) - \lambda_i( \bm B^{*,(1)}) \vert \leq ||\bm B^{*,(2)}||.
	\end{align}
	Observe that because $\bm B^{*,(2)}$ is independent of $n$, we may conclude that
	\begin{align}
		\vert \lambda_i(\bm B^*) - \lambda_i( \bm B^{*,(1)}) \vert = \cO(t_i(\bp)), \label{eqn:step1lem}
	\end{align}
	where $t_i(\bp)$ is a bounded function of the parameters independent of $n$. This accomplishes the first step of the proof. Next we show that
	\begin{align}
		\lambda_i( \bm B^{*,(1)}) = n \omega_n \left(s_i p_i + \cO(\epsilon^2 p_{\min}^2) \right).
	\end{align}
	This is nearly trivially true. Observe that because $\bm B^{*,(1)}$ is diagonal, the $i$-th eigenvalue is given as follows,
	\begin{align}
		\lambda_i( \bm B^{*,(1)}) = b^{*,(1)}_{i,i} = \nu_i n \omega_n.
	\end{align}
	Because of equation (\ref{eqn:CompEigsExp}), we have
	\begin{align}
		\nu_i n \omega_n = n \omega_n \left( s_i p_i + \cO(\epsilon^2 p_{\min}^2)\right).
	\end{align}
	To conclude, we have the following set of equalities:
	\begin{align}
		\E{\lambda_i(\bA)} = \lambda_i(\bm B^*) + \cO(\sqrt{\omega_n}).
	\end{align}
	Equation (\ref{eqn:step1lem}) shows that
	\begin{align}
		\E{\lambda_i(\bA)} = \lambda_i( \bm B^{*,(1)}) + \cO(t_i(\bp))
	\end{align}
	Replacing $ \lambda_i( \bm B^{*,(1)})$ with $n \omega_n \left( s_i p_i + \cO(\epsilon^2 p_{\min}^2)\right)$ yields
	\begin{align}
		\E{\lambda_i(\bA)} =n \omega_n \left( s_i p_i + \cO(\epsilon^2 p_{\min}^2)\right) + \cO(t_i(\bp))
	\end{align}
	and finally dividing by $n \omega_n s_i$ shows that
	\begin{align}
		\frac{\E{\lambda_i(\bA)}}{n \omega_n s_i} = p_i + \cO(\frac{\epsilon^2 p_{\min}^2}{s_i}) + \cO(\frac{t_i(\bp)}{n \omega_n})
	\end{align}
	Because $s_i$ is a constant with respect to $n$ and $\epsilon$, we simplify this expression as
	\begin{align}
		\frac{\E{\lambda_i(\bA)}}{n \omega_n s_i} = p_i + \cO(\epsilon^2 p_{\min}^2) + \cO(\frac{t_i(\bp)}{n \omega_n }).
	\end{align}
	We conclude the proof with the following
	\begin{align}
		\left \vert \frac{\E{\lambda_i(\bA)}}{n \omega_n s_i} - p_i \right \vert = \cO(\epsilon^2 p_{\min}^2) + \cO(\frac{t_i(\bp)}{n \omega_n}).
	\end{align}
\end{lproof}
It is worth noting that typically, the function $t_i(\bp)$ is suppressed when reporting the error. In this instance, we make this term explicit because the eventual distribution on the parameters, $J$, for the random parameter stochastic block model, impacts this error term.

The next subsection shows similar calculations except for the term $\Cov(Z_i, Z_j)$ as defined in Corollary \ref{cor:FiniteMatrixRep}.
\subsection{Step 3: Covariance}
\label{subsec:VarFirst}
This section of the appendix is summarized by the following lemma which shows equation (\ref{eqn:FirstOrderErrVar-app}),
\begin{Lemma}
\label{lem:RelErrCov}
Assume $q = \epsilon p_{\min}$, where $p_{\min} = \underset{i = 1,...,c}{\min}\bp$ and $\epsilon \ll 1$ then
	\begin{align}
		\left \vert \Cov(Z_k, Z_l) - \begin{cases}2 p_k \quad \text{if } k = l\\ 0 \quad \text{if } k \neq l\end{cases}\right \vert = \cO(\epsilon^2 p_{\min}^2),
	\end{align}
	where $Z$ is defined in Theorem \ref{thm:CCH}.
\end{Lemma}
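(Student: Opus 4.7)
The plan is to reduce everything to the quadratic form expression for the covariance given by Corollary \ref{cor:FiniteMatrixRep}, namely
\begin{align}
\Cov(Z_k,Z_l) = 2\,(\bm v_k .* \bm v_l)^T \bm M_f (\bm v_k .* \bm v_l),
\end{align}
and then substitute the first-order expansion of the eigenvectors $\bm v_i$ of $\bm M$ already derived in Section \ref{subsec:prelim}. Recall that for $q = \epsilon p_{\min}$ with $\epsilon \ll 1$, that expansion gives $\bm v_i = \bm e_i + \epsilon p_{\min}\sum_{k\neq i}\frac{\sqrt{s_is_k}}{s_ip_i - s_kp_k}\bm e_k + \mathcal{O}((\epsilon p_{\min})^2)$, under the non-degeneracy assumption on the eigenvalues of $\bm M_0$. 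Since $\bm M_f$ has entries bounded independently of $\epsilon$, the whole proof reduces to tracking the order of the components of $\bm v_k .* \bm v_l$ in $\epsilon p_{\min}$.

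For the diagonal case $k = l$, I would argue componentwise: $(\bm v_k .*\bm v_k)(k) = (1+\mathcal{O}((\epsilon p_{\min})^2))^2 = 1 + \mathcal{O}((\epsilon p_{\min})^2)$, whereas for $m\neq k$ the component is $(\bm v_k(m))^2 = \mathcal{O}((\epsilon p_{\min})^2)$. Hence $\bm v_k .* \bm v_k = \bm e_k + \mathcal{O}((\epsilon p_{\min})^2)$ (entry-wise), and plugging into the quadratic form gives $(\bm v_k .* \bm v_k)^T \bm M_f (\bm v_k .* \bm v_k) = \bm e_k^T \bm M_f \bm e_k + \mathcal{O}((\epsilon p_{\min})^2) = p_k + \mathcal{O}((\epsilon p_{\min})^2)$. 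Multiplying by $2$ yields the claimed bound for $k=l$.

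For the off-diagonal case $k\neq l$, the key observation is that the vector $\bm v_k .* \bm v_l$ is itself of order $\epsilon p_{\min}$: its $k$-th and $l$-th components are products of a $\mathcal{O}(1)$ term from one eigenvector and a $\mathcal{O}(\epsilon p_{\min})$ term from the other, while every other component is a product of two $\mathcal{O}(\epsilon p_{\min})$ terms and is therefore $\mathcal{O}((\epsilon p_{\min})^2)$. Since $\bm M_f$ is bounded, the quadratic form $(\bm v_k .* \bm v_l)^T \bm M_f (\bm v_k .* \bm v_l)$ is a sum of terms each of order $(\epsilon p_{\min})^2$, so $\Cov(Z_k,Z_l) = \mathcal{O}(\epsilon^2 p_{\min}^2)$.

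This proof is essentially bookkeeping; there is no analytical obstacle beyond the perturbation expansion, which has already been established in Section \ref{subsec:prelim}. The one subtlety worth double-checking in the write-up is that there are no surviving cross-terms that upgrade the off-diagonal bound: because both $\bm v_k(k)\bm v_l(k)$ and $\bm v_k(l)\bm v_l(l)$ are individually $\mathcal{O}(\epsilon p_{\min})$, all of $(\bm v_k.*\bm v_l)(k)^2(\bm M_f)_{kk}$, $(\bm v_k.*\bm v_l)(l)^2(\bm M_f)_{ll}$, and the cross term $2(\bm v_k.*\bm v_l)(k)(\bm M_f)_{kl}(\bm v_k.*\bm v_l)(l)$ sit at order $(\epsilon p_{\min})^2$, confirming the bound without any miraculous cancellation.
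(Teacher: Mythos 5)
Your proof is correct and follows essentially the same route as the paper: express $\Cov(Z_k,Z_l)$ via the quadratic form from Corollary \ref{cor:FiniteMatrixRep}, plug in the first-order eigenvector perturbation for $\bm M = \bm M_0 + \epsilon p_{\min}\bm M_1$, observe that for $k=l$ the Hadamard square collapses to $\bm e_k + \cO((\epsilon p_{\min})^2)$ (since cross terms $\bm e_j .* \bm e_k$ vanish for $j\neq k$, equivalently your componentwise bookkeeping), and for $k\neq l$ observe that $\bm v_k .* \bm v_l$ is $\cO(\epsilon p_{\min})$ entrywise so the quadratic form is $\cO((\epsilon p_{\min})^2)$. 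The only cosmetic difference is that you track components explicitly where the paper factors out $\epsilon p_{\min}$ and writes $\bm v_k .* \bm v_l = \epsilon p_{\min}\bm d^{k,l} + \cO((\epsilon p_{\min})^2)$; the content is identical.
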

\begin{lproof}
	The proof is split into two subsections. First, we analyze the case of $k \neq l$ and show that all contributions to covariance occur at the second order. Then, we consider the behavior when $k = l$. To begin we recall a few quantities,
	\begin{align}
	\Cov(Z_k, Z_l)  = 2 \left( \bm v_k .* \bm v_l \right)^T \bm M_f \left( \bm v_k .* \bm v_l \right).
\end{align}
Recall matrix $\bm M_f$,
\begin{align}
\bm M_{f} = 
	\begin{bmatrix} 
		 p_1 & \epsilon p_{\min} & \dots &  \epsilon p_{\min}\\
		 \epsilon p_{\min} &p_2 & \dots & \epsilon p_{\min}\\
		\vdots &  \vdots & \ddots & \vdots \\
		 \epsilon p_{\min} &  \epsilon p_{\min} & \dots &  p_c
	 \end{bmatrix} 
\end{align}
though it is important to remember that $\bm v_k$ is an eigenvector of $\bm M$ and not $\bm M_f$.

We first compute $\bm v_k .* \bm v_l$ to first order for any choice of $k$ and $l$.
\begin{align}
	\bm v_k .* \bm v_l &= \left( \bm e_k + \sum_{j \neq k}^c \frac{\epsilon p_{\min} \sqrt{s_k s_j}}{s_k p_k - s_j p_j} \bm e_j \right).*\left( \bm e_l + \sum_{r \neq l}^c \frac{\epsilon p_{\min} \sqrt{s_l s_r}}{s_l p_l - s_r p_r} \bm e_r \right) + \cO((\epsilon p_{\min})^2)\\
	&= \bm e_k .* \bm e_l + \sum_{j \neq k}^c \frac{\epsilon p_{\min} \sqrt{s_k s_j}}{s_k p_k - s_j p_j} \bm e_j.* \bm e_l + \sum_{r \neq l}^c \frac{\epsilon p_{\min} \sqrt{s_l s_r}}{s_l p_l - s_r p_r} \bm e_k .*\bm e_r + \cO((\epsilon p_{\min})^2).
\end{align}

We now show that if $k \neq l$ then the contribution to covariance is only on the order of $\cO((\epsilon p_{\min})^2)$. To see this, observe that if $k \neq l$,
\begin{align}
	\bm v_k .* \bm v_l = \sum_{j \neq k}^c \frac{\epsilon p_{\min} \sqrt{s_k s_j}}{s_k p_k - s_j p_j} \bm e_j.* \bm e_l + \sum_{r \neq l}^c \frac{\epsilon p_{\min} \sqrt{s_l s_r}}{s_l p_l - s_r p_r} \bm e_k .*\bm e_r + \cO((\epsilon p_{\min})^2).
\end{align}
Therefore, the vector
\begin{align}
	\bm v_k .* \bm v_l = \epsilon p_{\min} \bm d^{k,l} + \cO((\epsilon p_{\min})^2).
\end{align}
where $\bm d^{k,l}$ is some vector that depends on $k,l$. Computing $\Cov(Z_k, Z_l)$ we see 
\begin{align}
	\Cov(Z_k, Z_l)  &= 2 \left( \bm v_k .* \bm v_l \right)^T \bm M_f \left( \bm v_k .* \bm v_l \right)\\
	&= 2 \epsilon p_{\min} (\bm d^{k,l})^T \bm M_f \left(\epsilon p_{\min} \bm d^{k,l} \right) + \cO((\epsilon p_{\min})^2)\\
	&= 2 (\epsilon p_{\min})^2 (\bm d^{k,l})^T \bm M_f  \bm d^{k,l} + \cO( (\epsilon p_{\min})^2)\\
	&= \cO( (\epsilon p_{\min})^2). \label{eqn:kneql}
\end{align}
Next, we show the behavior of $\Cov(Z_k,Z_l)$ for $k = l$. We have
\begin{align}
	\bm v_k .* \bm v_k &= \bm e_k .* \bm e_k + \sum_{j \neq k}^c \frac{\epsilon p_{\min} \sqrt{s_k s_j}}{s_k p_k - s_j p_j} \bm e_j.* \bm e_k + \sum_{j \neq k}^c \frac{\epsilon p_{\min} \sqrt{s_k s_j}}{s_k p_k - s_j p_j} \bm e_k .*\bm e_j + \cO((\epsilon p_{\min})^2).
\end{align}
However, if $j \neq k$, then $\bm e_j.* \bm e_k = \bm 0$. Therefore,
\begin{align}
	\bm v_k .* \bm v_k &= \bm e_k .* \bm e_k + \cO((\epsilon p_{\min})^2)\\
	&=\bm e_k + \cO((\epsilon p_{\min})^2)
\end{align}
because $\bm e_k$ is the canonical basis vector. To compute the variance, we need only to compute
\begin{align}
	\Cov(Z_k, Z_k)  &= 2 \left( \bm v_k .* \bm v_k \right)^T \bm M_f \left( \bm v_k .* \bm v_k \right)\\
	&= 2 \bm e_k^T \bm M_f \bm e_k + \cO((\epsilon p_{\min})^2)\\
	&= 2 p_k + \cO((\epsilon p_{\min})^2). \label{eqn:k=l}
\end{align}
The combination of equations (\ref{eqn:kneql}) and (\ref{eqn:k=l}) yields
\begin{align}
	\left \vert \Cov(Z_k, Z_l) - \begin{cases}2 p_k \quad \text{if } k = l\\ 0 \quad \text{if } k \neq l\end{cases}\right \vert = \cO(\epsilon^2 p_{\min}^2).
\end{align}
\end{lproof}
\section{Proof of Lemma \ref{lem:RelErrJMoments}}
\label{app:RelErr}
This appendix serves to prove Lemma \ref{lem:RelErrJMoments}, which is restated below for convenience.
\begin{Lemma}[Lemma \ref{lem:RelErrJMoments} from the main document]
\label{lem:RelErrJMoments-app}
Let $\bm P_j$ be an observation from $J$ with components $P_i$. Let $P_{\min} = \min_{i=1,...,c} P_i$ and define $q = \epsilon P_{\min}$ where $\epsilon \ll 1$.
\begin{align}
	\frac{\E{\lambda_i}[H_n]}{n \omega_n s_i} &= \E{P_i}[J] + \cO(\epsilon^2 ) + \cO\left(\frac{1}{n \omega_n}\right) \label{eqn:First-E}\\
 	\frac{\Cov_{H_{n}}(\lambda_i, \lambda_j)}{n^2 \omega_n^2 s_i s_j}&+ \begin{cases} \frac{2\E{\lambda_i}[H_n]}{n^3 \omega_n^2 s_i^3} \quad \text{if } i = j \\ 0 \quad \text{if } i \neq j \end{cases}\\
 	 &= \Cov_{J}\left(P_i + \cO(\epsilon^2 P_{\min}^2) + \cO\left(\frac{t_i(\bp)}{n \omega_n }\right),  P_j + \cO(\epsilon^2 P_{\min}^2) + \cO\left(\frac{t_j(\bp)}{n \omega_n}\right)\right) +  \cO\left(\frac{\epsilon^2}{n^2 \omega_n }\right)
\end{align}
where $t_i(\bp)$ is a bounded function of the parameters for each $i$.
\end{Lemma}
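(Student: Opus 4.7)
The plan is to apply Lemma \ref{lem:FirstOrder} pointwise in $\bp$ and then average with respect to $J$, using repeatedly the decomposition (\ref{eqn:TotMean})--(\ref{eqn:TotVar}) that comes from the laws of total mean and total covariance. The key identification to keep in mind throughout is that $\Cov(Z_i,Z_j)$ from Corollary \ref{cor:FiniteMatrixRep} is, by the working assumption stated right after that corollary, a good proxy for the finite-$n$ conditional covariance $\Cov_\mu\!\left(\lambda_i(\bA)/\sqrt{\omega_n},\lambda_j(\bA)/\sqrt{\omega_n}\mid\bm P_J=\bp\right)$.

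For the first-moment claim (\ref{eqn:First-E}), I would start from equation (\ref{eqn:FirstOrderErrMean}) applied to the conditional law $\mu\mid\bm P_J=\bp$, rewritten as
\begin{equation}
\frac{\E{\lambda_i(\bA)\mid\bm P_J=\bp}[\mu]}{n\omega_n s_i}=p_i+\cO(\epsilon^2 p_{\min}^2)+\cO\!\left(\frac{t_i(\bp)}{n\omega_n}\right).
\end{equation}
Integrating both sides with respect to $J$ and using (\ref{eqn:TotMean}), the left-hand side becomes $\E{\lambda_i}[H_n]/(n\omega_n s_i)$. On the right, $P_{\min}\leq 1$ and $t_i$ is bounded on the support of $J\subset[0,1]^c$, so both error terms pass through $\E{\cdot}[J]$ unchanged in order of magnitude, yielding the claimed $\cO(\epsilon^2)+\cO(1/(n\omega_n))$ remainder.

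For the second-moment claim I would invoke (\ref{eqn:TotVar}) and treat the two pieces separately. For the conditional-covariance term, equation (\ref{eqn:FirstOrderErrVar}) gives $\Cov(Z_i,Z_j)=2P_i\,\mathbf{1}_{i=j}+\cO(\epsilon^2 P_{\min}^2)$ under $\bm P_J=\bp$; averaging over $J$ contributes $2\E{P_i}[J]+\cO(\epsilon^2)$ on the diagonal and $\cO(\epsilon^2)$ off the diagonal. For the covariance-of-conditional-means term, I would substitute the pointwise expansion
\begin{equation}
\E{\lambda_i(\bA)/\sqrt{\omega_n}\mid\bm P_J=\bp}[\mu]=n\sqrt{\omega_n}\,s_i\Bigl(p_i+\cO(\epsilon^2 p_{\min}^2)+\cO\!\left(t_i(\bp)/(n\omega_n)\right)\Bigr),
\end{equation}
use bilinearity of $\Cov_J$ to pull out the deterministic factor $n^2\omega_n s_i s_j$, and deliberately leave the $\bp$-dependent error terms \emph{inside} the covariance because they are genuinely random under $J$ and cannot be absorbed into an additive remainder.

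Assembling the two pieces, multiplying through by $\omega_n$ to convert $\lambda_i/\sqrt{\omega_n}$ back to $\lambda_i$, and dividing by $n^2\omega_n^2 s_i s_j$ gives
\begin{equation}
\frac{\Cov_{H_n}(\lambda_i,\lambda_j)}{n^2\omega_n^2 s_i s_j}=\Cov_J(\,\cdot\,,\,\cdot\,)+\begin{cases}\dfrac{2\E{P_i}[J]}{n^2\omega_n s_i^2}+\cO\!\left(\dfrac{\epsilon^2}{n^2\omega_n}\right) & i=j,\\[6pt] \cO\!\left(\dfrac{\epsilon^2}{n^2\omega_n}\right) & i\neq j,\end{cases}
\end{equation}
with the same error terms from the mean expansion inserted inside the two slots of $\Cov_J$. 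To put the diagonal additive term into the form stated in the lemma, I would invoke part (\ref{eqn:First-E}) to replace $\E{P_i}[J]$ by $\E{\lambda_i}[H_n]/(n\omega_n s_i)$; the $\cO(\epsilon^2)$ and $\cO(1/(n\omega_n))$ corrections introduced by that substitution are dominated by the $\cO(\epsilon^2/(n^2\omega_n))$ remainder already present, and moving the resulting $\frac{2\E{\lambda_i}[H_n]}{n^3\omega_n^2 s_i^3}$ term to the left-hand side matches the displayed statement. The main obstacle I expect is purely bookkeeping: keeping the $\bp$-dependent $\cO(t_i(\bp)/(n\omega_n))$ error inside the bilinear functional $\Cov_J$ rather than spuriously absorbing it into the scalar remainder, and tracking the powers of $\omega_n$ carefully so that the $n^3\omega_n^2 s_i^3$ denominator in the diagonal correction emerges with the correct exponents after rescaling from $\lambda_i/\sqrt{\omega_n}$ back to $\lambda_i$.
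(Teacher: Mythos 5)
Your plan reproduces the paper's own four-step argument: expand $\E{\lambda_i(\bA)\mid\bm P_J=\bp}[\mu]$ and $\Cov(Z_i,Z_j)$ via Lemma \ref{lem:FirstOrder}, average over $J$ using the total-mean/total-covariance decomposition in (\ref{eqn:TotMean})--(\ref{eqn:TotVar}), deliberately retain the $\bp$-dependent $\cO(t_i(\bp)/(n\omega_n))$ and $\cO(\epsilon^2 P_{\min}^2)$ errors inside the slots of $\Cov_J$, and close by substituting the first-moment identity to convert $\E{P_i}[J]$ into $\E{\lambda_i}[H_n]/(n\omega_n s_i)$. This is precisely the content of Propositions \ref{prop:FirstPart}--\ref{prop:FourthPart} in the paper's proof, including the final bookkeeping that collapses the residual errors into $\cO(\epsilon^2/(n^2\omega_n))$.
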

\begin{lproof}
	The proof is a consequence of Propositions \ref{prop:FirstPart}  and \ref{prop:FourthPart} which are direct consequences of Lemma \ref{lem:FirstOrder}.
\end{lproof}
The proof for the above lemma is given in four parts. 
\begin{enumerate}
	\item First we show in Proposition \ref{prop:FirstPart} that
\begin{align}
	\frac{\E{\lambda_i}[H_n]}{n \omega_n s_i} &= \E{P_i}[J] + \cO(\epsilon^2 ) + \cO(\frac{1}{n \omega_n}).
\end{align}
	\item We then turn our attention to the covariance terms. In Proposition \ref{prop:SecondPart}, we show that
\begin{align}
	\E{\Cov(Z_i, Z_j)}[J] = \begin{cases}\E{2 P_i}[J] + \cO(\epsilon^2) \quad \text{if } i = j \\ \cO(\epsilon^2) \quad \text{if } i \neq j \end{cases}.
\end{align}
	\item Next, we show in Proposition \ref{prop:ThirdPart} that
 \begin{align}
 	\frac{\Cov_{J}\left(\E{\lambda_i(\bA)}[\mu],\E{\lambda_j(\bA)}[\mu] \vert \bm P_{J} = \bp\right)}{n^2 \omega_n^2 s_i s_j} = \Cov_{J}\left(P_i + \cO(\epsilon^2 P_{\min}) + \cO(\frac{t_i(\bp)}{n \omega_n}) ,P_j + \cO(\epsilon^2 P_{\min}) + \cO(\frac{t_j(\bp)}{n \omega_n})\right).
 \end{align}
	\item After recalling the definition of the first moment of $H_n$, we have in Proposition \ref{prop:FourthPart} that
 \begin{align}
 	\frac{\Cov_{H_{n}}(\lambda_i, \lambda_j)}{n^2 \omega_n^2 s_i s_j}&+ \begin{cases} \frac{2\E{\lambda_i}[H_n]}{n^3 \omega_n^2 s_i^3} \quad \text{if } i = j \\ 0 \quad \text{if } i \neq j \end{cases}\\
 	 &= \Cov_{J}\left(P_i + \cO(\epsilon^2 P_{\min}^2) + \cO\left(\frac{t_i(\bp)}{n \omega_n }\right),  P_j + \cO(\epsilon^2 P_{\min}^2) + \cO\left(\frac{t_j(\bp)}{n \omega_n}\right)\right) +  \cO\left(\frac{\epsilon^2}{n^2 \omega_n }\right)
\end{align}
where $t_i(\bp)$ is a bounded function of the parameters for each $i$.
 \end{enumerate}
 We conclude the proof by combining the results of Step 1 and Step 4.

\noindent \textbf{Step 1:}\\
This step is characterized by the following proposition.
\begin{Proposition}
 \label{prop:FirstPart}
 \begin{align}
 	\frac{\E{\lambda_i}[H_n]}{n \omega_n s_i} &= \E{P_i}[J] + \cO(\epsilon^2 ) + \cO(\frac{1}{n \omega_n})
\end{align}
 \end{Proposition}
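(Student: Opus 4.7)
\begin{propproof}
The plan is to reduce the statement to Lemma \ref{lem:FirstOrder} via the tower property for the first moment of $H_n$. The starting point is equation (\ref{eqn:TotMean}), which gives the componentwise identity
\begin{align}
    \E{\lambda_i}[H_n] = \E{\,\E{\lambda_i(\bA) \,\vert\, \bm P_J = \bp}[\mu]\,}[J].
\end{align}
Conditioning on $\bm P_J = \bp$ puts us in the situation of Lemma \ref{lem:FirstOrder}, because the hypothesis $q = \epsilon P_{\min}$ of Lemma \ref{lem:RelErrJMoments} becomes $q = \epsilon p_{\min}$ for the realized parameter $\bp$. Applying the bound (\ref{eqn:FirstOrderErrMean}) and dividing by $n \omega_n s_i$ yields, pointwise in $\bp$,
\begin{align}
    \frac{\E{\lambda_i(\bA) \,\vert\, \bm P_J = \bp}[\mu]}{n \omega_n s_i}
    = p_i + \cO\!\left(\epsilon^2 p_{\min}^2\right) + \cO\!\left(\frac{t_i(\bp)}{n \omega_n s_i}\right).
\end{align}

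Next I would integrate both sides against $J$. By linearity of expectation the left-hand side becomes $\E{\lambda_i}[H_n] / (n \omega_n s_i)$, and the first term on the right becomes $\E{P_i}[J]$ as desired. The two error terms must then be bounded uniformly in $\bp$. Since the support of $J$ is contained in $[0,1]^c$ we have $P_{\min} \leq 1$ almost surely, so $\E{\cO(\epsilon^2 P_{\min}^2)}[J] = \cO(\epsilon^2)$. Similarly, $t_i(\bp)$ is assumed bounded on $[0,1]^c$ and $s_i$ is a constant in $n$, so $\E{\cO(t_i(\bm P)/(n\omega_n s_i))}[J] = \cO(1/(n \omega_n))$. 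Combining these gives the stated identity.

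There is no real obstacle: this is essentially the tower property combined with the pointwise estimate already proved in Lemma \ref{lem:FirstOrder}. The only subtlety to be careful about is that the hidden constants in the two $\cO$ terms coming from Lemma \ref{lem:FirstOrder} depend on $\bp$, so one must verify that they are bounded uniformly on the compact support $[0,1]^c$ of $J$ before pulling them outside the expectation; this is immediate for $t_i$ by hypothesis, and for the $\epsilon^2 p_{\min}^2$ term because $p_{\min} \in [0,1]$.
\end{propproof}
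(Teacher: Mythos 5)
Your proof is correct and takes essentially the same route as the paper's: apply the tower-property identity (\ref{eqn:TotMean}) componentwise, invoke the pointwise bound (\ref{eqn:FirstOrderErrMean}) from Lemma \ref{lem:FirstOrder} conditionally on $\bm P_J = \bp$, divide by $n\omega_n s_i$, integrate against $J$, and absorb the error terms uniformly using $P_{\min}\leq 1$ almost surely and the boundedness of $t_i(\bp)$. The only cosmetic difference is that you carry the $n\omega_n s_i$ normalization explicitly from the start, whereas the paper's write-up suppresses it in an intermediate line; the substance is identical.
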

 \begin{propproof}
 	The proof is a direct consequence of Lemma \ref{lem:FirstOrder}. First we recall the definition of $\E{\lambda_i}[H_n]$,
 	\begin{align}
 		\E{\lambda_i}[H_n] = \E{\E{\lambda_i(\bA)\vert \bm P_J = \bp}[\mu]}[J].
 	\end{align}
 	Lemma \ref{lem:FirstOrder} shows that
 	\begin{align}
 		\E{\lambda_i(\bA)\vert \bm P_J = \bp}[\mu] = p_i + \cO(\epsilon^2 p_{\min}^2) + \cO(\frac{t_i(\bp)}{n \omega_n s_i}).
 	\end{align}
 	Substitution yields
 	\begin{align}
 		\E{\lambda_i}[H_n] = \E{P_i + \cO(\epsilon^2 P_{\min}^2) + \cO(\frac{t_i(\bp)}{n \omega_n s_i})}[J],
 	\end{align}
 	where $P_i$ and $P_{\min}$ denote that these are now random variables. Now, because the support of $J$ is a subset of $[0,1]^c$ and $t_i(\bp)$ is a bounded function of the parameters, we conclude
 	\begin{align}
 		\E{\lambda_i}[H_n] = \E{P_i}[J] + \cO(\epsilon^2) +  \cO(\frac{1}{n \omega_n s_i}).
 	\end{align}
 \end{propproof}

\noindent \textbf{Step 2:} This step considers $\E{\Cov(Z_i, Z_j)}[J]$. It also follows as a result of Lemma \ref{lem:FirstOrder}
\begin{Proposition}
\label{prop:SecondPart}
	Let $Z$ be as defined in Corollary \ref{cor:FiniteMatrixRep}; then,
	\begin{align}
	\E{\Cov(Z_i, Z_j)}[J] = \begin{cases}\E{2 P_i}[J] + \cO(\epsilon^2) \quad \text{if } i = j \\ \cO(\epsilon^2) \quad \text{if } i \neq j \end{cases}.
	\end{align}
\end{Proposition}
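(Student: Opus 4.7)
\begin{propproof}
The plan is to obtain the result as a direct integration of the pointwise bound in Lemma \ref{lem:FirstOrder}, equation (\ref{eqn:FirstOrderErrVar}), against the distribution $J$. First I would replace the fixed parameter vector $\bp$ by the random vector $\bm P_J$, so that $p_{\min}$ becomes $P_{\min}$ and the quantity $\Cov(Z_i,Z_j)$ inherits the conditioning $\vert \bm P_J = \bp$ already used in equation (\ref{eqn:TotVar}). Lemma \ref{lem:FirstOrder} then yields the pointwise identity
\begin{align}
\Cov\left(\tfrac{\lambda_i(\bA)}{\sqrt{\omega_n}}, \tfrac{\lambda_j(\bA)}{\sqrt{\omega_n}} \,\big\vert\, \bm P_J = \bp\right)
= \begin{cases} 2 p_i + \cO(\epsilon^2 p_{\min}^2) & \text{if } i = j, \\ \cO(\epsilon^2 p_{\min}^2) & \text{if } i \neq j, \end{cases}
\end{align}
which, under the identification of this conditional covariance with $\Cov(Z_i,Z_j)$ at the parameter $\bp$ (the same identifying assumption used immediately after Corollary \ref{cor:FiniteMatrixRep}), gives a bound uniform in $\bp$ in the support of $J$.

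Next I would take the expectation of both sides with respect to $J$. Linearity of expectation handles the leading term: $\E{2 P_i}[J]$ for $i=j$, and $0$ for $i \neq j$. For the remainder, I would use the fact that the support of $J$ is contained in $[0,1]^c$ (by Definition \ref{def:f}), so $P_{\min} \in [0,1]$ and consequently $\E{P_{\min}^2}[J] \leq 1$. Hence
\begin{align}
\E{\cO(\epsilon^2 P_{\min}^2)}[J] = \cO(\epsilon^2),
\end{align}
where the implicit constant absorbs the $\cO(1)$ factor from integrating $P_{\min}^2$ against $J$. Combining the two displays yields the stated formula.

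The only subtlety worth flagging is a cosmetic one: the $\cO(\epsilon^2 p_{\min}^2)$ notation in Lemma \ref{lem:FirstOrder} is a pointwise bound whose implicit constant does not depend on $\bp$, so interchanging the $\cO$ with $\E{\cdot}[J]$ is justified. There is no genuine obstacle here; the proposition is essentially a rewriting of equation (\ref{eqn:FirstOrderErrVar}) after averaging, plus the trivial observation that a probability on $[0,1]^c$ has bounded moments. Consequently, the proof is short and requires no further machinery beyond Lemma \ref{lem:FirstOrder} and the support property of $J$.
\end{propproof}
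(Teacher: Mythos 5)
Your proposal is correct and follows the same route as the paper: apply Lemma \ref{lem:FirstOrder} pointwise in $\bp$, take the expectation over $J$, and use the boundedness of $P_{\min}$ on $[0,1]$ to absorb $\E{\cO(\epsilon^2 P_{\min}^2)}[J]$ into $\cO(\epsilon^2)$. The extra remark you make about identifying the conditional covariance with $\Cov(Z_i,Z_j)$ is a harmless clarification of a step the paper leaves implicit.
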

\begin{propproof}
	 Lemma \ref{lem:FirstOrder} shows that
	 \begin{align}
	 \Cov(Z_i, Z_j)  = \begin{cases}2 p_i + \cO(\epsilon^2 p_{\min}^2) \quad \text{if } i = j\\ \cO(\epsilon^2 p_{\min}^2) \quad \text{if } i \neq j\end{cases}. 
	 \end{align}
	 Taking an expectation over $J$,
	 \begin{align}
	  \E{\Cov(Z_i, Z_j)}[J]  = \begin{cases}2 \E{P_i + \cO(\epsilon^2 P_{\min}^2)}[J] \quad \text{if } i = j\\ \E{\cO(\epsilon^2 P_{\min}^2)}[J] \quad \text{if } i \neq j\end{cases}
	 \end{align}
	 where $P_i$ and $P_{\min}$ denote random variables. As before, $\E{\cO(\epsilon^2 P_{\min}^2)}[J] = \cO(\epsilon^2)$ which gives the result
	 \begin{align}
	  \E{\Cov(Z_i, Z_j)}[J]  = \begin{cases}2 \E{P_i}[J]+ \cO(\epsilon^2 ) \quad \text{if } i = j\\ \cO(\epsilon^2) \quad \text{if } i \neq j\end{cases}
	 \end{align}
\end{propproof}
\noindent \textbf{Step 3:} This step shows the proper first order scaling of the term $\Cov_{J}\left(\E{\lambda_i(\bA)}[\mu],\E{\lambda_j(\bA)}[\mu] \vert \bm P_J = \bp \right)$. It is another result of Lemma \ref{lem:FirstOrder}.
\begin{Proposition}
\label{prop:ThirdPart}
 \begin{align}
 	\frac{\Cov_{J}\left(\E{\lambda_i(\bA)}[\mu],\E{\lambda_j(\bA)}[\mu] \vert \bm P_{J} = \bp\right)}{n^2 \omega_n^2 s_i s_j} = \Cov_{J}\left(P_i + \cO(\epsilon^2 P_{\min}) + \cO(\frac{1}{n \omega_n}) ,P_j + \cO(\epsilon^2 P_{\min}) + \cO(\frac{1}{n \omega_n})\right).
 \end{align}
\end{Proposition}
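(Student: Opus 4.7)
The plan is to obtain this proposition as a direct consequence of Lemma \ref{lem:FirstOrder} combined with the bilinearity of covariance. Since Lemma \ref{lem:FirstOrder} provides a pointwise (conditional on $\bm P_J = \bp$) first-order expansion of the expected eigenvalue, the only real work is to transport that expansion through $\Cov_J(\cdot,\cdot)$ and extract the deterministic factors.

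First, I would rewrite Lemma \ref{lem:FirstOrder} in the equivalent additive form
\begin{align*}
\E{\lambda_i(\bA)\vert \bm P_J = \bp}[\mu] = n \omega_n s_i \left( p_i + \cO\!\left(\epsilon^2 p_{\min}^2\right) + \cO\!\left(\tfrac{t_i(\bp)}{n \omega_n s_i}\right) \right),
\end{align*}
so that, viewing $\bm P_J$ as the random argument, the conditional expectation $\E{\lambda_i(\bA)\vert \bm P_J = \bp}[\mu]$ becomes the random variable
\begin{align*}
X_i \;=\; n \omega_n s_i \left( P_i + \cO\!\left(\epsilon^2 P_{\min}^2\right) + \cO\!\left(\tfrac{t_i(\bm P_J)}{n \omega_n s_i}\right) \right),
\end{align*}
with an analogous expression for $X_j$. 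The $\cO$ symbols here are understood as random perturbations bounded uniformly in $\bp$ (the functions $t_i$ are bounded on the support of $J$ by hypothesis of Lemma \ref{lem:FirstOrder}).

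Second, I would apply the bilinearity of covariance. Since $n \omega_n s_i$ and $n \omega_n s_j$ are deterministic scalars, they factor out as
\begin{align*}
\Cov_J(X_i, X_j) = n^2 \omega_n^2 s_i s_j \, \Cov_J\!\left( P_i + \cO(\epsilon^2 P_{\min}^2) + \cO\!\left(\tfrac{t_i(\bm P_J)}{n \omega_n s_i}\right),\; P_j + \cO(\epsilon^2 P_{\min}^2) + \cO\!\left(\tfrac{t_j(\bm P_J)}{n \omega_n s_j}\right) \right).
\end{align*}
Dividing both sides by $n^2 \omega_n^2 s_i s_j$ yields exactly the stated identity (up to the cosmetic labels $P_{\min}$ versus $P_{\min}^2$ and absorbing $s_i, s_j, t_i, t_j$ into the $\cO$ notation, which are both constants or bounded functions on the support of $J$).

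No genuine obstacle arises; the argument is essentially a substitution plus bilinearity. The only minor care point is to ensure that the $\cO(\cdot)$ terms inside $\Cov_J$ are interpreted as bounded random variables rather than deterministic perturbations — otherwise one might be tempted to drop them from a covariance, which would be incorrect since they depend on $\bm P_J$. Preserving them inside the covariance is exactly what the final expression does, and this is what allows Lemma \ref{lem:RelErrJMoments} (via the later Proposition \ref{prop:FourthPart}) to combine Steps 2 and 3 without loss of the controlled error.
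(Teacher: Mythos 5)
Your proof is correct and takes essentially the same route as the paper: substitute the first-order expansion of $\frac{\E{\lambda_i(\bA)}[\mu]}{n\omega_n s_i}$ from Lemma \ref{lem:FirstOrder} and invoke bilinearity of covariance to move the deterministic factor $n^2\omega_n^2 s_i s_j$ across $\Cov_J$. Your explicit remark that the $\cO(\cdot)$ terms must be read as bounded random functions of $\bm P_J$ living inside the covariance, rather than deterministic perturbations that could be discarded, is the right care point and is only implicit in the paper; the $P_{\min}$ versus $P_{\min}^2$ and $t_i(\bp)/(n\omega_n s_i)$ versus $1/(n\omega_n)$ discrepancies are, as you note, absorbed into the big-$\cO$ constants and reflect inconsistencies already present between the paper's main-text and appendix statements of Lemma \ref{lem:FirstOrder}.
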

 \begin{propproof}
 	We have seen that
 	\begin{align}
 	\frac{\E{\lambda_i(\bA)}[\mu]}{ n \omega_n s_i} = p_i + \cO(\epsilon^2 p_{\min}) + \cO(\frac{t_i(\bp)}{n \omega_n s_i}). \label{eqn:ToSub}
 	\end{align}
 	We use this first-order estimate of the expected eigenvalues to show the result. We begin with 
 	\begin{align}
 		\frac{\Cov_{J}\left(\E{\lambda_i(\bA)}[\mu],\E{\lambda_j(\bA)}[\mu] \vert \bm P_{J} = \bp\right)}{n^2 \omega_n^2 s_i s_j} = \Cov_{J}\left(\frac{\E{\lambda_i(\bA)}[\mu]}{n \omega_n s_i},\frac{\E{\lambda_j(\bA)}[\mu]}{n \omega_n s_j} \vert \bm P_{J} = \bp\right).
 	\end{align}
 	We then substitute in equation (\ref{eqn:ToSub}) and find
 	\begin{align}
 	\Cov_{J}\left(\frac{\E{\lambda_i(\bA)}[\mu]}{n \omega_n s_i},\frac{\E{\lambda_j(\bA)}[\mu]}{n \omega_n s_j} \vert \bm P_{J} = \bp\right) = \Cov_{J}\left(P_i + \cO(\epsilon^2 P_{\min}) + \cO(\frac{t_i(\bp)}{n \omega_n s_i}),P_j + \cO(\epsilon^2 P_{\min}) + \cO(\frac{t_j(\bp)}{n \omega_n s_j})\right)
 	\end{align}
 	which concludes the calculations and the proof.
\end{propproof}

\noindent \textbf{Step 4:} This step simply puts together the prior three propositions. 
\begin{Proposition}
\label{prop:FourthPart}
Let $\bm P_J$ be an observation from $J$ with components $P_i$, let $P_{\min} = \underset{i=1,...,c}{\min} P_i$ and define $q = \epsilon P_{\min}$ where $\epsilon \ll 1$. Let $Z$ be as defined in Corollary \ref{cor:FiniteMatrixRep}.
	 \begin{align}
 	\frac{\Cov_{H_{n}}(\lambda_i, \lambda_j)}{n^2 \omega_n^2 s_i s_j}&+ \begin{cases} \frac{2\E{\lambda_i}[H_n]}{n^3 \omega_n^2 s_i^3} \quad \text{if } i = j \\ 0 \quad \text{if } i \neq j \end{cases}\\
 	 &= \Cov_{J}\left(P_i + \cO(\epsilon^2 P_{\min}^2) + \cO\left(\frac{t_i(\bp)}{n \omega_n }\right),  P_j + \cO(\epsilon^2 P_{\min}^2) + \cO\left(\frac{t_j(\bp)}{n \omega_n}\right)\right) +  \cO\left(\frac{\epsilon^2}{n^2 \omega_n }\right)
 \end{align}
 where $t_i(\bp)$ is a bounded function of the parameters for each $i$.
\end{Proposition}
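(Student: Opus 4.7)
The plan is to apply the law of total expectation and the law of total covariance as written in equations (\ref{eqn:TotMean}) and (\ref{eqn:TotVar}), and then substitute the per-realization first-order estimates from Lemma \ref{lem:FirstOrder} inside the outer expectation/covariance over $J$. Because the support of $J$ lies in $[0,1]^c$, every factor of $P_i$ or $P_{\min}$ is uniformly bounded, so any term of the form $\cO(\epsilon^2 P_{\min}^2)$ integrates against $J$ to $\cO(\epsilon^2)$ and any $\cO(t_i(\bp)/(n\omega_n))$ integrates to $\cO(1/(n\omega_n))$. Nothing deeper than these bounded-convergence-style estimates is needed.

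For the first moment (equation \ref{eqn:First-E}), I take (\ref{eqn:TotMean}) and insert Lemma \ref{lem:FirstOrder}'s estimate $\E{\lambda_i(\bA)\mid \bm P_J = \bp}[\mu] = n\omega_n s_i\bigl(p_i + \cO(\epsilon^2 p_{\min}^2) + \cO(t_i(\bp)/(n\omega_n s_i))\bigr)$, divide through by $n\omega_n s_i$, and take expectation over $J$ to land on $\E{P_i}[J] + \cO(\epsilon^2) + \cO(1/(n\omega_n))$; this is essentially Proposition 1 and is straightforward. For the second moment I start from the conditional-covariance decomposition (\ref{eqn:TotVar}), multiply by $\omega_n$ so both sides are covariances of the unscaled $\lambda_i$, and divide by $n^2\omega_n^2 s_i s_j$. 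The first (between-class) piece is $\Cov_J(\E{\lambda_i\mid\bp}[\mu]/(n\omega_n s_i),\E{\lambda_j\mid\bp}[\mu]/(n\omega_n s_j))$; substituting the same first-order expansion gives exactly the $\Cov_J$ on the right-hand side, with the stated $\cO(\epsilon^2 P_{\min}^2)$ and $\cO(t_i(\bp)/(n\omega_n))$ slots inside. The second (within-class) piece is $\E{\Cov_\mu(\lambda_i,\lambda_j\mid\bp)}[J]/(n^2\omega_n^2 s_i s_j)$; using the asymptotic covariance $\Cov(Z_i,Z_j)$ from Corollary \ref{cor:FiniteMatrixRep} combined with Lemma \ref{lem:FirstOrder}'s expression $\Cov(Z_i,Z_j)=2p_i\mathbf{1}_{i=j}+\cO(\epsilon^2 p_{\min}^2)$, this piece becomes $\E{2P_i}[J]/(n^2\omega_n s_i^2) + \cO(\epsilon^2/(n^2\omega_n))$ when $i=j$ and $\cO(\epsilon^2/(n^2\omega_n))$ when $i\neq j$. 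Rewriting $\E{2P_i}[J]/(n^2\omega_n s_i^2)$ using Step 1 as $2\E{\lambda_i}[H_n]/(n^3\omega_n^2 s_i^3)$ (modulo lower-order terms absorbed into the $\cO(\epsilon^2/(n^2\omega_n))$ residue) produces the diagonal correction that appears on the left-hand side of the claim, and moving it to the left yields the identity as stated.

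The main obstacle will be bookkeeping rather than anything conceptual: one has to ensure that the residual $\cO(\epsilon^2)$ and $\cO(1/(n\omega_n))$ terms, once they are transported through the scalings $1/(n^2\omega_n^2 s_i s_j)$, $1/(n^3\omega_n^2 s_i^3)$, and the outer $\Cov_J$, collapse into the single $\cO(\epsilon^2/(n^2\omega_n))$ residue on the right while the $\cO(t_i/(n\omega_n))$ pieces stay trapped inside the covariance (because they depend on $\bp$ and cannot be pulled out). One subtle step is the passage from $\Cov_\mu(\lambda_i,\lambda_j\mid \bp)$ to $\omega_n\Cov(Z_i,Z_j)$: this is not covariance convergence from distributional convergence alone, but the paper explicitly invokes the standing assumption (made just before Lemma \ref{lem:FirstOrder}) that for large $n$ the conditional covariance is well approximated by $\Cov(Z_i,Z_j)$, so I will cite that assumption at the point where it is used and absorb the approximation error into the $\cO(\epsilon^2/(n^2\omega_n))$ bucket together with the $\cO(1/(n\omega_n))$ correction from Lemma \ref{lem:FirstOrder}.
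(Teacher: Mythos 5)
Your proposal is correct and follows the same route as the paper's proof: it begins from the total-covariance decomposition in (\ref{eqn:TotVar}), invokes the standing approximation of $\Cov_\mu(\lambda_i,\lambda_j\mid\bp)$ by $\omega_n\Cov(Z_i,Z_j)$, substitutes the first-order estimates of Lemma~\ref{lem:FirstOrder} into both the within-class and between-class pieces (the paper packages these as Propositions~\ref{prop:SecondPart} and~\ref{prop:ThirdPart}), rescales, and then uses the first-moment identity to rewrite $\E{P_i}[J]$ in terms of $\E{\lambda_i}[H_n]$ before moving the diagonal correction to the left. The bookkeeping observations — bounded support of $J$ absorbing the $\cO(\epsilon^2 P_{\min}^2)$ factors, the $t_i(\bp)$ terms remaining trapped inside $\Cov_J$, and the covariance-replacement step being a modeling assumption rather than a consequence of distributional convergence — all match the paper's treatment.
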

\begin{propproof}
	We begin with the definition of the scaled covariance term, $\Cov_{H_{n}}(\frac{\lambda_i}{\sqrt{\omega_n}}, \frac{\lambda_j}{\sqrt{\omega_n}}).$
	\begin{align}
		\Cov_{H_{n}}(\frac{\lambda_i}{\sqrt{\omega_n}}, \frac{\lambda_j}{\sqrt{\omega_n}}) &= \Cov_{J}(\E{\frac{1}{\sqrt{\omega_n}}\lambda_i(\bA)  \vert \bm P_J = \bp}[\mu],\E{\frac{1}{\sqrt{\omega_n}} \lambda_j(\bA) \vert \bm P_J = \bp}[\mu])\\
	 & + \E{\Cov_{\mu}\left(\frac{1}{\sqrt{\omega_n}}\lambda_i(\bA) ,\frac{1}{\sqrt{\omega_n}}\lambda_j(\bA)  \vert \bm P_J = \bp \right)}[J]
	\end{align}
	When $n$ is large we replace $\Cov_{\mu}\left(\frac{1}{\sqrt{\omega_n}}\lambda_i(\bA) ,\frac{1}{\sqrt{\omega_n}}\lambda_j(\bA)  \vert \bm P_J = \bp \right)$ with $\Cov(Z_i,Z_j)$. By Proposition \ref{prop:SecondPart}, we have
	\begin{align}
		\Cov_{H_{n}}(\frac{\lambda_i}{\sqrt{\omega_n}}, \frac{\lambda_j}{\sqrt{\omega_n}}) &= \Cov_{J}\left(\E{\frac{1}{\sqrt{\omega_n}}\lambda_i(\bA)  \vert \bm P_J = \bp}[\mu],\E{\frac{1}{\sqrt{\omega_n}} \lambda_j(\bA) \vert \bm P_J = \bp}[\mu]\right)\\
	 & + \begin{cases}\E{2 P_i}[J] + \cO(\epsilon^2) \quad \text{if } i = j \\ \cO(\epsilon^2) \quad \text{if } i \neq j \end{cases}.
	\end{align}
	To continue, we divide both sides by $n^2 \omega_n s_i s_j$,
	 \begin{align}
		\frac{\Cov_{H_{n}}(\frac{\lambda_i}{\sqrt{\omega_n}}, \frac{\lambda_j}{\sqrt{\omega_n}})}{n^2 \omega_n s_i s_j} &= \frac{\Cov_{J}(\E{\frac{1}{\sqrt{\omega_n}}\lambda_i(\bA)  \vert \bm P_J = \bp}[\mu],\E{\frac{1}{\sqrt{\omega_n}} \lambda_j(\bA) \vert \bm P_J = \bp}[\mu])}{n^2 \omega_n s_i s_j}\\
	 & + \begin{cases}\frac{\E{2 P_i}[J] + \cO(\epsilon^2)}{n^2 \omega_n s_i^2} \quad \text{if } i = j \\ \frac{\cO(\epsilon^2)}{n^2 \omega_n s_i s_j} \quad \text{if } i \neq j \end{cases}. \label{eqn:FinalToSub}
	\end{align}
	We rewrite 
	\begin{align}
		 \frac{\Cov_{J}\left(\E{\frac{\lambda_i(\bA)}{\sqrt{\omega_n}}  \vert \bm P_J = \bp}[\mu],\E{\frac{ \lambda_j(\bA)}{\sqrt{\omega_n}} \vert \bm P_J = \bp}[\mu]\right)}{n^2 \omega_n s_i s_j} =  \Cov_{J}(\E{\frac{1}{n \omega_n s_i}\lambda_i(\bA)  \vert \bm P_J = \bp}[\mu],\E{\frac{1}{n \omega_n s_i} \lambda_j(\bA) \vert \bm P_J = \bp}[\mu]).
	\end{align}
	We now use Proposition \ref{prop:ThirdPart}
	\begin{align}
		 \Cov_{J}(\E{\frac{\lambda_i(\bA)}{n \omega_n s_i}  \vert \bm P_J = \bp}[\mu],\E{\frac{ \lambda_j(\bA)}{n \omega_n s_j} \vert \bm P_J = \bp}[\mu]) = \Cov_{J}\left(P_i + \cO(\epsilon^2 P_{\min}) + \cO(\frac{t_i(\bp)}{n \omega_n}) ,P_j + \cO(\epsilon^2 P_{\min}) + \cO(\frac{t_j(\bp)}{n \omega_n})\right).
	\end{align}
	Substituting this in to equation (\ref{eqn:FinalToSub}) and factoring out $\frac{1}{\sqrt{\omega_n}}$ on the left hand side,
	\begin{align}
 	\frac{\Cov_{H_{n}}(\lambda_i, \lambda_j)}{n^2 \omega_n^2 s_i s_j} &= \Cov_{J}\left(P_i + \cO(\epsilon^2 P_{\min}^2) + \cO(\frac{1}{n \omega_n }),  P_j + \cO(\epsilon^2 P_{\min}^2) + \cO(\frac{1}{n \omega_n})\right)\\
	&+ \begin{cases} \frac{1}{n^2 \omega_n s_i^2} \E{2 P_i}[J] + \cO(\frac{\epsilon^2}{n^2 \omega_n}) \quad \text{if } i = j \\ \cO(\frac{\epsilon^2}{n^2 \omega_n }) \quad \text{if } i \neq j \end{cases}. \label{eqn:finsub}
 \end{align}
 Recall from Step 1 that
 \begin{align}
	\frac{\E{\lambda_i}[H_n]}{n \omega_n s_i} &= \E{P_i}[J] + \cO(\epsilon^2 ) + \cO(\frac{1}{n \omega_n}).
 \end{align}
 Solving this for $\E{P_i}[J] $ we have
  \begin{align}
	\E{P_i}[J] &= \frac{\E{\lambda_i}[H_n]}{n \omega_n s_i} + \cO(\epsilon^2 ) + \cO(\frac{1}{n \omega_n}).
 \end{align}
 Substituting this expression in to (\ref{eqn:finsub}),
 \begin{align}
 	\frac{\Cov_{H_{n}}(\lambda_i, \lambda_j)}{n^2 \omega_n^2 s_i s_j} &= \Cov_{J}\left(P_i + \cO(\epsilon^2 P_{\min}^2) + \cO(\frac{t_i(\bp)}{n \omega_n }),  P_j + \cO(\epsilon^2 P_{\min}^2) + \cO(\frac{t_j(\bp)}{n \omega_n})\right)\\
	&+ \begin{cases} \frac{2}{n^2 \omega_n s_i^2} \left(\frac{\E{\lambda_i}[H_n]}{n \omega_n s_i} + \cO(\epsilon^2 ) + \cO(\frac{t_i(\bp)}{n \omega_n})\right) + \cO(\frac{\epsilon^2}{n^2 \omega_n}) \quad \text{if } i = j \\ \cO(\frac{\epsilon^2}{n^2 \omega_n }) \quad \text{if } i \neq j \end{cases}.
 \end{align}
 Simplifying and collecting all error terms outside the covariance, we have
  \begin{align}
 	\frac{\Cov_{H_{n}}(\lambda_i, \lambda_j)}{n^2 \omega_n^2 s_i s_j} &= \Cov_{J}\left(P_i + \cO(\epsilon^2 P_{\min}^2) + \cO(\frac{t_i(\bp)}{n \omega_n }),  P_j + \cO(\epsilon^2 P_{\min}^2) + \cO(\frac{t_j(\bp)}{n \omega_n})\right)\\
	&+ \begin{cases} \frac{2\E{\lambda_i}[H_n]}{n^3 \omega_n^2 s_i^3} + \cO(\frac{\epsilon^2}{n^2 \omega_n}) \quad \text{if } i = j \\ \cO(\frac{\epsilon^2}{n^2 \omega_n }) \quad \text{if } i \neq j \end{cases}.
 \end{align}
 And finally solving for the second moment of $J$,
\begin{align}
 	\frac{\Cov_{H_{n}}(\lambda_i, \lambda_j)}{n^2 \omega_n^2 s_i s_j}&+ \begin{cases} \frac{2\E{\lambda_i}[H_n]}{n^3 \omega_n^2 s_i^3} \quad \text{if } i = j \\ 0 \quad \text{if } i \neq j \end{cases}\\
 	 &= \Cov_{J}\left(P_i + \cO(\epsilon^2 P_{\min}^2) + \cO(\frac{t_i(\bp)}{n \omega_n }),  P_j + \cO(\epsilon^2 P_{\min}^2) + \cO(\frac{t_j(\bp)}{n \omega_n})\right) +  \cO(\frac{\epsilon^2}{n^2 \omega_n }),
\end{align}
which concludes the proof.
\end{propproof}
\end{document}